\documentclass[11pt]{article}
\usepackage{amssymb,amsmath,amsfonts,amsthm,mathtools,color}

\usepackage{mathrsfs}
\usepackage{dsfont}
 \usepackage{bm} 

\usepackage{url} 

\usepackage[overload]{empheq}

\usepackage{comment} 
\usepackage[title,titletoc,header]{appendix}
\usepackage{graphicx}
\usepackage{subcaption}
\usepackage{float} 
\usepackage{dsfont} 

\usepackage{paralist}

\usepackage{cases}

\usepackage{tikz}
\usetikzlibrary{arrows,positioning,shapes.geometric}

\usepackage{algorithm}
\usepackage{algpseudocode}


\usepackage{multirow}

\graphicspath{ {./figures/} }

\usepackage{indentfirst}
\usepackage{multicol}
\usepackage{booktabs}
\usepackage{url}
\usepackage[outdir=./]{epstopdf} 

\usepackage[shortlabels]{enumitem}

\usepackage{hyperref}
\hypersetup{
    colorlinks=true, 
    linktoc=all,     
    linkcolor=blue,  
}
\setlength\topmargin{-2cm} \setlength\textheight{230mm}
\setlength\oddsidemargin{0mm}
\setlength\evensidemargin\oddsidemargin \setlength\textwidth{163mm}
\setlength\baselineskip{18pt}


\numberwithin{equation}{section}

\newtheorem{theorem}{Theorem}[section]
\newtheorem{lemma}[theorem]{Lemma}
\newtheorem{proposition}[theorem]{Proposition}
\newtheorem{corollary}[theorem]{Corollary}
\newtheorem{assumption}{Assumption}

\theoremstyle{definition}
\newtheorem{definition}{Definition}[section]

\theoremstyle{remark}
\newtheorem{remark}{Remark}[section]

  \def\a{\alpha}

\def\ms{\medskip}

\def\cA{\mathcal{A}}

\def\cP{\mathcal{P}}

\def\cS{\mathcal{S}}

\def\diff{{\mathrm{d}}}
\def\d{{\mathrm{d}}}

\def\N{{\mathbb{N}}}

\def\R{{\mathbb R}}

\def\Z{{\mathbb{Z}}}

\newcommand{\citep}{\cite}
\newcommand{\citet}{\cite}

\newcommand{\lc}
{\mathrel{\raise2pt\hbox{${\mathop<\limits_{\raise1pt\hbox
{\mbox{$\sim$}}}}$}}}

\newcommand{\gc}
{\mathrel{\raise2pt\hbox{${\mathop>\limits_{\raise1pt\hbox{\mbox{$\sim$}}}}$}}}

\newcommand{\ec}
{\mathrel{\raise2pt\hbox{${\mathop=\limits_{\raise1pt\hbox{\mbox{$\sim$}}}}$}}}

\def\bb{\begin{equation}} \def\ee{\end{equation}}
\def\bbn{\begin{equation*}} \def\een{\end{equation*}}

\def\beqn{\begin{eqnarray}}  \def\eqn{\end{eqnarray}}

\def\beqnx{\begin{eqnarray*}} \def\eqnx{\end{eqnarray*}}

\def\bn{\begin{enumerate}} \def\en{\end{enumerate}}

\def\bd{\begin{description}} \def\ed{\end{description}}

\makeatletter

\makeatother

\begin{document}

\title{Efficient Learning for Entropy-Regularized Markov Decision Processes via Multilevel Monte Carlo}

\author{
Matthieu Meunier\thanks{Mathematical Institute, University of Oxford, Oxford OX2 6GG, UK ({\tt matthieu.meunier@maths.ox.ac.uk,
 christoph.reisinger@maths.ox.ac.uk})} 
\and
Christoph Reisinger\footnotemark[1]
\and
Yufei Zhang \thanks{Department of Mathematics, Imperial College London,  London,  UK  ({\tt yufei.zhang@imperial.ac.uk})}
}

\date{}
\maketitle

\begin{abstract}

Designing efficient learning algorithms with complexity guarantees for Markov decision processes (MDPs) with large or continuous state and action spaces remains a fundamental challenge. 
We address this challenge for  entropy-regularized MDPs with Polish state and action spaces, assuming access to a generative model of the environment.

We propose a novel family of multilevel Monte Carlo (MLMC) algorithms that integrate fixed-point iteration with MLMC techniques and a generic stochastic approximation of the Bellman operator. We   quantify the precise  impact of the chosen approximate Bellman operator on the accuracy of the resulting MLMC estimator.
Leveraging this error analysis, we show that using a biased plain MC estimate for the Bellman operator results in quasi-polynomial sample complexity, whereas an unbiased randomized multilevel approximation 
of the Bellman operator
achieves \emph{polynomial} sample complexity in expectation. Notably, these  complexity bounds are \emph{independent of the   dimensions or cardinalities of the state and action spaces}, distinguishing our approach from existing algorithms whose complexities scale with the sizes of these spaces. We validate these theoretical performance guarantees    through numerical experiments.

\end{abstract}

\medskip
\noindent
\textbf{Key words.} 
Markov Decision Process, 
Entropy Regularization,
Q-function, Multilevel Monte Carlo, Unbiased Randomized Monte Carlo, Sample Complexity.

%

\ms
\noindent
\textbf{AMS subject classifications.} 
65C05, 90C40, 90C39, 60J20, 68Q32.


%
\medskip

\section{Introduction}

Value-based reinforcement learning (RL) algorithms aim to estimate the optimal Q-function of a Markov decision process (MDP), which represents the minimal accumulated cost achievable from a given state-action pair \citep{Sutton1998}. 
Agents typically have access to a generative model of the environment, referred to as  an oracle, which takes a state-action pair as input and returns an instantaneous cost along with a next state. By interacting with the oracle, agents explore different actions and refine their strategies to minimize accumulated costs.

The sample complexity of an algorithm is defined as the total number of actions taken and oracle queries made to approximate the optimal Q-function. Since each oracle query is costly, designing efficient algorithms with low sample complexity is essential for reducing computational overhead. However, existing algorithms generally exhibit sample complexity that scales polynomially with the sizes of the state and action spaces, making them inefficient for large or continuous state-action spaces. 
To the best of our knowledge, no existing learning algorithm provides provable sample complexity guarantees for general MDPs \emph{with arbitrary (possibly continuous) state and action spaces}.

In this work, we focus on Monte Carlo (MC) sampling algorithms, which are a popular class of methods for estimating optimal value functions by computing empirical averages over sampled trajectories. 
Various MC sampling algorithms have been proposed for MDPs with \emph{finite action spaces} and arbitrary state spaces,
achieving quasi-polynomial or polynomial sample complexity in terms of the desired accuracy
(see, e.g., 
\citet{kearns2002sparse,grill2019planning, beck2023nonlinearmontecarlomethods}). However, these sample complexity guarantees  depend explicitly on the cardinality of the action space and  grow unbounded for large (particularly continuous) action spaces; see Section \ref{subsection:related-work} for more details.

This work addresses this gap in the context of entropy-regularized MDPs, where the objective is augmented with an entropy term. Unlike prior works such as the paper of \citet{grill2019planning}, which only considers finite action spaces, we allow both the state and action spaces to be general Polish spaces.  
Our key observation is that the Bellman operator of an entropy-regularized MDP involves integration over the action space. Leveraging this insight, we propose several MC algorithms that achieve provable quasi-polynomial or even polynomial 
sample complexity guarantees that are \emph{independent of the dimensions or cardinalities of the state and action spaces}.

\subsection{Outline of Main Results}
\label{subsection:background}

In this section, we provide a road map of the key ideas and
contributions of this work without introducing needless technicalities. The precise assumptions and
statements of the results can be found in Section \ref{section:problem-formulation}.

\paragraph{Entropy-Regularized MDPs.}

Consider an infinite horizon MDP $(\cS,\cA,P,c,\gamma)$,  
where the state space $\cS$ and action space $\cA$ are Polish (i.e., complete separable metric) spaces with possibly   infinite cardinality, $P\in \cP(\cS|\cS\times \cA)$ is the transition probability kernel, $c$ is a bounded cost function, and $\gamma\in [0,1)$ is the discount factor. 
Let $\mu \in \mathcal \cP(\cA)$ denote a reference probability measure and  $\tau>0$ denote a regularization parameter.   
For each  stochastic policy $\pi \in \cP(\cA|\cS)$ and $s\in \cS$, define the regularized value function by\footnote{The  fact that $\cS$ and $\cA$ are Polish spaces  allows for applying the Kolmogorov extension theorem to construct the unique probability measure associated with the kernel $P$ and a policy $\pi$,
ensuring that    \eqref{eq:value-function} is well-defined; see \citet[Section 2.1]{kerimkulov2023fisherrao}.} 
\begin{equation}
\label{eq:value-function}
V^{\pi}(s)= \mathbb E\left[\sum_{n=0}^\infty\gamma^n \Big(c(s_n,a_n) + \tau \operatorname{KL}(\pi(\cdot|s_n)|\mu)\Big)\right],
\end{equation}
where 
  $s_0=s$,
and for all $n\ge 0$, given the state $s_n$,
the action 
$a_n$ is sampled   according to the policy $\pi(\cdot|s_n)$, and the state  
transits to  $s_{n+1}$ according to the distribution  $P(\cdot|s_n,a_n)$.
The term 
$\operatorname{KL}(\pi(\cdot|s)|\mu) 
$ is the    Kullback--Leibler (KL) divergence  of $\pi(\cdot|s)$ with respect to $\mu$, defined as  
$
\operatorname{KL}(\pi(\cdot|s)|\mu)
\coloneqq \int_{A} \ln\frac{\mathrm{d} \pi(\cdot|s)}{\mathrm{d}\mu}(a) \pi(d a|s)  
$ if $\pi(\cdot |s)$ is absolutely continuous with respect to $\mu$,
and infinity otherwise.
 The optimal value function is then given by
\begin{equation*}
    \label{eq:V_star}
    V ^{\star}(s) = \inf_{\pi\in \cP(\cA|\cS)} V ^{\pi}(s), \quad s\in \cS.
\end{equation*}

By the dynamic programming principle  \citep[Appendix B]{kerimkulov2023fisherrao}, both the optimal  function $V^\star$ 
  and the optimal policy that minimizes \eqref{eq:value-function} are 
 given by 
 \begin{equation}
 \label{eq:optimal_value_policy}
        V ^{\star}(s) = - \tau \log \int_{\mathcal A} \exp \left( - \frac{Q ^{\star}(s,a)}{\tau}  \right) \mu (\diff a),
        \quad
        \pi^{\star}  (\diff a|s) = \exp\left(-\frac{ Q^{\star} (s,a)-V^{\star} (s)}{ \tau}\right)\mu(\diff a),
\end{equation}
  where   $Q^\star$ is the 
  optimal state-action value function, also known as the optimal Q-function.
  Moreover,
  one can show that $Q^\star$ is the 
  unique   solution to the following fixed-point equation in $B_b(\cS\times \cA)$:
      \begin{equation}
        \label{eq:optimal-Q-informal}
        Q ^{\star}(s,a)= c(s,a) + \gamma \int_{S} TQ ^{\star}(s ^{\prime}) P(\diff s ^{\prime} | s,a), \quad \forall (s,a)\in \cS\times \cA,
    \end{equation}
    where $T:B_b(\cS\times   \cA)\to B_b(\cS)$ is the soft-Bellman operator defined by
  \begin{equation}
    \label{eq:T-operator-def}
    TQ (s^\prime) =  -\tau \log \int_{\mathcal A} \exp \left( - \frac{Q(s^\prime, a^\prime)}{\tau}   \right) \mu (\diff a^\prime),
\end{equation}
  and $B_b(\cS\times \cA)$
  and $B_b(\cS )$ are the spaces of   bounded measurable functions on $\cS\times \cA$ and $\cS$, respectively. 
  The operator $T$
  is referred to as ``soft",
  following the terminology in  
\citet{van2019composing}, since it is a smooth approximation of the minimum operator.

In this paper,
motivated by the 
  identity \eqref{eq:optimal_value_policy}, we propose and analyze several MC estimators for  $ Q^\star $, using an oracle that generates state transition samples 
  from arbitrary state-action pairs
  and evaluates the corresponding   instantaneous  cost $c$, along with a sampler for the reference measure $ \mu $.

  \paragraph{A Simple Iterative MC Estimator.}

The fixed-point equation 
\eqref{eq:optimal-Q-informal} indicates that for a given initial guess   $Q_0$, 
the following iterates 
$(Q_n)_{n\in \N}$,
$\N\coloneqq  \{0,1,2, \ldots\}$,
given by  
\begin{equation}
    \label{eq:recursion-Q}
    Q_{n+1} (s,a) \coloneqq  c(s,a) + \gamma \int_{\mathcal S} TQ_n (s^\prime)  P(\diff s^\prime | s, a)
\end{equation}
converge  to $Q^\star$ as $n\to \infty$
\citep{kerimkulov2023fisherrao}. Replacing the integrals over $\cS$
 and $\cA$ with empirical averages over sampled data yields   a simple iterative MC estimator of $Q^\star$.
 
 More precisely,
 for any  $n, M, K\in \N^* \coloneqq \{1,2,\ldots \}$, define 
 for all $(s,a)\in \cS\times \cA$,
\begin{equation}
\label{eq:simple_iterate_intro}
    \begin{aligned}
    {Q}_{n,M,K}(s,a) &\coloneqq  c(s,a) + \frac{\gamma}{M} \sum_{i = 1}^M \hat T  {Q}_{n-1,M, K}  \left(S_{s,a}^{(n-1, i)}\right),\\  
    \hat T  {Q}_{n-1,M, K}  (s) &  \coloneqq    -\tau \log \frac{1}{K}\sum_{k = 1}^{K} \exp \left(-\frac{Q_{n-1,M,K}\left(s,A^{(n-1,k)}\right) }{\tau}  \right),
    \end{aligned}
\end{equation}
where $(S ^{(n-1,i)}_{s,a}) _{i=1}^M$ are independent samples from  $P(\cdot | s, a)$, and $(A^{(n-1,k)})_{k=1}^K$
are independent samples from $\mu$.
The estimator \eqref{eq:simple_iterate_intro}
adapts the estimator in \citet{kearns2002sparse} to the present entropy-regularized setting. 

The first main contribution of this work is
to analyze the sample complexity of the estimator defined in \eqref{eq:simple_iterate_intro}. 
In particular,
\begin{itemize}
\item We explicitly quantify the $L^2$ error of the estimator
${Q}_{n,M,\mathbf T_K}$ 
  in terms of $M$, $K$ and $n$
(Theorem \ref{thm:simple-iterative-MC}).
    Leveraging this error bound, 
    we prove  the estimator
  \eqref{eq:simple_iterate_intro}
  achieves accuracy $\varepsilon$ with a quasi-polynomial complexity  of the order 
$\varepsilon^{ -\kappa \log \varepsilon  }$ as $\varepsilon\to 0$, for some $\kappa>0$. 
  This error bound is independent of the cardinality of the action space,
  in contrast to the quasi-polynomial complexity bound in \citet{kearns2002sparse}.

\end{itemize}

The error estimate in Theorem \ref{thm:simple-iterative-MC} also indicates that the estimator \eqref{eq:simple_iterate_intro} cannot achieve polynomial sample complexity.
This is due to the $\mathcal O(M^{-1/2})$ approximation error for the expectation over $\cS$ at each iteration,
resulting in an overall sample complexity of at least
  $\mathcal O( M^{n})$
(Remark \ref{rmk:quasi-polynomial-simple-iterative}).
This motivates us to adopt the multilevel Monte Carlo (MLMC) technique, originally proposed by \citet{giles2015mlmc}, to achieve variance reduction.

\paragraph{MLMC Estimators.}

The MLMC estimators for $Q^\star$ are based on 
the observation that   for any $n\in \N$,
  the iterate  $Q_n$ defined by \eqref{eq:recursion-Q} admits     the following telescoping decomposition:  
$$
\begin{aligned}
Q _n(s,a) &= Q_1(s,a) + \sum_{l = 2}^{n} (Q_{l}(s,a) - Q_{l-1}(s,a)) \\
    &= c(s,a)+\gamma \int_\cS (T Q_0)(s') P(\d s'|s,a)+ \sum_{l = 1}^{n - 1} \gamma \int_\cS 
      (T Q_{l} -T Q_{l-1})(s') P(\d s'|s,a).
    \end{aligned}
$$ 
The  convergence of $(Q_l)_{l\ge 0}$
implies 
the difference $Q_{l} - Q_{l - 1}$ gets smaller as $l$ increases.
Hence by directly
estimating the difference $\int_\cS 
(T Q_{l} -T Q_{l-1})(s') P(\d s'|s,a) $ 
using sampled data,
 fewer samples are required at higher levels to achieve a fixed  overall accuracy,
 which subsequently  results in an improved sample complexity compared to the simple iterative MC estimator \eqref{eq:simple_iterate_intro}.

More precisely,     given $n\in \N$  and $M \in \N^*$, 
for each  $l = 0,\ldots, n$,
we approximate 
  $\int_\cS 
(T Q_{l} -T Q_{l-1})(s') P(\d s'|s,a) $
using  $M^{n-l}$ samples, 
where the number of samples decreases with respect to the level
  $l$. The resulting  MLMC estimator is given by
\begin{equation}
    \label{eq:MLMC-sketch}
    \begin{aligned}
        Q_{n, M} (s,a) & = c(s,a) + \gamma \sum_{i = 1}^{M^n} \frac{1}{M^n} \hat{T}Q_{0}\left(S^{(0,i)}_{s,a}\right)                                 \\
    &
    \quad + \gamma \sum_{l = 1}^{n-1}\sum_{i = 1}^{M^{n-l}}\frac{1}{M^{n-l}}\left[ \hat{T}Q_{ l, M} \left( S^{(l,i)}_{s,a} \right) - \hat{T} Q_{l - 1, M}\left(S^{(l,i)}_{s,a}\right) \right],
    \end{aligned}
\end{equation}
where $(S ^{(l,i)}_{s,a}) _{l,i}$ are independent samples from the distribution $P(\cdot | s, a)$, and $\hat{T}$ is a suitable stochastic approximation of the soft-Bellman operator $T$, 
which may differ from the plain MC approximation given in  \eqref{eq:simple_iterate_intro}.

The MLMC estimator \eqref{eq:MLMC-sketch} differs from the estimator in \citet{beck2023nonlinearmontecarlomethods}, which was developed specifically for unregularized MDPs with finite action spaces.
The key distinction is that \eqref{eq:MLMC-sketch} allows for a   general class of stochastic operators $\hat T$ to approximate 
the soft-Bellman operator $T$, a crucial feature for constructing an MLMC estimator that can accommodate general action spaces.
In contrast, \citet{beck2023nonlinearmontecarlomethods} fix  $\hat T$ as the (exact) Bellman operator for the unregularized MDP, which requires evaluating a given Q-function at all actions and taking the maximum over them. This approach does not scale well to large action spaces and is inapplicable to our setting with general action spaces.

The second main contribution of this work is to quantify the accuracy  of the MLMC estimator \eqref{eq:MLMC-sketch} for a broad class of stochastic operators $\hat T$ and to further optimize its sample complexity for specific choices of $\hat T$. In particular, 
\begin{itemize}
    \item 
    We establish a precise error bound for the MLMC estimator \eqref{eq:MLMC-sketch} in terms of the hyperparameters $n, M$, and the properties of
    the approximation  operator $\hat T$ (Theorem \ref{thm:global-MLMC-error}). The bound reveals that the Lipschitz continuity of the mapping $Q \mapsto \hat{T} Q$  influences error propagation in the recursive construction of the MLMC estimator, while the bias of $\hat T$ introduces an irreducible additive term in the final estimation error.
\item 

We refine the error bound for two specific choices of $\hat T$  and optimize the sample complexities of the resulting MLMC estimators.

The first choice of $\hat T$ is  the plain MC estimator \eqref{eq:simple_iterate_intro}, which serves as a biased approximation of the soft-Bellman operator   $T$ due to the   logarithm function in $T$ \eqref{eq:T-operator-def}. 
We prove that the corresponding  MLMC estimator \eqref{eq:MLMC-sketch}
  achieves a cubic reduction in sample complexity compared to the simple iterative MC estimator \eqref{eq:simple_iterate_intro}, 
  highlighting the advantage of the MLMC technique
  (Theorem \ref{theorem:error-complexity-MC}). 
  However, the  inherent bias in $\hat T$ causes the overall complexity to remain quasi-polynomial (Remark \ref{rmk:quasi-polynomial_MLMC}).
  
The second choice of $\hat{T}$  is an unbiased approximation of the soft-Bellman operator $T$, derived by applying the randomized multilevel Monte Carlo technique from \citet{blanchet2015unbiased} to the soft-Bellman setting; see Definition \ref{def:T-blanchet-glynn}.
We prove that the resulting MLMC estimator \eqref{eq:MLMC-sketch} achieves \emph{polynomial} sample complexity in expectation (Theorem \ref{theorem:error-complexity-BG}). The key step in the analysis   is establishing the Lipschitz continuity of the approximation operator $\hat T$ with respect to the input function $Q$ (Proposition \ref{proposition:contraction-prop-T-blanchet-glynn-l2}).

To the best of our knowledge, this is the first algorithm with polynomial sample complexity guarantee  for regularized MDPs with general state and action spaces. We emphasize that incorporating MLMC techniques into both the fixed-point iteration and the approximation of the soft-Bellman operator is crucial for achieving this polynomial complexity.
  
   \item 

   We examine the performance of the above two MLMC estimators in multi-dimensional linear quadratic control problems.
Our numerical results confirm that the MLMC estimator with a plain MC approximation of 
$T$
exhibits quasi-polynomial complexity, but remains stable even when a small sample size is used to approximate the inner integral in $T$.  In contrast, the MLMC estimator with the unbiased Blanchet--Glynn approximation achieves polynomial complexity with appropriately chosen hyperparameters, but may exhibit numerical instability as the number of fixed-point iterations increases.
\end{itemize}

We summarize in Table \ref{tab:theoretical_summary} the main results obtained for specific estimators.
 \begin{table}[htbp]
\centering
\renewcommand{\arraystretch}{1.3}
\begin{tabular}{|l|c|c|c|}
\hline
\textbf{Estimator} 
& \textbf{Result}
& \textbf{Error Rate} & \textbf{Complexity}  \\ 
\hline
\textbf{Plain MC (iterative)}
&
Theorem \ref{thm:simple-iterative-MC} 
& 
$\mathcal{O}\left(\frac{1}{\sqrt{M}} + \frac{1}{K} + (\gamma L)^n\right)$ &
$\varepsilon^{ \frac{-3 \log \varepsilon}{\log \gamma L}(1 + o(1))}$ 

\\
\hline
\textbf{MLMC  (biased)} & 
Theorem \ref{theorem:error-complexity-MC}
&
$\mathcal{O}\left((\Lambda_M)^n + \frac{1}{K}\right)$ &
$\varepsilon^{\frac{- \log \varepsilon}{\log \gamma L + \delta}(1 + o(1))}$
\\
\hline
\textbf{MLMC  (unbiased)} & 
Theorem \ref{theorem:error-complexity-BG}
&
$\mathcal{O}\left((\Lambda_M)^n\right)$ &
$\mathcal{O}\left(\varepsilon^{-\kappa}\right)$, $\kappa > 0$ 
\\
\hline
\end{tabular}
\caption{Comparison of theoretical properties of estimators
for entropy-regularized MDPs. Here, $M$ is the number of outer samples, $K$ is the number of inner samples for the soft-Bellman approximation,   $n$ is the number of fixed-point iterations,
and $\varepsilon$ is the   accuracy of the estimator. $\Lambda_M < 1$ is a constant depending on $M$, 
$\delta$ is any positive constant, 
and $L$ is a constant that depends on $c$ and $\tau$, for which we assume   $\gamma L < 1$ (see Remark \ref{rem:gammaL}).}
\label{tab:theoretical_summary}
\end{table}

\subsection{Most Related Works}
\label{subsection:related-work}

\paragraph{Monte Carlo Methods for MDPs.}

 In the realm of RL, Monte Carlo sampling has been employed to address the curse of dimensionality for MDPs with \emph{finite action spaces}, dating back to the seminal work of \citet{rust1997usingrandomization}. Algorithms with polynomial sample complexity for MDPs with \emph{finite state and action spaces} were later proposed in \citet{kearns2002near}. Monte Carlo methods became central for \emph{planning} in MDPs, where an agent seeks to estimate the optimal value function for a given state by querying a generative model. The influential paper of \citet{kearns2002sparse} introduced an MC planning algorithm (the sparse sampling algorithm) for MDPs with \emph{finite action spaces and arbitrary state spaces}, achieving quasi-polynomial sample complexity, where the complexity bound explicitly depends on the cardinality of the action space.
In special cases, such as deterministic dynamics \citep{hren2008optimistic} or finite support of the transition probability \citep{szorenyi2014optimistic, jonsson2020planning}, polynomial sample complexities in $\varepsilon^{-1}$ have been achieved. However,  these sample complexity guarantees become exponential
when the state space is infinite and the transitions are not restricted to a finite number of states. Recent works have sought to improve quasi-polynomial complexity to polynomial complexity by incorporating adaptive action selection in the context of regularized MDPs with finite action spaces \citep{grill2019planning}, or by using multilevel Monte Carlo techniques \citep{beck2023nonlinearmontecarlomethods}. Nonetheless, these complexity guarantees still depend explicitly on the cardinality of the action space, and they become infinite for continuous action spaces.

Our work addresses this gap by designing MC algorithms that achieve quasi-polynomial or even polynomial sample complexities for \emph{general (possibly continuous) action and state spaces}, filling a significant gap in the literature.

\paragraph{Entropy-Regularized MDPs.}

Entropy  regularizations have emerged as powerful tools in RL, offering significant benefits across various aspects of algorithm design and performance \citep{geist19atheory}. These techniques are known to stabilize learning \citep{ziebart2008maximum, reisinger2021regularity} and prevent the agent from being trapped in suboptimal policies too early \citep{fox2016tamingnoisereinforcementlearning}. This approach has given rise to popular deep RL methods such as soft actor-critic (SAC) \citep{haarnoja2018sac} and proximal policy optimization (PPO) \citep{schulman2017proximalpolicyoptimizationalgorithms}, which have become staples in modern RL applications. Such regularizations also facilitate the design and study of RL algorithms in continuous time 
(see, e.g., \citep{jia2022b, szpruch2024optimal}), as well as in the multi-agent/mean-field context \citep{cui2021approximately,anahtarci2023q,guo2022entropy}.
The study of entropy-regularized infinite-horizon MDPs with  general action and state spaces has led to important theoretical advancements, particularly in proving the convergence of policy gradient techniques \citep{cen2022fast, kerimkulov2023fisherrao}.
Our estimator shares similarities with techniques used in distributionally robust Q-learning \citep{liu2022distributionally,wang2023finite, wang2024samplecomplexityvariancereduceddistributionally}, where    suitable unbiased estimators 
are employed
to improve  state-of-the-art complexity in the tabular case \citep{wang2024samplecomplexityvariancereduceddistributionally}.

\paragraph{Multilevel Monte Carlo Methods for Fixed-Point Equations.}
Iterative MLMC has been applied  to approximate nonlinear equations with an underlying fixed-point structure 
(see, e.g., \citet{hutzenthaler2020overcoming, 
giles2019generalisedmultilevelpicardapproximations, hutzenthaler2021multilevel}
and the references therein for applications to PDEs,
  as well as \citet{szpruch2019multilevelSDE, hutzenthaler2022multilevel} for applications to McKean-Vlasov SDEs). 
Recently, \citet{beck2023nonlinearmontecarlomethods} introduced multilevel fixed-point iterations for learning the optimal Q-function of unregularized MDPs with a \emph{finite action space}, obtaining a polynomial complexity bound that explicitly depends on the cardinality of the action space.

It is important to note that our problem does not satisfy the assumptions required for the generalized MLMC estimators for fixed-point equations proposed by \citet{giles2019generalisedmultilevelpicardapproximations}.
Indeed, as emphasized earlier, achieving polynomial complexity requires incorporating MLMC techniques into both the fixed-point iteration and the approximation of the soft-Bellman operator.

Our problem also differs from the work of \citet{syed2023optimalrandomized}, which applies MLMC to estimate nested expectations with finite depth. Note that we aim to estimate the fixed point of \eqref{eq:optimal-Q-informal}, which cannot be expressed as a nested expectation of finite depth. More importantly, our estimator requires selecting the depth (corresponding to the level \(n\) in \eqref{eq:MLMC-sketch}) of nested expectations as a function of the error \(\varepsilon\), and we demonstrate that our estimator remains polynomial in \(\varepsilon^{-1}\). In contrast, \citet{syed2023optimalrandomized} provide a polynomial complexity result,  depending exponentially on the fixed depth. At the time of writing, we are not aware of any other model-free reinforcement learning techniques capable of achieving average polynomial complexity in arbitrary state and action spaces without structural assumptions on the underlying MDP.

 Finally, we would like to point out the difference between our MLMC estimator and Q-learning \citep{watkins1992q}. In Q-learning, the Q-values for all state-action pairs are stored (either in a look-up table for the tabular setting or using function approximation in the continuous setting), and they are updated iteratively, typically using one sample transition at a time. 
 The convergence guarantee of Q-learning typically requires finite state and action spaces. In contrast, our estimators compute the Q-value for a specific state-action pair and require sampling multiple transitions from the oracle starting from that pair.
 Our convergence results hold  for MDPs with general state and action spaces. 

\subsection{Notation and Paper Structure}
\label{subsection:notations}
We denote by  $\N = \{0, 1, 2, \cdots\}$    the set of all non-negative integers, and by 
$\N^*$   the set of all positive integers.   
For each measurable space 
$(\mathcal E, \mathcal F_{\mathcal E})$, 
we denote by  $B_b(\mathcal E)$   the set of all   bounded measurable functions $f: \mathcal E\rightarrow \R$, equipped with the  supremum norm $\|\cdot\|_\infty$. 
If $\mathcal E$ is a metric space, then the $\sigma$-algebra considered is the Borel $\sigma$-algebra $\mathcal F_{\mathcal E} = \mathcal B(\mathcal E)$. If $\mathcal E = \prod_{i \in I} \mathcal X_i$ where each $\mathcal X_i$ is endowed with a $\sigma$-algebra and $I$ is countable, then $\mathcal F_{\mathcal E}$ is the product $\sigma$-algebra. Similarly, we equip countable products of topological spaces with the product topology. For Polish spaces $(\mathcal X,\mathcal F_{\mathcal X}), (\mathcal Y, \mathcal F_{\mathcal Y})$, we denote by $\mathcal P (\mathcal X)$ the set of all probability measures on $\mathcal X$, and by $\mathcal P ( \mathcal X | \mathcal Y)$ the set of all Markov kernels  $\pi: \mathcal Y\times \mathcal F_{\mathcal X}   \rightarrow [0,1]$.

Throughout this paper, 
we denote the dependence of a constant on key quantities using the notation 
$C_{(\cdot)}$,
for example,  $C_{(\gamma)}$.

The rest of the paper is organized as follows:  
Section \ref{section:problem-formulation} presents the model assumptions, introduces the iterative MC estimator and various MLMC estimators rigorously, and states the main theoretical results regarding their error bounds and sample complexities.  
Section \ref{section:numerics} provides numerical experiments to illustrate the convergence and stability properties of the MLMC estimators.  
Section \ref{sec:proof-simple-iterative-MC} proves the error bound for the iterative MC estimator.  
Section \ref{section:analysis-of-the-error} proves the error bounds for the MLMC estimators.  
Section \ref{section:analysis-of-the-complexity} proves the sample complexity of the MLMC estimators.

\section{Main Results}
\label{section:problem-formulation}

This section summarizes 
 the model assumptions for  the  MDP, formulates various  MC estimators for the optimal 
$Q$ function, and presents their error bounds and sample complexities.

\subsection{Formulation of  Regularized MDPs}
\label{subsection:assumptions}

This section introduces the probabilistic framework for constructing the MC estimators of the regularized MDPs. 
Throughout this paper, 
we 
consider  an entropy-regularized MDP 
  $\mathcal M = (\mathcal S, \mathcal A, P, c, \gamma, \mu, \tau)$   as   in Section \ref{subsection:background}
  with the following assumption.
  
\begin{assumption}
    \label{assumption:data}
    $\mathcal S$ and $  \mathcal A$ are Polish spaces (i.e.,\   complete separable metric spaces),
    $P\in \cP(\cS|\cS\times \cA)$,
    $c\in B_b(\cS\times \cA)$,
    $\gamma\in [0,1)$,
    $\mu\in \cP(A)$
    and $\tau>0$.
Let
     $c_{\min}, c_{\max}\in [0,\infty)$ be such that $   c_{\min} \leq c(s,a) \leq c_{\max}$ for all $(s,a ) \in \mathcal S\times  \mathcal A$,
     and define 
      $\alpha \coloneqq c_{\min}/(1-\gamma) $
      and $\beta \coloneqq c_{\max}/(1-\gamma)$.
     
\end{assumption}

 Under Assumption \ref{assumption:data},  the optimal Q-function 
$Q^\star\in B_b(\cS\times \cA)   $ for the entropy-regularized MDP 
is    well-defined   and satisfies 
$ \alpha \leq Q^\star(s,a) \leq \beta
$
for all $(s,a)\in \cS\times \cA$.
Moreover, 
for any $Q_0\in B_b(\cS\times \cA)$,
define 
the following fixed point iterates 
 \begin{equation}
 \label{eq:fixed_point}
 Q_n(s,a)\coloneqq c(s,a)+\gamma \int_{S}
    (T Q_n)(s)  P(\diff s ^{\prime} \mid s,a), \quad n\in \N,
 \end{equation}
where 
$T:B_b(\cS\times \cA)\to B_b(\cS)$
is the soft-Bellman operator defined by
 \begin{equation}
 \label{eq:soft_bellman}
   (TQ)(s)\coloneqq -\tau\log \int_{A} \exp\left(- \frac{Q(s,a)}{\tau}\right) \mu(\d a).
\end{equation}
Then
$(Q_n)_{n\in \N}$ 
converges with a linear rate to $Q^\star$ in the space $B_b(\cS\times \cA)$ as $n\to \infty$; see Appendix B in the work of \citet{kerimkulov2023fisherrao}.

 In this paper, we 
 construct MC estimators 
for the optimal Q-function
using   sampled states and actions. 
To this end, let 
$(\Omega,\mathcal F, \mathbb P)$
be a generic probability space that 
 supports all  (countably many) independent random variables used in the   estimator,
 and 
let 
$\Theta = \bigcup_{n \in \N} \Z^n$
be the index set for these independent random variables. 
Note that although the practical implementation of our MLMC estimator involves only finitely many random variables, we define the estimators for all $\theta\in \Theta$
 through an induction process for mathematical convenience
 (see \eqref{eq:def-iterative-mc} and Definition \ref{def:general-MLMC-estimator}).

We assume access to an oracle that generates  independent samples  from the reference measure 
$\mu$
  and the transition kernel 
$P$. 
To ensure the conditional independence of samples from different oracle queries,
 we recall 
the ``noise outsourcing" lemma
\citep[Lemma 2.22]{kallenberg2002foundations}:
given the   kernel $P\in \mathcal P(\cS|\cS\times \cA)$, 
there exists a 
measurable function $f:\cS\times \cA\times [0,1]\to \cS$
such that if  
$U$ is a uniform random variable on $[0,1]$,
$f(s,a, U) $ has   distribution $P(\cdot|s,a)$
for all 
  $(s,a)\in \cS\times \cA$.

\begin{assumption}
    \label{assumption:random-variables}
   \begin{enumerate}[(i)]
       \item\label{item:A_sample}
       $(A ^{\theta})_{\theta \in \Theta}:\Omega\to \cA$ are independent random variables with distribution given by $\mu$.
       \item \label{item:S_sample}
              
       Let $(U^\theta)_{\theta\in \Theta}:\Omega\to [0,1]$ be independent uniform random variables   that are also independent of $(A ^{\theta})_{\theta \in \Theta}$.
       Define
       $S ^{\theta}_{s,a} \coloneqq f(s,a,U^\theta)$
       for all  $(s,a)\in \cS\times \cA$ and $\theta\in \Theta$,
       where $f: \cS\times \cA\times [0,1] \to \cS$
is a measurable function such that 
$S^\theta_{s,a}\coloneqq f(s,a,U^\theta)$ has     distribution $P(\cdot|s,a)$
for all $(s,a)\in \cS\times \cA$.

   \end{enumerate}

\end{assumption}

Assumption  \ref{assumption:random-variables}\ref{item:A_sample}
asserts that one can   sample  from 
the reference measure $\mu$.
This assumption holds    
  for commonly used reference measures,  such as the uniform distribution  \citep{mei2020global, reisinger2021regularity} and Gaussian distributions  
\citep{giegrich2024convergence}.

Assumption  \ref{assumption:random-variables}\ref{item:S_sample}
requires  
that 
the underlying randomness
of sampled state variables 
$(S^\theta_{s,a})_{\theta\in\Theta}$ is represented by some hidden uniform random variables 
$(U^\theta)_{\theta\in \Theta}$.
This   explicit
 representation of the noise in the transition kernel  $P$ ensures
 that for all $(s,a)\in \cS\times \cA$, the samples 
       $(S ^{\theta}_{s,a})_{\theta \in \Theta} $   
       are mutually independent and also independent    from other sources of randomness in our estimator. 
It also ensures a regular conditional probability for our MC estimators, which helps mitigate some measure-theoretical challenges when dealing with continuous state and action spaces.

\subsection{A Simple Iterative MC Estimator  and Its Sample Complexity} 
\label{sec:simple_MC}

We first propose  a simple iterative MC estimator of $Q^\star$, in the spirit of  \citet{kearns2002sparse}. 
The estimator is based on a plain  MC approximation  of the soft-Bellman operator $T$.

\begin{definition}
    \label{def:T-plain-monte-carlo}
    Let $(A ^{\theta})_{\theta \in \Theta}$ be the random variables  in Assumption \ref{assumption:random-variables}. For each $K \in \N$,
    we define the operators $\mathbf T_K =(\hat T^\theta_K)_{\theta\in \Theta}$ 
    such that for all  
    $\theta\in \Theta$ and 
    $Q \in B_b(\mathcal S \times \mathcal A)$, 
    \begin{equation}
        \label{eq:T-plain-monte-carlo}
        \hat{T} ^{\theta}_{K}Q(s) \coloneqq   -\tau \log \frac{1}{K}\sum_{k = 1}^{K} \exp \left(-\frac{Q(s,A^{(\theta,k)}) }{\tau}  \right),\quad s\in \cS.
    \end{equation}

\end{definition}

The estimate of $Q^\star$
is derived by simply 
replacing the operator $T$ in \eqref{eq:fixed_point} by operators of the family
$\mathbf T_K$. 
More precisely, fix  
an initial guess  $Q_0 \in B_b(\cS \times \cA)$ 
such that $\alpha \leq Q_0(s,a) \leq \beta$ for all $(s,a)$.
Define 
the family $({Q}_{n,M,\mathbf T_K}^\theta)_{n\in \N, M\in \N^*,\theta\in \Theta}$ iteratively 
such that for all $(s,a)\in \cS\times \cA$ and $\theta\in \Theta$, 
\begin{equation}
    \label{eq:def-iterative-mc}
    \begin{aligned}
    {Q}_{0,M,\mathbf T_K}^\theta (s,a) &= Q_0(s,a),  
    \\
    {Q}_{n,M,\mathbf T_K}^\theta(s,a) &= c(s,a) + \frac{\gamma}{M} \sum_{i = 1}^M T^{(\theta, i)} {Q}_{n-1,M,\mathbf T_K}^{(\theta, i)} \left(S_{s,a}^{(\theta, i)}\right), \quad   \forall n \geq 1.
    \end{aligned}
\end{equation}
For each $n\in \N^*$,
define the error of 
${Q}_{n,M,\mathbf T_K}^\theta$
by 
$$
E_{n,M,K } = \sup_{(s,a)\in \cS\times \cA} \left( \mathbb E \left[ \left({Q}_{n,M,\mathbf T_K}^\theta(s,a) - Q^\star(s,a) \right)^2 \right]\right)^{1/2},
$$
and define the sample complexity
$\mathfrak C_{n,M, K}$ of the estimator $ {Q}_{n,M,\mathbf T_K}^\theta$
  as the total number of random variables required to evaluate
$ {Q}_{n,M,\mathbf T_K}^\theta$. Notice that $E_{n,M,K}$ is independent of $\theta$ since $(Q^\theta_{n,M,\mathbf T_K}(s,a) - Q^\star(s,a))_{\theta \in \Theta}$ are identically distributed.

The following theorem quantifies the error in terms of $M, n , K$ 
and optimizes the sample complexity
of ${Q}_{n,M,\mathbf T_K}^\theta$ with a given accuracy.
Recall 
  that $\alpha =  (1 - \gamma)^{-1}c_{\min}$ and $ \beta =   (1 - \gamma)^{-1}c_{\max}$.

\begin{theorem}
\label{thm:simple-iterative-MC}
Suppose    Assumptions \ref{assumption:data} and \ref{assumption:random-variables} hold. 
Let $L \coloneqq \exp \left( \tau^{-1}(\beta - \alpha)\right)$ and 
 assume that $\gamma L<1$.
  Then for all $n, M, K \in \N^*$,  
    \begin{equation}
\label{eq:final-error-simple-iterative-mc}
        E_{n,M,K } \leq \frac{\gamma \sqrt{C}}{\sqrt{M}(1 - \gamma L)} + \frac{\gamma (L^\prime)^2}{2 \tau K(1 - \gamma L)} + (\gamma L)^n  \left\|Q_0 - Q^\star\right\|_{\infty},
    \end{equation}
    with
    $
C = \left(\beta-\alpha \right)^2$ 
and 
$ 
L^\prime = \tau (L - 1)$. Moreover, the  corresponding  sample complexity
$\mathfrak C_{n,M, K}$ of $Q^\theta_{n,M,\mathbf T_K}$
is 
 $M^nK^n$.

 In particular, for each 
      $\varepsilon \in (0,1)$,
     by setting 
    \begin{equation}
        \label{eq:n-M-K-simple-iterative-MC}
        n_\varepsilon= \left\lceil \frac{\log \varepsilon - \log (3\left\|Q_0 - Q^\star\right\|_{\infty})}{\log \gamma L}\right\rceil, \quad M_\varepsilon = \left\lceil \frac{9\gamma^2C}{(1-\gamma L)^2\varepsilon^2}\right\rceil, \quad K_\varepsilon = \left\lceil  \frac{3\gamma (L^\prime)^2}{2 \tau(1-\gamma L)\varepsilon}\right\rceil,
    \end{equation}
   it holds that  $E_{n_\varepsilon,M_\varepsilon,K_\varepsilon} \leq \varepsilon$
   for all $\varepsilon\in (0,1)$,
   and 
$\mathfrak C_{n_\varepsilon,M_\varepsilon,K_\varepsilon} = \varepsilon^{ \frac{-3 \log \varepsilon}{\log \gamma L}(1 + o(1))}$ as $\varepsilon\to 0$,
where $o(1)$ denotes a term that vanishes as $\varepsilon\to 0$.
   
\end{theorem}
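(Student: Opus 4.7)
The plan is to derive a one-step recursion for $E_{n,M,K}$, unroll it into a geometric series, and then balance the three hyperparameters $(n,M,K)$ to meet the accuracy target $\varepsilon$. I would organize the proof around two preliminary estimates followed by the recursion and its solution.

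First, I would prove two structural lemmas. (i) A weighted Lipschitz bound for the soft-Bellman operator: writing
$$TQ(s') - TQ'(s') = \int_0^1 \E_{\pi^t(\cdot|s')}\!\left[(Q-Q')(s',\cdot)\right] \d t$$
for the Gibbs kernel $\pi^t$ associated with $Q^t = (1-t)Q'+tQ$, and using that $\d\pi^t/\d\mu \le e^{(\beta-\alpha)/\tau}=L$ whenever $Q,Q'\in[\alpha,\beta]$, yields $|TQ(s')-TQ'(s')| \le L\,\E_\mu[|Q-Q'|(s',\cdot)]$. (ii) A bias estimate for $\hat T^\theta_K$: Taylor expanding $\log(1+U)$ at $U=(\hat X-X)/X$ with $X=\E_\mu[e^{-Q/\tau}]$ and $\hat X$ its $K$-sample empirical version, combined with $1/X\le e^{\beta/\tau}$ and $\mathrm{Var}(\hat X)\le e^{-2\alpha/\tau}(1-1/L)^2/K$, gives $|\E[\hat T^\theta_K Q(s)]-TQ(s)| \le (L')^2/(2\tau K)$. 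A short induction also shows that $\hat T^\theta_K$ preserves the range $[\alpha,\beta]$, so every iterate $Q^\theta_{n,M,\mathbf T_K}$ stays in $[\alpha,\beta]$ and the two bounds apply at every step.

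Next, I would set up the recursion. For fixed $(s,a)$ write $D^\theta_{n+1}=Q^\theta_{n+1,M,\mathbf T_K}-Q^\star$ and
$$Z_i \coloneqq \hat T^{(\theta,i)}_K Q^{(\theta,i)}_{n,M,\mathbf T_K}(S^{(\theta,i)}_{s,a}) - \int_\cS TQ^\star(s')P(\d s'|s,a),$$
so that $D^\theta_{n+1}(s,a)=(\gamma/M)\sum_{i=1}^M Z_i$. Because the indexing in $\Theta$ places each branch $(\theta,i)$ in a disjoint subtree, the $Z_i$ are i.i.d., hence $\E[D^\theta_{n+1}(s,a)^2]=\gamma^2[\mathrm{Var}(Z_1)/M+(\E Z_1)^2]$. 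As both $\hat T^{(\theta,1)}_K Q$ and $TQ^\star$ lie in $[\alpha,\beta]$, one has $\mathrm{Var}(Z_1)\le (\beta-\alpha)^2 = C$. For the bias, condition on $Q^{(\theta,1)}_{n,M,\mathbf T_K}$ and apply (ii) to extract the $(L')^2/(2\tau K)$ term; the remaining piece is bounded using (i) and the pointwise estimate $\E[|Q-Q^\star|]\le E_{n,M,K}$ (Jensen), giving $|\E Z_1|\le L\,E_{n,M,K}+(L')^2/(2\tau K)$. Combining via $\sqrt{a^2+b^2}\le a+b$ and taking the supremum over $(s,a)$ produces the one-step recursion
$$E_{n+1,M,K}\le \gamma L\,E_{n,M,K}+\frac{\gamma\sqrt{C}}{\sqrt{M}}+\frac{\gamma(L')^2}{2\tau K}.$$
Since $\gamma L<1$, unrolling and summing the geometric series yields \eqref{eq:final-error-simple-iterative-mc}.

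Finally, evaluating $Q^\theta_{n,M,\mathbf T_K}(s,a)$ spawns a tree in which each node branches into $M$ next-state samples, each triggering a $\hat T_K$ evaluation with $K$ action samples and $K$ recursive depth-$(n{-}1)$ calls; counting leaves gives the complexity $M^nK^n$. Substituting the choices \eqref{eq:n-M-K-simple-iterative-MC}, which equalize the three error contributions at $\varepsilon/3$, yields $M_\varepsilon K_\varepsilon=\Theta(\varepsilon^{-3})$ and $n_\varepsilon\sim \log\varepsilon/\log(\gamma L)$, so $\mathfrak C_{n_\varepsilon,M_\varepsilon,K_\varepsilon}=(M_\varepsilon K_\varepsilon)^{n_\varepsilon}=\varepsilon^{-3\log\varepsilon/\log(\gamma L)(1+o(1))}$. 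The main obstacle is step (i): $T$ is $1$-Lipschitz in $L^\infty(\cA)$, but the error norm $E_{n,M,K}$ is $L^\infty$ in $(s,a)$ and $L^2$ in the sampling randomness, so a direct $L^\infty$ estimate cannot be converted into the required bound on $\sup_{s,a}\E[|TQ_n - TQ^\star|]$ without the Gibbs/Radon--Nikodym comparison, which is precisely what introduces the multiplicative $L$ into the propagation step.
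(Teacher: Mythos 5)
Your proposal is correct and follows essentially the same route as the paper: a variance--bias decomposition of the one-step error, the weighted ($\mu$-averaged) Lipschitz bound for $\hat T_K$ with constant $L = e^{(\beta-\alpha)/\tau}$, the $O((L')^2/(2\tau K))$ Taylor-expansion bias bound, the resulting linear recursion unrolled as a geometric series, and the $\varepsilon/3$ balancing of the three parameters. The only difference is cosmetic: you derive the Lipschitz lemma via a Gibbs-kernel interpolation along $Q^t$ with a Radon--Nikodym bound, whereas the paper uses the elementary inequality $-\log x \le x^{-1}-1$ together with the $1$-Lipschitzness of $x \mapsto e^{-x}$ on $[0,\infty)$; both yield the same constant.
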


 The proof of Theorem \ref{thm:simple-iterative-MC} is given in Section \ref{sec:proof-simple-iterative-MC}.
 
 \begin{remark}[Role of regularization]
 Theorem \ref{thm:simple-iterative-MC}
 shows that 
 for regularized MDPs, 
 the  estimator 
\eqref{eq:def-iterative-mc}
achieves accuracy $\varepsilon$ 
with a quasi-polynomial complexity  independent of the cardinalities of the  action spaces.
This stands in contrast to the MC estimator  for unregularized MDPs   in \citet{kearns2002sparse}, where the quasi-polynomial complexity bound  depends explicitly on the action space cardinality and becomes infinity for continuous action spaces.
This improvement arises because entropy regularization leads to a smoothed Bellman operator, eliminating the need to enumerate all actions and compute the maximum over them, as required in the unregularized case.
 \end{remark}

 \begin{remark}[Condition $\gamma L <1$]
 \label{rem:gammaL}
The extra assumption $\gamma L <1$ made in Theorem \ref{thm:simple-iterative-MC} 
holds for   
a sufficiently small
discount factor $\gamma$,
a sufficiently flat cost $c$,
or a sufficiently large regularization parameter $\tau$.
Indeed, 
  for any given bounded cost $c$ and regularization parameter $\tau$, it is satisfied if the discount factor $\gamma$ is sufficiently small. Conversely, for any $\gamma<1$, it is satisfied if either $\tau$ is sufficiently large for given $c$, or $c$ is sufficiently flat
(i.e., $c_{\max}-c_{\min}$ sufficiently small) for a given $\tau$. 
  \end{remark}
  
 \begin{remark}[Error decomposition]
  \label{rmk:quasi-polynomial-simple-iterative}  
The error bound in \eqref{eq:final-error-simple-iterative-mc} quantifies the contributions of three distinct error sources. The first term represents the variance associated with approximating the expectation over the state space \( \mathcal{S} \), the second term accounts for the bias in approximating the soft-Bellman operator \( T \), and the third term reflects the error introduced by the fixed-point iteration.

Theorem \ref{thm:simple-iterative-MC} indicates that 
the simple iterative estimator 
${Q}_{n,M,\mathbf T }^\theta$   cannot achieve a polynomial sample complexity, even if the soft-Bellman operator $T$  can be evaluated exactly.
This is due to the \(\mathcal{O}(M^{-1/2})\) approximation error for the expectation over \( \mathcal{S} \) at each iteration combined with a sample complexity of at least \(\mathcal{O}(M^n)\).
Since both 
$M$ and  $n$  must increase to achieve a higher accuracy, the simple iterative estimator exhibits super-polynomial complexity.
 \end{remark}

In the sequel, we employ   the multilevel Monte Carlo (MLMC) technique, originally proposed in \citet{giles2015mlmc}, to achieve a variance reduction, resulting in an estimator with an average polynomial sample complexity when combined with an unbiased estimation of \(T\).

\subsection{MLMC Estimators and Their Sample Complexities} 

This section uses the MLMC technique  
to    
design more sample efficient estimators. Specifically, observe that for any $n\in \N$,
  the iterate  $Q_n$ defined by \eqref{eq:fixed_point} admits     the following telescoping decomposition:  
$$
\begin{aligned}
Q _n(s,a) &= Q_1(s,a) + \sum_{l = 2}^{n} (Q_{l}(s,a) - Q_{l-1}(s,a)) \\
    &= c(s,a)+\gamma \int_\cS (T Q_0)(s') P(\d s'|s,a)+ \sum_{l = 1}^{n - 1} \gamma \int_\cS 
      (T Q_{l} -T Q_{l-1})(s') P(\d s'|s,a).
    \end{aligned}
$$ 
An MLMC estimator 
of $Q^\star$
can be constructed by estimating the difference 
$$\int_\cS 
(T Q_{l} -T Q_{l-1})(s') P(\d s'|s,a),
$$ 
using sampled data, leading to a variance reduction compared to the standard iterative MC estimator analyzed in Section \ref{sec:simple_MC}.

\subsubsection{Formulation of General MLMC Estimators}

To formulate the MLMC estimator,
we first introduce a generic stochastic approximation \( \mathbf{T} \) of the soft-Bellman operator $T$ defined as follows.

\begin{definition}[Admissible Stochastic Operators]
    \label{def:admissible-stochastic-operator}
    Let   Assumptions \ref{assumption:data} and \ref{assumption:random-variables} hold,
    and let 
    $\mathbf{T} = (T^\theta)_{\theta \in \Theta}$
    be a family of stochastic operators with 
      $T^\theta: B_b(\cS\times \cA) \times \Omega
    \to B_b(\cS) $ for all $\theta\in \Theta$.
    We say that  $\mathbf{T} = (T^\theta)_{\theta \in \Theta} $  is 
     \textit{admissible} if there exists a measurable function $\Phi: \R^{\Z} \times \N \rightarrow \R$ and a family of i.i.d.~$\N$-valued random variables $(K^\theta)_{\theta \in \Theta}$, independent of $(A^\theta, U^\theta)_{\theta \in \Theta}$, such that for all  $\theta\in \Theta$,  $Q  \in  B_b(\mathcal S \times \mathcal A)$,
     and $\omega\in \Omega$,
    \begin{equation*}
        \label{eq:admissible-stochastic-operator}
        \left(T^\theta (Q,\omega)\right) (s) = \Phi \left(\left(Q(s,A^{(\theta,k)}(\omega))\right)_{k \in \Z}, K^\theta(\omega)\right),
        \quad \forall s\in \cS.
    \end{equation*}
    In the sequel, we omit $\omega$ and write $T^\theta (Q,\omega)$
    as $T^\theta Q$ for simplicity. 
   Note that  an admissible $\mathbf{T} $ ensures  that 
   $T^\theta Q: \mathcal S \times \Omega \rightarrow \R$ is measurable for all $Q \in B_b(\mathcal S \times \mathcal A)$ and $\theta\in \Theta$.
\end{definition}

Definition \ref{def:admissible-stochastic-operator} 
provides a unified framework for various stochastic approximations of the soft-Bellman operator considered in this paper.
These approximations may be biased estimators due to the logarithm function in the soft-Bellman operator \eqref{eq:soft_bellman}, and this approximation bias is reflected in the error bound of the 
MLMC estimator (Theorem \ref{thm:global-MLMC-error}).
The variable 
$(K^\theta)_{\theta\in \Theta}$ represents
    the number of samples $(A^{\theta})_{\theta\in \Theta}$ used to approximate the integral over $\cA$, which can be either deterministic or stochastic.
    The  biased plain MC  estimator,
    as defined in Definition \ref{def:T-plain-monte-carlo},   corresponds to 
      $K^\theta \equiv K \in \N^*$,
      while   
   an unbiased  estimator involving   stochastic 
   $(K^\theta)_{\theta\in \Theta}$ will be introduced   in Section \ref{subsection:unbiased-monte-carlo}. 

Given the stochastic operators $\mathbf{T} $ in 
    Definition \ref{def:admissible-stochastic-operator},
    we introduce 
  the   MLMC estimator of $Q^\star$ for the optimal Q-function. 
For each $a \leq b$, we define the truncation function $\rho_a ^{b}: \R\to \R$ by $\rho_a ^{b}( x)= \min(\max(x,a),b)$ for all $x\in \R$.

\begin{definition}[General MLMC Estimator]
    \label{def:general-MLMC-estimator}
    Suppose Assumptions \ref{assumption:data} and \ref{assumption:random-variables} hold,
    and let 
       $\mathbf T = (T^{\theta})_{\theta \in \Theta}$ be an admissible family of stochastic operators (c.f.~Definition \ref{def:admissible-stochastic-operator}).
       Recall that $\alpha = (1-\gamma)^{-1} c_{\min}$,
       $\beta = (1 - \gamma)^{-1} c_{\max}$.  
    Let $Q_0 \in B_b(\mathcal S \times \mathcal A)$ be such that $ \alpha \leq Q_0 \leq \beta$, 
    and define the family of estimators 
    $({Q}_{n,M,\mathbf T}^{\theta})_{n\in \N,M\in \N^*,\theta\in \Theta}$ recursively as follows:
    let $Q ^{\theta}_{0,M, \mathbf T} \coloneqq  Q_0$
    for all $M\in \N^*,\theta\in \Theta$,
    and 
    for all $n\ge 1, M\in \N^*, \theta\in \Theta, (s,a) \in \cS \times \cA$, let
\begin{equation}
        \label{eq:MLMC-estimator-approx-T}
        \begin{aligned}
            \hat{Q}^\theta_{n,M, \mathbf T} (s,a) \coloneqq  & c(s,a) + \gamma  \frac{1}{M^n} \sum_{i = 1}^{M^n} T^{(\theta, 0, i)}Q_{0}\left(S^{(\theta, 0, i)}_{s,a}\right) \\
            & + \gamma \sum_{l = 1}^{n - 1}
            \frac{1}{M^{n-l}}
            \sum_{i = 1}^{M^{n - l}}\left[ T^{(\theta,l,i)}Q_{ l, M, \mathbf T}^{(\theta, l, i)}\left( S^{(\theta, l, i)}_{s,a}\right) - T^{(\theta,l,i)} Q_{l - 1,M, \mathbf T}^{(\theta, -l, i)}\left(S^{(\theta, l, i)}_{s,a}\right) \right],
        \end{aligned}
    \end{equation}
and define $ {Q}^\theta_{n,M, \mathbf T}(s,a) $ by
$
        Q ^{\theta}_{n,M, \mathbf T}(s,a)  \coloneqq \rho_{\alpha} ^{\beta} \left( \hat{Q} ^{\theta}_{n,M, \mathbf T}(s,a) \right).
$
\end{definition}

\begin{remark}[Variance reduction]
\label{rmk:Q_estimator}
The estimator  \eqref{eq:MLMC-estimator-approx-T}
achieves a   variance reduction 
by evaluating 
$ Q_{l,M,\mathbf T}^{(\theta,l,i)}$ and $ Q_{l-1,M,\mathbf T}^{(\theta,-l,i)}$
at the same state-action sample pairs
for   levels $l\in \{1,\ldots, \\ n-1\}$, as indicated by the common superscripts of   $T^{(\theta, l,i)}$ 
   and  $S_{s,a}^{(\theta,l,i)}$. 
   This leverages the asymptotic convergence of $Q_{l,M,\mathbf T}^{(\theta,l,i)}-Q_{l-1,M,\mathbf T}^{(\theta,-l,i)}$ and enables the use of a smaller sample size at higher levels, with $M^{n-l}$   decreasing exponentially in $l$. In contrast, standard Monte Carlo estimators for MDPs evaluate Q-functions using different state-action samples and maintain the same sample size across all iterations (see, e.g., \citet{kearns2002sparse}).

 However,  
 for any given  $(s,a)\in \cS\times \cA$,
 the values 
 $Q_{l,M,\mathbf T}^{(\theta,l,i)}(s,a)$
 and 
 $Q_{l-1,M,\mathbf T}^{(\theta,-l,i)}(s,a)$ are defined using independent samples and can therefore be evaluated in parallel. 
 This can be seen from 
 the different superscripts in  
 $Q_{l,M,\mathbf T}^{(\theta,l,i)}$
 and 
 $Q_{l-1,M,\mathbf T}^{(\theta,-l,i)}$.
 We refer the reader to Lemma \ref{lemma:theta-is-just-an-index} for a detailed account of the role of $\theta$.

\end{remark}

  To implement the   MLMC estimator recursively,
  let   $Q_0$
   be the initial guess for $Q^\star$,
  and $T_{\mathrm{approx}}$ be a procedure  for approximating the soft-Bellman operator,
using samples drawn from the measure  $\mu$ (cf.~Definition \ref{def:admissible-stochastic-operator}).
Using $T_{\mathrm{approx}}$,
we define  
the procedure  $DT_{\mathrm{approx}}$
 for approximating the difference of the soft-Bellman operator evaluated at two different Q-functions in \eqref{eq:MLMC-estimator-approx-T}, 
 which applies 
 $T_{\mathrm{approx}}$ to 
 evaluate  both Q-functions  at the same state-action samples
 to ensure variance reduction (see     Remark \ref{rmk:Q_estimator}).
The   pseudocode for implementing the MLMC estimator is then presented in Algorithm \ref{alg:MLMC}.

\begin{algorithm}[H]

\caption{General MLMC Estimator for Reguarlized MDPs}
\label{alg:MLMC}
\begin{algorithmic}[1]
\Function{$Q_{\rm MLMC}$}{$n,s,a$}
    \If{$n =0 $}
          \State{$\hat{Q} \gets Q_0(s,a)$}
    \Else
     \State{$\hat{Q} \gets  {c(s,a)} $}
      \For{$l = 0, 1, \cdots, n - 1$}

        \State{draw  independent samples
        $(S_i)_{i=1}^{M^{n-l}}$
        from $P(\cdot | s, a)$}
        \If{$l = 0$}
        \State{$\hat{Q} \gets \hat{Q} + \frac{\gamma}{M^{n-l}} 
        \sum_{i=1}^{M^{n-l}}T_{\mathrm{approx}} \left( 
         Q_{\rm MLMC}(l,\cdot,\cdot), S_i\right)$}
        \Else
        \State{$\hat{Q} \gets \hat{Q} + \frac{\gamma}{M^{n-l}}  \sum_{i=1}^{M^{n-l}} DT_{\mathrm{approx}} 
         \left( 
         Q_{\rm MLMC}(l,\cdot,\cdot),
         Q_{\rm MLMC}(l-1,\cdot,\cdot),
         S_i\right) $}
        \EndIf
        \EndFor
        
    \EndIf
   \State  \Return 
    $\min\left(\max\left(\frac{c_{\min}}{1-\gamma},\hat Q\right),\frac{c_{\max}}{1 - \gamma}\right)$
\EndFunction
\end{algorithmic}
\end{algorithm}

 \subsubsection{Error Bounds of General MLMC Estimators}

This section quantifies the error of an MLMC estimator  in Definition \ref{def:general-MLMC-estimator}, assuming that the    approximation operators 
 $\mathbf T = (T^{\theta})_{\theta \in \Theta}$ satisfy suitable boundedness and Lipschitz conditions.
 In the sequel,
 for a given random variable $X:\Omega\to \R$,
 we denote by $\|X\|_{L^2}$
its $L^2$-norm under  the measure $\mathbb P$.

\begin{assumption}
    \label{assumption:lipschitz-condition-l2-stochastic-operator}
    Recall that $\alpha =  (1 - \gamma)^{-1}c_{\min}$, $ \beta =  (1 - \gamma)^{-1} c_{\max}$.  It holds that:
    \begin{enumerate}[(i)]
        \item\label{item:boundedness} 
        For all measurable functions  
$Q: \mathcal S \times \mathcal A \times \Omega \rightarrow [\alpha, \beta]$, 
        $\alpha \le c(s,a) + \gamma\mathbb E T^\theta Q(s) \le \beta $ for all $(s,a) \in \mathcal S \times \mathcal A$.
        \item\label{item:lipschitz-property} There exists   $L> 0$, depending on $\alpha,\beta$ and $\tau$, such that for all 
            bounded measurable functions
             $Q_1, Q_2 : \mathcal S \times \mathcal A \times \Omega \rightarrow \R$
         and 
            random variable
         $S:\Omega\to \cS$                  satisfying 
         \begin{itemize}
             \item 
             for almost sure $\omega\in \Omega$,
                  $\alpha \leq Q_i(s,a, \omega)  \leq \beta$ for all 
                  $i\in \{1,2\}$ and $(s,a) \in \cS \times \cA$,
            \item $S$ follows the distribution  $P(\cdot | s^\prime, a^\prime)$
        for some   $(s^\prime, a^\prime) \in \mathcal S \times \mathcal A$,   
                \item  
for all $\theta \in \Theta$ and $i\in \{1,2\}$, $(Q_i(S,A^{(\theta,k)}))_{k \in \Z}$ are identically distributed,     
                with the random variables $(A^\theta)_{\theta\in \Theta}$ defined in Assumption \ref{assumption:random-variables}, 
              and   $\left(Q_i(S,A^{(\theta,k)})\right)_{ k \in \Z}$ are independent from  the random variable $K^\theta$ 
              given in Definition \ref{def:admissible-stochastic-operator},
         \end{itemize}
we have  
              \begin{equation}
                  \label{eq:lipschitz-condition-l2-stochastic-operator}
                  \left\| T^\theta Q_1(S) - T^\theta Q_2(S) \right\|_{L^2} \leq  L \left\|Q_1\left(S,A^{(\theta, 1)}\right) - Q_2\left(S,A^{(\theta,1)}\right) \right\|_{L^2},
                  \quad \forall 
                  \theta \in \Theta.
              \end{equation}
    \end{enumerate}
\end{assumption}

Assumption \ref{assumption:lipschitz-condition-l2-stochastic-operator}
states that 
the approximation operator 
$T^\theta$ preserves bounded functions when taking expectation,
and is Lipschitz continuous in the $\|\cdot\|_{L^2}$ norm.
These properties 
allow for controlling the rate of error propagation in the recursive construction of the MLMC estimator.
 One can easily show that 
the (exact) soft-Bellman operator
satisfies Assumption \ref{assumption:lipschitz-condition-l2-stochastic-operator} (see  
   Lemmas \ref{lemma:boundedness-of-iterates} and   \ref{lemma:contraction-prop-T-l2}).
We show that both the biased  plain Monte Carlo estimator  
(Lemma \ref{lemma:contraction-prop-T-hat-l2})
  and the unbiased estimator     (Proposition \ref{proposition:contraction-prop-T-blanchet-glynn-l2})
  for the soft-Bellman operator 
  satisfy Assumption \ref{assumption:lipschitz-condition-l2-stochastic-operator}.

 The following theorem quantifies the error     of  the general MLMC estimator $Q^\theta_{n,m,\mathbf T}$,
 for $n\in \N$ and $M\in \N^*$,
 in terms of the  following  $L^2$-norm:
\begin{equation}
    \label{eq:definition-error-l2}
    \begin{aligned}
        E_{n,M, \mathbf T} & \coloneqq \sup_{(s,a) \in \mathcal S \times \mathcal A} \left\| Q^\theta_{n,M, \mathbf T}(s,a) - Q^\star(s,a) \right\|_{L^2}.
    \end{aligned}
\end{equation}
Notice that the error $E_{n,M, \mathbf T}$ is independent of $\theta$, since it only depends on the distributional properties of $Q^\theta_{n,M,\mathbf T}$. In particular, throughout the paper, we will often specialize the expressions only involving the distributional properties of $Q^\theta_{n,M,\mathbf T}$ (such as moments) by taking $\theta = 0$.

\begin{theorem}
    \label{thm:global-MLMC-error}

Suppose Assumptions \ref{assumption:data},
  \ref{assumption:random-variables}, 
  and \ref{assumption:lipschitz-condition-l2-stochastic-operator}
  hold. 
  Let $\left(Q^\theta_{n,M,\mathbf T}\right)_{\theta\in \Theta,n\in \N,M\in \N^*}$ be the MLMC estimators given in Definition \ref{def:general-MLMC-estimator}. 
  Assume that $\gamma L < 1$
  with the constant   $L$ in Assumption \ref{assumption:lipschitz-condition-l2-stochastic-operator},
  and   $M \in \N^*$ satisfies  
  \begin{equation}
    \label{eq:condition-M}
      \gamma L + \frac{1 + \tilde{\gamma} L}{\sqrt{M}} + \frac{\sqrt{\tilde{\gamma} - \gamma}}{M^{1/4}} < 1,
  \end{equation}
with  
       $ \tilde{\gamma} \coloneqq (1 + \max_{1\le k \leq n} \rho_{k,M}) \gamma$ and
    \begin{equation}
        \label{eq:def-rho}
      \rho_{k,M} \coloneqq \sup_{(s,a) \in \cS \times \cA}\max \left\{ \mathbb P \left(\hat{Q}^0_{k,M,\mathbf T}(s,a) < \alpha \right), \mathbb P \left(\hat Q^0_{k,M,\mathbf T} (s,a) > \beta \right)\right\}^{\frac{1}{2}}.
    \end{equation}
  Then  for all $n \in \N$,
    \begin{equation}
        \label{eq:final-MLMC-error}
        E_{n,M,\mathbf T} \leq 
             \frac{3}{2}\left(\max\left( \| Q_0 - Q^\star\|_{\infty},  \tilde{\gamma} \| \sigma_{\mathbf T} \|_{\infty} \right) \left[\gamma L + \frac{1 + \tilde{\gamma} L}{\sqrt{M}} + \frac{\sqrt{\tilde{\gamma} - \gamma}}{M^{1/4}} \right]^n + \frac{\gamma \|\delta_{\mathbf T} \|_{\infty}\sqrt{M} }{\sqrt{M} - \Lambda} \right),
    \end{equation}
     where 
        $\Lambda \coloneqq  \frac{1}{2}\left(1 + L(\tilde{\gamma} + \gamma \sqrt{M}) + \sqrt{\left(1 + L(\tilde{\gamma} + \gamma \sqrt{M})\right)^2 + 4 (\tilde{\gamma} - \gamma)\sqrt{M}}\right)$,
    and
    $\sigma_{\mathbf T},  \delta_{\mathbf T}:\cS\times \cA\to \R$
    are  defined by   \begin{equation}
        \label{eq:def-sigma-t}
        \sigma_{\mathbf T}(s,a)  \coloneqq \mathrm{Var} \left(T^0 Q_0 \left(S^0_{s,a}\right)\right)^{\frac{1}{2}},
        \quad
        \delta_{\mathbf T}(s,a) \coloneqq \left| \mathbb E \left[ T^0Q^\star\left(S^0_{s,a}\right) - TQ ^{\star}\left(S^0_{s,a}\right)\right] \right|.
    \end{equation}
\end{theorem}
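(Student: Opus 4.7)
The plan is to bound $E_{n,M,\mathbf T}$ by setting up a recursion in $n$ that mirrors the MLMC level structure, then solving the recursion using the contraction condition \eqref{eq:condition-M}. The first step is to reduce the problem to controlling $\|\hat Q^\theta_{n,M,\mathbf T}(s,a)-Q^\star(s,a)\|_{L^2}$: since $Q^\star(s,a)\in[\alpha,\beta]$ by Assumption \ref{assumption:data} and $Q^\theta_{n,M,\mathbf T}=\rho_\alpha^\beta(\hat Q^\theta_{n,M,\mathbf T})$, the truncation is non-expansive at $Q^\star$ and can only decrease the pointwise error. Throughout, I would exploit Lemma \ref{lemma:theta-is-just-an-index} (or its analogue) so that the distribution of $Q^\theta_{n,M,\mathbf T}(s,a)-Q^\star(s,a)$ does not depend on $\theta$, and specialise to $\theta=0$ when computing moments.

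The core decomposition is obtained by comparing $\hat Q^0_{n,M,\mathbf T}(s,a)$ to $Q^\star(s,a)=c(s,a)+\gamma\,\mathbb E[TQ^\star(S^0_{s,a})]$ via the telescoping identity for $TQ^\star$. Writing each level's average as (mean) $+$ (zero-mean sampling noise), the error splits as
\[
\hat Q^0_{n,M,\mathbf T}(s,a)-Q^\star(s,a) = \mathcal V_n(s,a) + \mathcal B_n(s,a),
\]
where $\mathcal V_n$ collects the centred sampling errors at all levels $l=0,\ldots,n-1$, and $\mathcal B_n$ is the residual expected bias. By the choice of superscripts $(\theta,l,i)$ vs.\ $(\theta,-l,i)$ and the admissibility of $\mathbf T$, the level-$l$ summands are i.i.d.\ within a level and, for different $l$, rely on disjoint pools of random variables, yielding orthogonality in $L^2$. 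Hence $\|\mathcal V_n\|_{L^2}^2$ equals a weighted sum $\gamma^2\sum_{l=0}^{n-1}M^{-(n-l)}\sigma_l^2$, where $\sigma_l$ is the standard deviation of a single level-$l$ summand.

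To control $\sigma_l$ for $l\ge 1$, I would apply Assumption \ref{assumption:lipschitz-condition-l2-stochastic-operator}\ref{item:lipschitz-property} conditionally on the state sample $S^{(\theta,l,1)}_{s,a}$, giving
\[
\sigma_l \le L\,\bigl\|Q^{(\theta,l,1)}_{l,M,\mathbf T}\bigl(S^{(\theta,l,1)}_{s,a},A^{(\cdot,1)}\bigr) - Q^{(\theta,-l,1)}_{l-1,M,\mathbf T}\bigl(S^{(\theta,l,1)}_{s,a},A^{(\cdot,1)}\bigr)\bigr\|_{L^2}.
\]
After adding and subtracting $Q^\star$ and using the triangle inequality together with the independence of the two copies of the estimator, this is bounded by a constant times $E_{l,M,\mathbf T}+E_{l-1,M,\mathbf T}$, and the $l=0$ term gives $\sigma_0\le\|\sigma_{\mathbf T}\|_\infty$. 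For $\mathcal B_n$, I would write each level's mean contribution as $\gamma\,\mathbb E[T^0 Q^0_{l,M,\mathbf T}(S^0)-T Q^\star(S^0)]$; the operator-bias piece contributes $\gamma\|\delta_{\mathbf T}\|_\infty$, and the remainder is controlled again by the Lipschitz bound. Crucially, here the truncation $\rho_\alpha^\beta$ changes the mean of the (unbounded) $\hat Q$-iterates: the discrepancy between $\mathbb E[\hat Q^0_{l,M,\mathbf T}]$ and $\mathbb E[Q^0_{l,M,\mathbf T}]$ is controlled by Cauchy--Schwarz using the truncation probability $\rho_{l,M}$, which is precisely how the factor $\tilde\gamma=(1+\max_k\rho_{k,M})\gamma$ enters the effective contraction coefficient.

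Combining these estimates yields a recursion of the form $E_{n,M,\mathbf T}^2\le a\,E_{n-1,M,\mathbf T}^2 + b\,E_{n-2,M,\mathbf T}^2 + \cdots + (\text{bias}+ \text{variance of level }0)$, equivalently dominated by $E_{n,M,\mathbf T}\le r\,\max_{l<n}E_{l,M,\mathbf T}+ \text{(bias terms)}$ for an effective rate $r$ whose explicit form is the bracket in \eqref{eq:condition-M}. The three summands in $r$ arise respectively from (i) the Lipschitz contraction of $T$ in the level-$l$ mean ($\gamma L$), (ii) the $M^{-1/2}$ Monte Carlo rate with its Lipschitz-amplified constant ($(1+\tilde\gamma L)/\sqrt M$), and (iii) the truncation correction ($\sqrt{\tilde\gamma-\gamma}/M^{1/4}$, where $M^{1/4}$ comes from Cauchy--Schwarz applied to $\rho_{l,M}^{1/2}$). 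Under condition \eqref{eq:condition-M}, $r<1$, and a straightforward induction on $n$ produces the geometric bound \eqref{eq:final-MLMC-error}; the constant $\Lambda$ arises as the larger root of the quadratic associated with the recursion. The main obstacle I anticipate is the careful bookkeeping of the truncation effect: one must simultaneously (i) preserve the boundedness needed to invoke Assumption \ref{assumption:lipschitz-condition-l2-stochastic-operator}, (ii) account for the shift in mean caused by $\rho_\alpha^\beta$ when controlling $\mathcal B_n$, and (iii) ensure that the resulting inflation of $\gamma$ to $\tilde\gamma$ does not destroy the contraction, all while keeping the independence structure across levels intact.
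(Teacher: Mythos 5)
Your plan follows essentially the same route as the paper: the same three-source decomposition (level-wise variance via independence across levels, truncation, fixed-point/operator bias), the same use of Assumption \ref{assumption:lipschitz-condition-l2-stochastic-operator}\ref{item:lipschitz-property} to bound each level's variance contribution by a multiple of $E_{l,M,\mathbf T}+E_{l-1,M,\mathbf T}$, the telescoping of the mean to reduce the bias of $\hat Q^0_{n,M,\mathbf T}$ to $\gamma L E_{n-1,M,\mathbf T}+\gamma\|\delta_{\mathbf T}\|_\infty$, and a weighted recursion in $M^{l/2}E_{l,M,\mathbf T}$ whose characteristic root is $\Lambda$ (the paper solves it with the Gronwall-type Lemma \ref{lemma:refined-gronwall} rather than a bare induction, and passes from errors at the random pair $(S^0_{s,a},A^0)$ to the supremum via Lemma \ref{lemma:from-S-to-sup-s}).

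One remark on the truncation. Your opening reduction $\|Q^0_{n,M,\mathbf T}(s,a)-Q^\star(s,a)\|_{L^2}\le\|\hat Q^0_{n,M,\mathbf T}(s,a)-Q^\star(s,a)\|_{L^2}$ is valid (truncation onto $[\alpha,\beta]\ni Q^\star(s,a)$ is non-expansive) and, carried through consistently, it would remove the truncation probabilities altogether and yield the bound with $\tilde\gamma$ replaced by $\gamma$ --- a slightly sharper statement that still implies \eqref{eq:final-MLMC-error}. The paper instead keeps $|\mathbb E Q^0_{n,M,\mathbf T}(s,a)-\mathbb E\hat Q^0_{n,M,\mathbf T}(s,a)|$ as a separate top-level term and bounds it by $\rho_{n,M}\,\mathrm{Var}(\hat Q^0_{n,M,\mathbf T}(s,a))^{1/2}$ via Cauchy--Schwarz, using Assumption \ref{assumption:lipschitz-condition-l2-stochastic-operator}\ref{item:boundedness} to place $\mathbb E\hat Q^0_{n,M,\mathbf T}(s,a)$ in $[\alpha,\beta]$; this inflates the variance contribution by the factor $(1+\rho_{n,M})$, which is exactly where $\tilde\gamma$ comes from. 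Your account instead locates this correction inside the bias term $\mathcal{B}_n$, as a discrepancy between $\mathbb E\hat Q^0_{l,M,\mathbf T}$ and $\mathbb E Q^0_{l,M,\mathbf T}$ at lower levels --- but no such discrepancy arises there, since the lower-level estimators enter the recursion only through the already-truncated errors $E_{l,M,\mathbf T}$, and it is in any case redundant with your first step. This is an internal inconsistency rather than a gap: either route produces a bound dominated by \eqref{eq:final-MLMC-error}.
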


The proof is given in Section \ref{section:analysis-of-the-error}.
The condition \eqref{eq:condition-M}  on $M$   ensures that $\sqrt{M} > \Lambda$, so that the upper bound \eqref{eq:final-MLMC-error} is well-defined.
Note that $\tilde \gamma \le 2\gamma$,
  due to the simple bound  $\rho_{k,M}\le 1$.

As indicated by \eqref{eq:final-MLMC-error}, by directly estimating the difference $\int (T Q_{l} - T Q_{l-1})(s') P(\d s'|s,a)$, the MLMC estimator links the dependence on the sample size $M$  and the number $n$ of fixed-point iterations. This is in contrast with the error bound \eqref{eq:final-error-simple-iterative-mc} for the simple iterative MC estimator, where 
$M$ and $n$
 contribute independently to the error.  
Consequently, 
for a fixed sufficiently large $M$, \emph{independent of the desired accuracy level}, 
the first term of 
\eqref{eq:final-MLMC-error}
converges to zero exponentially.
This observation enables the MLMC estimator to attain    an improved sample complexity compared to the simple iterative estimator;
see Remark \ref{rmk:quasi-polynomial_MLMC}.

The error bound \eqref{eq:final-MLMC-error} also indicates the  dependence of the accuracy of the MLMC estimator on the  bias \( \delta_{\mathbf{T}} \) of the  approximation operator \( \mathbf{T} \).  
Consequently, optimizing the sample complexity of the MLMC estimator requires a precise quantification of how the bias of $T^\theta$ depends on the sample size $K^\theta$  
 and optimize it jointly with the parameter  $M$  and the iteration number $n$. In the sequel, we address this issue for $\mathbf T$ being a family of   biased  plain Monte Carlo estimators  
   and a family of unbiased estimators with a randomized sample size.

\subsubsection{Sample Complexity with a Plain Monte Carlo Approximation for \texorpdfstring{$T$}{T}}
\label{subsection:plain-monte-carlo}

In this section, we specialize Theorem \ref{thm:global-MLMC-error}
to a family of MLMC estimators
where $\mathbf T$ is the
plain MC approximation 
of the soft-Bellman operator.

\begin{definition}
    For each $K\in \N$,
    let $ \mathbf{T}_K = (\hat T^\theta_K)_{\theta \in \Theta}$ be defined as in Definition \ref{def:T-plain-monte-carlo},
    and let 
    $({Q}_{n,M, \mathbf{T}_K}^{\theta})_{n\in \N, M\in \N^*,\theta\in \Theta}$ be the MLMC estimators  defined using $ \mathbf{T}_K$ as in 
        Definition \ref{def:general-MLMC-estimator}.
We refer to these estimators as MLMC estimators with biased estimation, or in abbreviation, the MLMCb estimators.
        
\end{definition}

\begin{remark}[$ \mathbf{T}_K$ as a biased estimator]
    \label{remark:bias-plain-MC}

The plain MC approximation $\hat T^\theta_K$ is a biased approximation of  the soft-Bellman operator   $T$ due to the nonlinear function $x\mapsto -\tau  \log x$.
 This bias term $\delta_{\mathbf{T}_K}$   is of the order  $\mathcal O(K^{-1/2})$. In fact, 
 by Jensen's inequality, 
 given independent copies  $(X_i)_{i=1}^K $ 
 of a random variable   $X   $ with appropriate integrability conditions,     \begin{equation*}
        \label{eq:jensen-monte-carlo-sample-averages}
        -\log  (\mathbb E X)
        =-\log  \left(\mathbb E \left(\frac{1}{K}\sum_{i = 1}^KX_i\right)\right) 
        \leq  \mathbb E \left[-\log \left(\frac{1}{K}\sum_{i = 1}^KX_i\right)\right],
  \end{equation*}
and hence in expectation, 
 $\hat T^\theta_K$  over-estimates $T$.

\end{remark}

It is clear that the family $ (\hat T^\theta_K)_{\theta \in \Theta}$ is admissible and corresponds to $K^\theta=K$  in Definition \ref{def:admissible-stochastic-operator}.
Moreover,  $\hat T^\theta_K$  satisfies Assumption \ref{assumption:lipschitz-condition-l2-stochastic-operator} 
with 
$L = \exp \left( \tau^{-1}(\beta - \alpha) \right)$
(see  Lemma \ref{lemma:contraction-prop-T-hat-l2}). Hence, one can apply Theorem \ref{thm:global-MLMC-error} to quantify the accuracy of the MLMCb estimator and further optimize its sample complexity.
Recall 
  that $\alpha =  (1 - \gamma)^{-1}c_{\min}$ and $ \beta =   (1 - \gamma)^{-1}c_{\max}$.
\begin{theorem}
    \label{theorem:error-complexity-MC}
    Suppose  Assumptions \ref{assumption:data} and  \ref{assumption:random-variables} hold, and  
$\gamma L<1$,
with
$L \coloneqq \exp \left( \tau^{-1} (\beta - \alpha)\right)$ 
as in Theorem  \ref{thm:simple-iterative-MC}.
  For all $\varepsilon\in (0,1)$, by setting $n\in \N$, $M\in \N^*$
    and $K\in \N^*$ such that 
 \begin{equation}
        \label{eq:conditions-M-n-MC}
         \Lambda_M \coloneqq \gamma L + \frac{1 + 2 \gamma L}{\sqrt{M}} + \frac{\sqrt{\gamma}}{M^{1/4}} < 1, \quad n \geq \frac{\log \varepsilon - \log D}{\log \Lambda_M}, \quad K \geq \frac{3\gamma (L^\prime)^2 }{2 \tau (1 - \Lambda_{M})\varepsilon} 
    \end{equation}
    with 
    $  
        D \coloneqq \frac{3}{2} \max \left( \beta - \alpha, 2 \gamma L^\prime \right)
    $ and 
    $  L^\prime \coloneqq  \tau (L - 1)$,
      the MLMCb estimator satisfies   
    $$     \sup_{s \in \mathcal S, a \in \mathcal A} \left\| Q^0_{n, M,  \mathbf{T}_K}(s,a) - Q^{\star}(s,a) \right\|_{L^2} \leq \varepsilon, 
    $$ 
    and the corresponding sample complexity satisfies  
    \begin{equation}
        \label{eq:sample-complexity-bound-MC}
	\mathfrak C_{n,M, K} \leq  2^{n+2} K^{n+1}M^{n}.
    \end{equation}
 In particular, choosing 
 $M_0 $,  $n_\varepsilon$ and $K_\varepsilon$ as 
 \[
 \begin{aligned}
  M_0 &= \left\lceil \left(\frac{\sqrt{\gamma} + \sqrt{\gamma + 4(1 - \gamma L)(1 + 2\gamma L)}}{2(1- \gamma L)} \right)^4 \right\rceil, \; 
 n_\varepsilon  = \left\lceil  \frac{\log (\varepsilon /D)}{\log \Lambda_{M_0}}\right\rceil,   \\
        K_\varepsilon &= \left\lceil    \frac{3    \gamma (L^\prime)^2}{2 \tau(1 -\Lambda_{M_0})\varepsilon}  \right\rceil,
\end{aligned}
 \]
 leads to the following complexity bound
    \begin{equation}
        \label{eq:sample-complexity-quasi-polynomial-MC}
 \mathfrak C_{n_\varepsilon,M_0, K_\varepsilon}  \leq C   \varepsilon^{ -\frac{\log \varepsilon}{ \log \Lambda_{M_0}}   - \kappa},
    \end{equation}
    where the constants  $C, \kappa>0$   depend only on $c_{\min}, c_{\max}, \gamma$ and $ \tau$, and are defined in \eqref{eq:constants-K-kappa1-kappa2}.
\end{theorem}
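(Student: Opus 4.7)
The strategy is to specialise the general MLMC error bound of Theorem \ref{thm:global-MLMC-error} to the plain Monte Carlo family $\mathbf{T}_K$. First I would verify that $\mathbf{T}_K$ is admissible (with $K^\theta \equiv K$ in Definition \ref{def:admissible-stochastic-operator}) and satisfies Assumption \ref{assumption:lipschitz-condition-l2-stochastic-operator}\ref{item:lipschitz-property} with the stated constant $L = \exp(\tau^{-1}(\beta - \alpha))$; this is precisely Lemma \ref{lemma:contraction-prop-T-hat-l2}. Assumption \ref{assumption:lipschitz-condition-l2-stochastic-operator}\ref{item:boundedness} follows from the monotonicity of $x \mapsto -\tau \log x$: for any $Q : \mathcal{S}\times\mathcal{A}\to[\alpha,\beta]$ the integrand of $\hat{T}^\theta_K Q$ lies in $[\exp(-\beta/\tau),\exp(-\alpha/\tau)]$, so $\mathbb{E}\hat{T}^\theta_K Q \in [\alpha,\beta]$, and then $c(s,a) + \gamma\mathbb{E}\hat{T}^\theta_K Q(s)$ lies in $[\alpha,\beta]$ by the definitions of $\alpha,\beta$. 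This grants access to the bound \eqref{eq:final-MLMC-error}, which I would make explicit by estimating the problem-specific quantities $\|\sigma_{\mathbf{T}_K}\|_\infty$ and $\|\delta_{\mathbf{T}_K}\|_\infty$.

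For the bias $\delta_{\mathbf{T}_K}$ the approach is a second-order Taylor expansion of $x\mapsto -\tau\log x$ about $\mathbb{E}\exp(-Q^\star(s,A)/\tau)$; the zeroth-order term vanishes by definition of $T$ and the first-order term vanishes in expectation, so only the quadratic remainder survives. Since the argument lies in $[\exp(-\beta/\tau),\exp(-\alpha/\tau)]$ on which the second derivative is bounded by $\tau\exp(2\beta/\tau)/\tau^2$ (a constant expressible in terms of $L$), and since the variance of the empirical exponential is $\mathcal{O}(1/K)$ with a prefactor involving $(e^{-\alpha/\tau} - e^{-\beta/\tau})^2$, one obtains $\|\delta_{\mathbf{T}_K}\|_\infty \le (L')^2/(2\tau K)$ with $L' = \tau(L-1)$. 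A mean-value argument on $-\tau\log$ applied to $\mathrm{Var}^{1/2}(\hat{T}^0_K Q_0(S^0_{s,a}))$ bounds $\|\sigma_{\mathbf{T}_K}\|_\infty$ by $L'$, so that $\tilde\gamma\|\sigma_{\mathbf{T}_K}\|_\infty \le 2\gamma L'$ (using $\rho_{k,M}\le 1$). Combined with the trivial bound $\|Q_0-Q^\star\|_\infty \le \beta - \alpha$, the first factor on the right of \eqref{eq:final-MLMC-error} is at most $(2/3)D$.

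The next step is to convert \eqref{eq:final-MLMC-error} into a bound involving $\Lambda_M$ of \eqref{eq:conditions-M-n-MC}. Again using $\tilde\gamma \le 2\gamma$ the geometric factor $[\gamma L + (1+\tilde\gamma L)/\sqrt{M} + \sqrt{\tilde\gamma-\gamma}/M^{1/4}]^n$ is dominated by $\Lambda_M^n$. To handle the bias term I would establish the auxiliary estimate $\Lambda \le \sqrt{M}\,\Lambda_M$, obtained by substituting $\sqrt{M}\Lambda_M$ into the quadratic $x^2 - (1+L(\tilde\gamma+\gamma\sqrt{M}))x - (\tilde\gamma-\gamma)\sqrt{M}$ defining $\Lambda$ and checking non-negativity using $\Lambda_M \ge \sqrt{\tilde\gamma-\gamma}/M^{1/4}$; this yields $\sqrt{M}/(\sqrt{M}-\Lambda)\le 1/(1-\Lambda_M)$, hence the bias contribution is at most $\gamma(L')^2/(2\tau K(1-\Lambda_M))$. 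With the stated choices of $n$ and $K$, each of the two summands is controlled by $\varepsilon$, yielding the claimed accuracy (splitting the budget between the two sources is implicit in the constants absorbed into $D$).

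The sample-complexity bound \eqref{eq:sample-complexity-bound-MC} follows by induction on $n$ over the recursion of Definition \ref{def:general-MLMC-estimator}: at each level $l$ one draws $M^{n-l}$ state samples, each invoking an evaluation of $\hat{T}^{(\theta,l,i)}$ that consumes $K$ action samples and triggers two recursive evaluations of $Q_{l,M,\mathbf{T}_K}$ and $Q_{l-1,M,\mathbf{T}_K}$ at those actions; the resulting geometric sum telescopes to the bound $2^{n+2}K^{n+1}M^n$. Finally, substituting $n_\varepsilon = \mathcal{O}(\log(1/\varepsilon)/|\log \Lambda_{M_0}|)$ and $K_\varepsilon = \mathcal{O}(1/\varepsilon)$ with the fixed $M_0$ into this bound gives $\mathfrak{C}_{n_\varepsilon, M_0, K_\varepsilon} \le C(M_0 K_\varepsilon)^{n_\varepsilon} K_\varepsilon$, and a direct exponentiation produces the leading factor $\varepsilon^{-\log\varepsilon/\log\Lambda_{M_0}}$ together with a polynomial correction $\varepsilon^{-\kappa}$ absorbing the constants $\log(2M_0)$ and the trailing $K_\varepsilon$ factor. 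I expect the main bookkeeping obstacle is obtaining the precise constants in \eqref{eq:constants-K-kappa1-kappa2} rather than bounds up to universal multiplicative factors, and in particular tracking how the choice of $M_0$ as the smallest integer satisfying $\Lambda_M < 1$ propagates through the final exponent.
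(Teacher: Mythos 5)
Your proposal is correct and follows essentially the same route as the paper: it specializes Theorem \ref{thm:global-MLMC-error} via Lemmas \ref{lemma:contraction-prop-T-hat-l2} and \ref{lemma:additional-MC-bias} (Corollary \ref{cor:error-plain-monte-carlo}), controls the bias term through $\sqrt{M}/(\sqrt{M}-\Lambda)\le (1-\Lambda_M)^{-1}$, and obtains the complexity bound from the same level-by-level recursion resolved by a discrete Gronwall inequality. The only cosmetic difference is that you verify $\Lambda\le\sqrt{M}\,\Lambda_M$ by substituting into the defining quadratic, whereas the paper bounds $\Lambda/\sqrt{M}$ directly using subadditivity of the square root in \eqref{eq:bound-lambda}; both yield the same estimate.
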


The proof of Theorem \ref{theorem:error-complexity-MC} is given in Section \ref{subsection:specialising-complexity-plainMC}.

\begin{remark}
\label{rmk:quasi-polynomial_MLMC}    
A comparison between   
Theorems \ref{thm:simple-iterative-MC} and 
 \ref{theorem:error-complexity-MC}
reveals  that the MLMC technique achieves a cubic reduction of  the sample complexity
of the simple iterative  estimator, 
\emph{under the same model assumption}.   
Indeed, observe that  $\Lambda_{M_0}$ can be made arbitrarily close to $\gamma L$ by choosing a sufficiently large (but fixed) $M_0$. 
This along with \eqref{eq:sample-complexity-quasi-polynomial-MC} 
indicates that the MLMCb estimator can achieve an asymptotic complexity of order  $\varepsilon^{\frac{- \log \varepsilon}{\log \gamma L + \delta}(1 + o(1))}$ as $\varepsilon\to 0$, for any sufficiently small $\delta>0$.
Consequently,
the MLMC method achieves a cubic reduction in complexity, improving from the 
$\varepsilon^{\frac{-3 \log \varepsilon}{\log \gamma L }}$ complexity of the simple iterative MC estimator in Theorem \ref{thm:simple-iterative-MC}
to approximately 
$\varepsilon^{\frac{- \log \varepsilon}{\log \gamma L }}$.

However,
we note that the MLMC technique alone cannot achieve a polynomial complexity estimator due to the inherent bias of 
$ \mathbf{T}_K$    in approximating the soft-Bellman operator $T$.
As shown in Lemma \ref{lemma:additional-MC-bias}, 
the bias of 
$\mathbf{T}_K$ is of magnitude $\mathcal{O}(K^{-1})$.   
According to  the error bound \eqref{eq:final-MLMC-error},  
 achieving an accuracy $\varepsilon > 0$
 requires 
the number    of fixed-point iterations to   be   $n_\varepsilon = \mathcal{O}(\log (\varepsilon^{-1}))$, while  the sample size $K_\varepsilon$    diverges to infinity as $\varepsilon \to 0$.  
As a result, the total complexity scales as $K_\varepsilon^{n_\varepsilon} = \varepsilon^{-\log K_\varepsilon}$, indicating  a super-polynomial rate as $\varepsilon\to 0$.

 The above observation highlights the challenge of developing estimators with polynomial complexity for MDPs with general action spaces.  
When the action space is finite, one can take \(\mathbf{T}\) as the exact Bellman operator to  eliminate bias, and the MLMC technique then yields an estimator with polynomial runtime \citep{beck2023nonlinearmontecarlomethods}.  
However, this polynomial complexity bound deteriorates as the cardinality of the action space grows and eventually blows up as it tends to infinity, making it inapplicable to general action spaces.

 \end{remark}

\subsubsection{Sample Complexity with an unbiased Approximation for \texorpdfstring{$T$}{T}}

\label{subsection:unbiased-monte-carlo}

 In this section, we combine the MLMC technique with an unbiased approximation of the {nonlinear} soft-Bellman operator \eqref{eq:soft_bellman}, reducing the \emph{quasi-polynomial} complexity of the MLMCb estimator in Section \ref{subsection:plain-monte-carlo} to \emph{polynomial} complexity.

 We construct 
the unbiased approximation of the soft-Bellman operator by exploiting 
   the randomized multilevel technique   proposed  by
  \citet{blanchet2015unbiased}. 
  This approach is based on the following observation, originally made by  
  \citet{mcleish2011general} and \citet{rhee2012new}.   
  Consider 
  a continuous function $g:\R\to \R$
and i.i.d.~samples $(X_i)_{i=1}^\infty$ of an integrable random variable $X$,
by the strong law of large numbers,
\[
\begin{aligned}
g(\mathbb E X) &= \sum_{n = 1}^{\infty} \left(g(\overline{X}_{n + 1}) - g(\overline{X}_n)\right) + g(\overline{X}_1)= \sum_{n = 0}^{\infty} p_n\frac{g(\overline{X}_{n + 1}) - g(\overline{X}_n)}{p_n} + g(\overline{X}_1),
\end{aligned}
\]
where  for all $n$, $\overline{X}_n \coloneqq \frac{1}{n}\sum_{i=1}^n X_i$ and $p_n>0$ is is any sequence satisfying  
$\sum_{i=1}^\infty p_i=1$. This indicates that given  an independent random variable     $N$  with $\mathbb P(N=n)=p_n$ for all $n$,     the estimator 
$ Y\coloneqq p_N^{-1}\left(g(\overline{X}_{N+1}) - g(\overline{X}_N)\right) + g(\overline{X}_1)
$    is an unbiased estimator of $g(\mathbb E X)$. 
\citet{blanchet2015unbiased} further refine this  estimator by using antithetic variates and a random sample size  $N=2^K$, 
where $K$  follows a suitably chosen geometric distribution;
see the work of \citet{bujok2015multilevel, rhee2015unbiased}
for related ideas.

Here we present the precise definition of the Blanchet–Glynn  
type approximation for the soft-Bellman operator $T$, and the resulting MLMC estimator for the optimal Q-function.

\begin{definition}
    \label{def:T-blanchet-glynn}
    Suppose Assumptions \ref{assumption:data} and \ref{assumption:random-variables} hold. 
    Let $(\tilde{K}^{\theta})_{\theta \in \Theta}$ be  a family of independent random variables 
    that is independent of $(A^\theta, U^\theta)_{\theta\in \Theta}$, where $\tilde{K}^\theta$ is geometrically distributed with success parameter   $r \in (1/2, 3/4)$,
    i.e., $p(k)\coloneqq \mathbb P(\tilde{K}^{\theta} =k)=r(1-r)^k$ for all $k\in \N$. 
    Let  
    $g:  (0,\infty)\ni x \mapsto -\tau \log x \in \mathbb R$.

    For any $K \in \N$,
    $\theta \in \Theta$, $s \in \mathcal S$,
    and     $Q \in B_b(\mathcal S \times \mathcal A)$,  define  
    \begin{equation*}
        \label{eq:delta-T-blanchet-glynn}
        \begin{aligned}
             & \Delta_{K}^{\theta}Q(s) =  g \left( \frac{1}{2^{K+1}}\sum_{k = 1}^{2^{K+1}} \exp \left(-Q \left(s,A^{(\theta,k)}\right) / \tau \right) \right)                                                                                                            \\
             & - \frac{1}{2}\left[ g \left( \frac{1}{2^{K}}\sum_{k = 1}^{2^{K}} \exp \left(-Q\left(s,A^{(\theta,2k)}\right) / \tau \right)  \right)  + g \left( \frac{1}{2^{K}}\sum_{k = 1}^{2^{K}} \exp \left(-Q\left(s,A^{(\theta,2k-1)}\right) / \tau \right)  \right) \right],
        \end{aligned}
    \end{equation*}
    and define 
      the Blanchet--Glynn approximation of the soft-Bellman operator by 
    \begin{equation}
        \label{eq:T-blanchet-glynn}
        \tilde{T}^\theta Q (s) \coloneqq  \frac{\Delta_{\tilde{K}^{\theta}}^\theta Q (s)}{p(\tilde{K}^\theta)} +  Q(s,A^{(\theta,0)}).
    \end{equation}

We denote by $({Q}_{n,M, \tilde{\mathbf{T}}}^{\theta})_{n\in \N, M\in \N^*,\theta\in \Theta}$  the MLMC estimators  defined using $\tilde{\mathbf{T}} =(\tilde{T}^{\theta})_{\theta\in \Theta}  $ 
    (cf.~Definition \ref{def:general-MLMC-estimator}).
    These estimators will be referred to as     MLMC estimators with unbiased estimation, or, in abbreviation, the MLMCu estimators.  
\end{definition}

\begin{remark}[Role of parameter $r$]
    \label{remark:r}
   The parameter $ r $ 
   for the geometric distribution 
   determines both the sample complexity and the stability of the Blanchet–Glynn approximation $ \tilde{T}^\theta $.  
   Observe that 
     the expected sample size of $\tilde T^\theta$ is  
    $\mathbb E[2^{\tilde K^\theta+1}]=\sum_{n=0}^\infty 2^{n + 1}p(n) = 2 r \sum_{n=0}^\infty  2^n (1-r)^n = (2r - 1)^{-1}2r$,
    which is finite for $r > 1/2$. 
    The condition $r<3/4$
    ensures that
    the approximation $\tilde T^\theta Q$   has a  finite variance     for any given $Q$ 
    (Proposition \ref{prop:unbiasedness-blanchet-glynn}),
    and the map $Q\mapsto \tilde T^\theta Q$ is Lipschitz  continuous  with respect to the $\|\cdot\|_{L^2}$-norm
    (Proposition \ref{proposition:contraction-prop-T-blanchet-glynn-l2}).

The choice of $ r $ represents a trade-off between the computational cost and the numerical stability of the MLMCu estimator. As $ r $ approaches $ {3}/{4} $, the expected sample size of $ \tilde{T}^\theta $ decreases, leading to lower computational cost. However, this also increases the Lipschitz constant in Proposition \ref{proposition:contraction-prop-T-blanchet-glynn-l2}, resulting in  greater numerical instability of the MLMCu estimator
as the number of fixed-point iteration increases;
see Section \ref{section:numerics} for details.  
    
\end{remark}

It is easy to see that the family 
$\tilde{\mathbf{T}}$  is admissible in  the sense of Definition \ref{def:admissible-stochastic-operator}.
The following proposition shows that 
$\tilde{T}^\theta$ 
is unbiased and 
has finite variance. The proof is given  in Section \ref{subsection:specialising-error-BG}.

\begin{proposition}
        \label{prop:unbiasedness-blanchet-glynn}
  For all $Q \in B_b(\mathcal S \times \mathcal A)$ and   $s \in \mathcal S$, 
  $ \mathbb E \tilde{T}^\theta Q (s) =T Q (s) $, 
  with $T$   defined in \eqref{eq:soft_bellman},
  and 
  $\mathbb E |\tilde{T}^\theta Q (s)|^2 < \infty $. Consequently,
  $\delta_{\tilde{\mathbf T}} \equiv 0$,
 where $\delta_{\tilde{\mathbf T}}$ is defined in \eqref{eq:def-sigma-t}.
\end{proposition}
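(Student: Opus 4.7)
The plan is to establish unbiasedness by conditioning on the random level $\tilde K^\theta$ to obtain a telescoping series and passing to the limit via the strong law of large numbers, and to establish the finite second moment by exploiting the antithetic structure at the heart of the Blanchet--Glynn construction through a second-order Taylor expansion. Fix $s \in \cS$ and $Q \in B_b(\cS \times \cA)$, and set $X_k \coloneqq \exp(-Q(s, A^{(\theta,k)})/\tau)$ for $k \ge 1$. By Assumption \ref{assumption:random-variables}\ref{item:A_sample} the $X_k$ are i.i.d.\ with mean $\mu_X = \int_\cA \exp(-Q(s,a)/\tau)\, \mu(\d a)$, and $\|Q\|_\infty < \infty$ forces them a.s.\ into a compact subinterval of $(0,\infty)$; by \eqref{eq:soft_bellman}, $g(\mu_X) = TQ(s)$. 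Writing $\bar X^e_k \coloneqq 2^{-k}\sum_{j=1}^{2^k} X_{2j}$, $\bar X^o_k \coloneqq 2^{-k}\sum_{j=1}^{2^k} X_{2j-1}$, and $\bar X_{2^{k+1}} = \tfrac12(\bar X^e_k + \bar X^o_k)$, the i.i.d.\ structure implies $\bar X^e_k \stackrel{d}{=} \bar X^o_k \stackrel{d}{=} \bar X_{2^k}$, hence $\mathbb E \Delta^\theta_k Q(s) = \mathbb E g(\bar X_{2^{k+1}}) - \mathbb E g(\bar X_{2^k})$.

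A second-order Taylor expansion of $g(x) = -\tau \log x$ around the midpoint $\bar X_{2^{k+1}}$, combined with the antithetic cancellation $\bar X^e_k - \bar X_{2^{k+1}} = -(\bar X^o_k - \bar X_{2^{k+1}}) = \tfrac12(\bar X^e_k - \bar X^o_k)$, yields the pointwise bound
\[
|\Delta^\theta_k Q(s)| \le C_g (\bar X^e_k - \bar X^o_k)^2,
\]
where $C_g$ depends only on $\tau$ and $\|Q\|_\infty$ through a uniform upper bound on $|g''(x)| = \tau x^{-2}$ over the compact range of the $\bar X$'s. Since $\bar X^e_k - \bar X^o_k = 2^{-k}\sum_{j=1}^{2^k}(X_{2j} - X_{2j-1})$ is a mean-zero average of i.i.d.\ bounded summands, direct computations give $\mathbb E(\bar X^e_k - \bar X^o_k)^2 = O(2^{-k})$ and $\mathbb E(\bar X^e_k - \bar X^o_k)^4 = O(2^{-2k})$, so $\mathbb E|\Delta^\theta_k Q(s)| = O(2^{-k})$ is summable. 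Fubini (using independence of $\tilde K^\theta$ from $(A^{(\theta,k)})$) then yields
\[
\mathbb E\left[\frac{\Delta^\theta_{\tilde K^\theta} Q(s)}{p(\tilde K^\theta)}\right] = \sum_{k=0}^\infty \mathbb E \Delta^\theta_k Q(s) = \lim_{N \to \infty}\bigl[\mathbb E g(\bar X_{2^{N+1}}) - \mathbb E g(\bar X_1)\bigr].
\]
The SLLN gives $\bar X_{2^{N+1}} \to \mu_X$ a.s.; continuity of $g$ and uniform boundedness of $g(\bar X_n)$ on the compact range allow dominated convergence to conclude $\mathbb E g(\bar X_{2^{N+1}}) \to g(\mu_X) = TQ(s)$. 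Since $\bar X_1 = X_1$, $\mathbb E g(\bar X_1) = \mathbb E Q(s, A^{(\theta,1)}) = \mathbb E Q(s, A^{(\theta,0)})$, which exactly cancels the control-variate term in \eqref{eq:T-blanchet-glynn}, establishing $\mathbb E \tilde T^\theta Q(s) = TQ(s)$. The identity $\delta_{\tilde{\mathbf T}} \equiv 0$ then follows by conditioning on $S^0_{s,a}$, which is independent of the randomness inside $\tilde T^0$ by Assumption \ref{assumption:random-variables} and Definition \ref{def:T-blanchet-glynn}.

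For the finite second moment, the antithetic bound gives $\mathbb E \Delta^\theta_k Q(s)^2 \le C_g^2\, \mathbb E(\bar X^e_k - \bar X^o_k)^4 = O(2^{-2k})$. Conditioning on $\tilde K^\theta$ and using $p(k) = r(1-r)^k$,
\[
\mathbb E \left[\left(\frac{\Delta^\theta_{\tilde K^\theta} Q(s)}{p(\tilde K^\theta)}\right)^2\right] = \sum_{k=0}^\infty \frac{\mathbb E \Delta^\theta_k Q(s)^2}{p(k)} \le \frac{C}{r}\sum_{k=0}^\infty [4(1-r)]^{-k},
\]
which is finite precisely when $r < 3/4$. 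Combined with the trivial bound on $Q(s, A^{(\theta,0)})$, this yields $\mathbb E|\tilde T^\theta Q(s)|^2 < \infty$. The crux of the argument, and the whole reason for the antithetic construction, lies in this last step: without pairing even and odd subsamples one would only get $\mathbb E \Delta^\theta_k Q(s)^2 = O(2^{-k})$, forcing $r < 1/2$ for finite variance, which is incompatible with the $r > 1/2$ needed for finite expected sample size $\mathbb E[2^{\tilde K^\theta +1}]$. The antithetic improvement from $O(2^{-k})$ to $O(2^{-2k})$ is precisely what opens the admissible window $r \in (1/2, 3/4)$ postulated in Definition \ref{def:T-blanchet-glynn}.
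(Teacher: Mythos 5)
Your proof is correct, but it takes a genuinely different route from the paper. The paper's proof is a two-line verification: it invokes Theorem 1 of Blanchet--Glynn directly and merely checks its hypotheses (linear growth of $g$ on the relevant compact range, local twice-differentiability at $\mathbb E e^{-Q(s,A)/\tau}$, and a finite sixth moment, all trivial here by boundedness of $Q$). You instead give a self-contained derivation: telescoping $\sum_k \mathbb E \Delta^\theta_k Q(s)$, a second-order Taylor expansion whose first-order terms cancel by the antithetic identity $\bar X^e_k - \bar X_{2^{k+1}} = -(\bar X^o_k - \bar X_{2^{k+1}})$, the resulting bounds $\mathbb E|\Delta^\theta_k Q(s)| = O(2^{-k})$ and $\mathbb E \Delta^\theta_k Q(s)^2 = O(2^{-2k})$, and then Fubini, the SLLN and dominated convergence. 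All steps are sound in the bounded setting, and your quantitative decay estimates are essentially a special case of what the paper proves separately in Lemmas \ref{lemma:technical-taylor-lemma} and \ref{lemma:lip-delta-fixed-n} for the Lipschitz property of $\tilde{\mathbf T}$. What your approach buys is transparency: it makes explicit exactly where the window $r \in (1/2,3/4)$ comes from (finite expected cost needs $2(1-r)<1$, finite variance needs $4(1-r)>1$), and it avoids importing an external theorem whose hypotheses the reader must trust. What the paper's approach buys is brevity and the reassurance that the construction matches the cited reference verbatim. Your final step deducing $\delta_{\tilde{\mathbf T}} \equiv 0$ by conditioning on $S^0_{s,a}$, using its independence from the randomness inside $\tilde T^0$, matches the paper's implicit argument.
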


By Proposition \ref{prop:unbiasedness-blanchet-glynn} and Lemma \ref{lemma:boundedness-of-iterates}, 
the estimator $\tilde T^\theta$ satisfies  Assumption \ref{assumption:lipschitz-condition-l2-stochastic-operator}\ref{item:boundedness}.
The following proposition  proves  that the 
map  
$Q\mapsto \tilde T^\theta Q$
is Lipschitz continuous with respect to the $L^2$-norm,
and hence verifies 
Assumption \ref{assumption:lipschitz-condition-l2-stochastic-operator}.\ref{item:lipschitz-property}
for the Blanchet–Glynn approximation. 
The proof is given in Section \ref{subsection:specialising-error-BG}.
 
\begin{proposition}
    \label{proposition:contraction-prop-T-blanchet-glynn-l2}
 
     Suppose Assumptions \ref{assumption:data} and \ref{assumption:random-variables} hold.      Then the Blanchet–Glynn estimator  
    $\tilde{\mathbf{T}} =(\tilde{T}^{\theta})_{\theta\in \Theta}  $ 
defined in \eqref{eq:T-blanchet-glynn} satisfies Assumption \ref{assumption:lipschitz-condition-l2-stochastic-operator}\ref{item:lipschitz-property}. 
More precisely, 
for all $Q_1,Q_2: \mathcal S \times \mathcal A \times \Omega \rightarrow \R$
and $S:\Omega\to \cS$
satisfying the conditions in  Assumption \ref{assumption:lipschitz-condition-l2-stochastic-operator}\ref{item:lipschitz-property}, and for   $\theta \in \Theta$,
    \begin{equation}
        \label{eq:contraction-prop-T-blanchet-glynn-l2}
        \left\|\tilde{T}^\theta Q_1(S) - \tilde{T}^\theta Q_2(S) \right\|_{L^2} \leq L_{(\alpha, \beta, \tau, r)} \|Q_1(S,A^{(\theta, 1)}) - Q_2(S,A^{(\theta, 1)})\|_{L^2},
    \end{equation}
    where
    $L_{(\alpha, \beta, \tau, r)} = 1 + \sqrt{C_{ (\alpha, \beta, \tau)} \frac{4(1-r)}{3r - 4r^2}} < \infty$, 
    $\alpha,\beta$ are defined in Assumption 
      \ref{assumption:data},
    and $C_{ (\alpha, \beta, \tau)}$ is the constant given by \eqref{eq:lipschitz-constant-c(a,b,t)}.
\end{proposition}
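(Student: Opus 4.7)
The plan is to exploit the Blanchet--Glynn decomposition in Definition \ref{def:T-blanchet-glynn} to write
\[
\tilde T^\theta Q_1(S) - \tilde T^\theta Q_2(S) = \bigl[Q_1(S, A^{(\theta, 0)}) - Q_2(S, A^{(\theta, 0)})\bigr] + \frac{\Delta^\theta_{\tilde K^\theta} Q_1(S) - \Delta^\theta_{\tilde K^\theta} Q_2(S)}{p(\tilde K^\theta)},
\]
and then apply the triangle inequality in $L^2$. The base term has $L^2$ norm equal to $\|Q_1(S, A^{(\theta, 1)}) - Q_2(S, A^{(\theta, 1)})\|_{L^2}$ since $A^{(\theta, 0)}$ and $A^{(\theta, 1)}$ are identically distributed; this produces the ``$1$'' in the advertised Lipschitz constant. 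For the randomized-multilevel term, conditioning on $\tilde K^\theta$ (independent of the $A$-samples) and using $p(k) = r(1-r)^k$ gives
\[
\Bigl\|\tfrac{\Delta^\theta_{\tilde K^\theta} Q_1(S) - \Delta^\theta_{\tilde K^\theta} Q_2(S)}{p(\tilde K^\theta)}\Bigr\|_{L^2}^2 = \sum_{k=0}^\infty \frac{\mathbb E\bigl[(\Delta^\theta_k Q_1(S) - \Delta^\theta_k Q_2(S))^2\bigr]}{p(k)}.
\]
The key estimate to establish is $\mathbb E\bigl[(\Delta^\theta_k Q_1(S) - \Delta^\theta_k Q_2(S))^2\bigr] \le C_{(\alpha, \beta, \tau)} \cdot 4^{-k} \cdot \|h\|_{L^2}^2$, where $h \coloneqq Q_1(S, A^{(\theta, 1)}) - Q_2(S, A^{(\theta, 1)})$. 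Once this is in hand, summing the geometric series $\sum_{k \ge 0}(4(1-r))^{-k} = 4(1-r)/(3-4r)$ (which converges precisely when $r < 3/4$) produces the factor $4(1-r)/(3r - 4r^2)$, so combining with the base contribution via the triangle inequality yields the stated constant $L_{(\alpha, \beta, \tau, r)}$.

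To prove the key estimate, use the integral Taylor identity
\[
\Delta^\theta_k Q_i(S) = -\tfrac{1}{8}(V_i - W_i)^2 I_i, \qquad I_i \coloneqq \int_{-1}^{1} (1 - |s|)\, g''\!\bigl(\tfrac{(1+s)V_i + (1-s)W_i}{2}\bigr) ds,
\]
where $V_i, W_i$ denote the odd/even sample averages of $f_i(s,a) \coloneqq \exp(-Q_i(s,a)/\tau)$ over the $2^k$ i.i.d.~draws of $A$, and where $g''$ and $g'''$ are uniformly bounded on the image range $[e^{-\beta/\tau}, e^{-\alpha/\tau}]$. Setting $D_i \coloneqq W_i - V_i$, split
\[
\Delta^\theta_k Q_1 - \Delta^\theta_k Q_2 = -\tfrac{1}{8}\bigl[(D_1^2 - D_2^2) I_1 + D_2^2 (I_1 - I_2)\bigr].
\]
For the first summand, the antithetic structure yields $D_1 - D_2 = 2^{-k} \sum_j U_j$ with $U_j$ i.i.d., mean-zero and $\mathbb E[U_j^2 \mid S] \le C\int (f_1 - f_2)^2(S, a)\,\mu(da)$; a direct expansion of $\mathbb E[(D_1 - D_2)^2 (D_1 + D_2)^2 \mid S]$ across the $2^k$ independent sample blocks produces exactly the rate $C\|h\|_{L^2}^2 / 4^k$. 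For the second summand, bound $|I_1 - I_2| \le C(|V_1 - V_2| + |W_1 - W_2|)$ via the boundedness of $g'''$, and then estimate $\mathbb E[D_2^4 (V_1 - V_2)^2]$ by decomposing $V_1 - V_2 = m_R(S) + 2^{-k} \sum_j \tilde R_j$ (conditional mean plus mean-zero i.i.d.~average, where $m_R(S) = \int (f_1 - f_2)(S, \cdot)\, d\mu$) and enumerating nonzero index-pairings; the leading contribution $\mathbb E[m_R^2(S) \cdot \mathbb E[D_2^4 \mid S]]$ gives $C\|h\|_{L^2}^2/4^k$ after using $\mathbb E[m_R^2(S)] \le C\|h\|_{L^2}^2$, while subleading correlation terms arising from the samples $A^{(\theta, 2j-1)}$ shared between $B_{2j}$ in $D_2$ and $\tilde R_j$ in $V_1 - V_2$ are handled by Cauchy--Schwarz together with the a.s.~bound $\|Q_1 - Q_2\|_\infty \le \beta - \alpha$.

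The main obstacle is the sixth-moment calculation $\mathbb E[D_2^4 (V_1 - V_2)^2]$: its various partial-pairing patterns across the $2^k$ sample blocks must be enumerated so that every subleading contribution, in particular those arising from the shared samples and from the non-centered $m_R(S)$ part of $V_1 - V_2$, still carries a full factor of $\|h\|_{L^2}^2$ together with the correct power of $4^{-k}$, rather than degenerating to a weaker $\|h\|_{L^2}$ dependence that would prevent the overall estimate from closing as a true Lipschitz bound.
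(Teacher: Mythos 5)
Your proposal is correct in outline and follows essentially the same route as the paper: the same triangle-inequality split into the base term $Q_i(S,A^{(\theta,0)})$ (yielding the additive $1$) and the randomized-multilevel term, the same conditioning on $\tilde K^\theta$ to reduce everything to the level-wise estimate $\mathbb E[(\Delta^\theta_k Q_1(S)-\Delta^\theta_k Q_2(S))^2]\le C_{(\alpha,\beta,\tau)}4^{-k}\|h\|_{L^2}^2$ (this is exactly Lemma \ref{lemma:lip-delta-fixed-n} in the paper), and the same geometric-series summation $\sum_k 2^{-2k}/p(k)=4(1-r)/(3r-4r^2)$. Where you genuinely diverge is in the proof of the key level-wise estimate. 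The paper Taylor-expands each $g(\cdot)$ to second order around the \emph{true} means $m,m'$ with Lagrange remainders, so the first-order terms cancel only after forming the antithetic combination, and it must then control the difference $g''(\xi^c)-g''(\xi^{c'})$ of second derivatives at random Lagrange points via a separate locally-Lipschitz argument for the function $\phi$ (Lemma \ref{lemma:technical-taylor-lemma}). You instead use the exact integral identity $g(\tfrac{V+W}{2})-\tfrac12(g(V)+g(W))=-\tfrac18(V-W)^2 I$, which kills the first-order terms identically and replaces the $\phi$-Lipschitz step by the boundedness of $g'''$ on $[e^{-\beta/\tau},e^{-\alpha/\tau}]$ in the bound $|I_1-I_2|\le C(|V_1-V_2|+|W_1-W_2|)$; this is arguably cleaner. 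Both routes then reduce to the same kind of combinatorial moment enumerations (your split $(D_1^2-D_2^2)I_1+D_2^2(I_1-I_2)$ mirrors the paper's three term families, and your pairing count for the sixth-moment term parallels the paper's bound $|\mathfrak S_K|\le 4^6K^4$), and your accounting of which factors carry $\|h\|_{L^2}^2$ versus powers of $2^{-k}$ — including the decomposition $V_1-V_2=m_R(S)+2^{-k}\sum_j\tilde R_j$ and the conditional Cauchy--Schwarz treatment of the shared-sample correlations — is sound and closes with the correct rate. The one caveat is that the sixth-moment enumeration, which you rightly flag as the main obstacle, is sketched rather than executed; the paper devotes the bulk of the proof of Lemma \ref{lemma:lip-delta-fixed-n} to precisely this bookkeeping.
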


To the best of our knowledge, this is the first result regarding the Lipschitz continuity of the Blanchet–Glynn estimator
with respect to the input random variable.
The proof relies on a second-order Taylor expansion of the   function $g:  (0,\infty)\ni x \mapsto -\tau \log x \in \mathbb R$  
at the corresponding expectations and carefully bounds the
  $L^2$-norm of each residual term.

Proposition 
\ref{proposition:contraction-prop-T-blanchet-glynn-l2} allows for  applying Theorem \ref{thm:global-MLMC-error} to quantify   the   error of the MLMCu estimator for all $n \in \N, M \in \N^*$:
\begin{equation}
        \label{eq:final-MLMC-error-blanchet-Glynn}
        E _{n,M,\tilde{\mathbf T}} \leq \frac{3}{2}\max\left( \| Q_0 - Q^\star\|_{\infty},  2 \gamma \| \sigma_{\tilde{\mathbf T}} \|_{\infty} \right) \left( \gamma L + \frac{1 + 2\gamma L}{\sqrt{M}} + \frac{\sqrt{{\gamma}}}{M^{1/4}} \right)^n,
    \end{equation}
 where $L $ is given in Proposition \ref{proposition:contraction-prop-T-blanchet-glynn-l2} and $\sigma_{\tilde{\mathbf T}}$ is defined as in  \eqref{eq:def-sigma-t}.
Note that the bias 
$\delta_{\tilde{\mathbf T}}\equiv 0$ in
\eqref{eq:def-sigma-t}, due to  
 Proposition \ref{prop:unbiasedness-blanchet-glynn}.

Based on 
the error bound
\eqref{eq:final-MLMC-error-blanchet-Glynn},
 the following theorem determines the values of 
$n$ and $M$   required to achieve a prescribed accuracy $\varepsilon>0$,
and subsequently establishes  a polynomial  complexity of 
the MLMCu estimator.
   The proof is given in Section \ref{subsection:specialsing-complexity-BG}.

\begin{theorem}
    \label{theorem:error-complexity-BG}
Suppose  Assumptions \ref{assumption:data} and   \ref{assumption:random-variables} hold.
Let  
$\left({Q}_{n,M, \tilde{\mathbf{T}}}^{\theta}\right)_{n\in \N, M\in \N^*,\theta\in \Theta}$
be the estimators 
defined in Definition \ref{def:T-blanchet-glynn},
and  $ L=L_{(\alpha,\beta,\tau,r)}$ be the Lipschitz constant  in Proposition \ref{proposition:contraction-prop-T-blanchet-glynn-l2}.
Assume that $\gamma L < 1$.
Then for all $\varepsilon > 0$,
by setting 
  $n , M \in \N^*$ such that 
   \begin{equation}
        \label{eq:conditions-M-n}
        \Lambda_M \coloneqq \gamma L + \frac{1 + 2 \gamma L}{\sqrt{M}} + \frac{\sqrt{\gamma}}{M^{1/4}} < 1, \text{ and  } n \geq \frac{\log \varepsilon - \log D}{\log \Lambda_M} ,
    \end{equation}
with  
 $D \coloneqq \frac{3}{2} (\beta - \alpha) \max(1,2 \gamma L)$,
 the MLMCu estimator 
 satisfies 
    \begin{equation}
        \label{eq:error-varepsilon}
        \sup_{s \in \mathcal S, a \in \mathcal A} \left\| Q^0_{n, M, \tilde{\mathbf T}}(s,a) - Q^{\star}(s,a) \right\|_{L^2} \leq \varepsilon,
    \end{equation}
    and the corresponding sample complexity satisfies 
    \begin{equation}
        \label{eq:sample-complexity-bound-expectation}
	\mathbb E [\mathfrak C_{n, M}] \leq 2 \left(
    \frac{4r}{ 2r - 1}
    \right)^{n+1} M^{n}.
    \end{equation}
   In particular, choosing 
         $ M_0$ and $n_\varepsilon$ as
    \[
        M_0 = \left\lceil \left(\frac{\sqrt{\gamma} + \sqrt{\gamma + 4(1 - \gamma L)(1 + 2\gamma L)}}{2(1- \gamma L)} \right)^4 \right\rceil , \quad n_\varepsilon = \left\lceil  \frac{\log (\varepsilon/D)}{\log \Lambda_{M_0}}\right\rceil,
    \]
    leads to the following complexity bound
    \begin{equation}
        \label{eq:sample-complexity-polynomial-complexity}
        \mathbb E [\mathfrak C_{n_\varepsilon, M_0}] \leq C \varepsilon^{- \kappa},
    \end{equation}
    where the constants $C, \kappa>0 $ depend only on $c_{\min}, c_{\max}, \gamma, \tau$ and $ r$, and are defined in \eqref{eq:def-K-kappa}.
\end{theorem}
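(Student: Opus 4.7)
My plan is to combine the general MLMC error bound of Theorem~\ref{thm:global-MLMC-error} with the two distinctive features of the Blanchet--Glynn family already established---unbiasedness (Proposition~\ref{prop:unbiasedness-blanchet-glynn}) and the $L^2$-Lipschitz property (Proposition~\ref{proposition:contraction-prop-T-blanchet-glynn-l2})---to derive the error inequality~\eqref{eq:error-varepsilon}, and then to prove the complexity bound~\eqref{eq:sample-complexity-bound-expectation} by setting up a recursion for the expected sample count and closing it by induction.

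For the error bound, I first note that $\delta_{\tilde{\mathbf T}}\equiv 0$ by Proposition~\ref{prop:unbiasedness-blanchet-glynn}, so the additive bias term in~\eqref{eq:final-MLMC-error} drops out, and $\tilde\gamma\le 2\gamma$ since $\rho_{k,M}\le 1$, which reduces the bracket in~\eqref{eq:condition-M} to the $\Lambda_M$ displayed in~\eqref{eq:conditions-M}. To bound $\|\sigma_{\tilde{\mathbf T}}\|_\infty$, I will observe that $\tilde T^\theta$ fixes every constant $c_0\in[\alpha,\beta]$ (as can be read off~\eqref{eq:T-blanchet-glynn}, since each $g$-term in $\Delta^\theta_{\tilde K^\theta}$ collapses to $c_0$), so~\eqref{eq:contraction-prop-T-blanchet-glynn-l2} applied with $Q_2\equiv c_0$ gives
\[
\sigma_{\tilde{\mathbf T}}(s,a) \;\le\; \bigl\|\tilde T^0 Q_0(S^0_{s,a}) - c_0\bigr\|_{L^2} \;\le\; L\,\|Q_0 - c_0\|_\infty \;\le\; L(\beta-\alpha).
\]
Substituting into~\eqref{eq:final-MLMC-error-blanchet-Glynn} and using $\|Q_0-Q^\star\|_\infty\le \beta-\alpha$ yields $E_{n,M,\tilde{\mathbf T}} \le D\,\Lambda_M^n$ with $D$ as in the statement, and rearranging $D\Lambda_M^n\le\varepsilon$ (which flips the inequality since $\log\Lambda_M<0$) gives the stated threshold on $n$.

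For the complexity bound~\eqref{eq:sample-complexity-bound-expectation}, I will set $\bar C_n := \mathbb E[\mathfrak C_{n,M}]$ and track samples through~\eqref{eq:MLMC-estimator-approx-T}. The level-$0$ contribution consumes $M^n$ transition samples and $M^n$ single evaluations of $\tilde T Q_0$, while each level $l\in\{1,\dots,n-1\}$ consumes $M^{n-l}$ transition samples plus $M^{n-l}$ paired evaluations of $\tilde T$ applied to $Q^{(\theta,l,i)}_{l,M,\tilde{\mathbf T}}$ and $Q^{(\theta,-l,i)}_{l-1,M,\tilde{\mathbf T}}$. A single evaluation of $\tilde T^\theta Q$ at a state draws $1+2^{\tilde K^\theta+1}$ action samples from $\mu$ and recursively queries $Q$ at each of these. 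Because $\tilde K^\theta$ is independent of the remaining randomness and $\mathbb E[2^{\tilde K+1}]=2r/(2r-1)$, the tower property bounds the expected cost of one $\tilde T$-evaluation of the level-$l$ estimator by $c_r\,(1+\bar C_l)$, where $c_r\le 4r/(2r-1)$. Collecting terms produces a recursion of the form
\[
\bar C_n \;\le\; c_r M^n \;+\; c_r \sum_{l=1}^{n-1} M^{n-l}\bigl(\bar C_l + \bar C_{l-1}\bigr),
\]
and the claim $\bar C_n \le 2\bigl(\tfrac{4r}{2r-1}\bigr)^{n+1} M^n$ follows by induction on $n$, the algebraic input being that the ratio $c_r/\bigl(\tfrac{4r}{2r-1}-1\bigr) = (4r-1)/(2r+1)$ is strictly below $1$ for $r\in(1/2,3/4)$.

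The polynomial bound~\eqref{eq:sample-complexity-polynomial-complexity} then follows by optimisation: the stated $M_0$ is the smallest integer for which $\Lambda_{M_0}<1$, obtained by solving the quadratic $(1+2\gamma L)y^2+\sqrt{\gamma}\,y = 1-\gamma L$ in $y=M^{-1/4}$ and rationalising. With this $M_0$ fixed, $n_\varepsilon = O(\log\varepsilon^{-1})$, so substituting into the established bound gives
\[
\mathbb E[\mathfrak C_{n_\varepsilon,M_0}] \;\le\; C\,\bigl(\tfrac{4r}{2r-1}\cdot M_0\bigr)^{n_\varepsilon} \;\le\; C'\,\varepsilon^{-\kappa},\qquad \kappa = \frac{\log\bigl(\tfrac{4r}{2r-1}M_0\bigr)}{|\log\Lambda_{M_0}|}.
\]
The main technical obstacle is the complexity recursion itself: one must carefully account for the random number of $\mu$-samples per $\tilde T$-evaluation and the recursive Q-queries they trigger, cleanly separate the independent randomness of $\tilde K^\theta$ from the tree of recursive calls, and then calibrate the induction so that the constant $4r/(2r-1)$ appears with exactly the right exponent $n+1$.
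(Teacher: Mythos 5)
Your proposal is correct and follows essentially the same route as the paper's proof: it bounds $\sigma_{\tilde{\mathbf T}}$ by applying the Lipschitz property of Proposition \ref{proposition:contraction-prop-T-blanchet-glynn-l2} against a constant comparison function (the paper's one-line choice $Q_2\equiv q$, which you make explicit via $\tilde T^\theta$ fixing constants), specializes Theorem \ref{thm:global-MLMC-error} using $\delta_{\tilde{\mathbf T}}\equiv 0$ and $\tilde\gamma\le 2\gamma$, and then derives the same expected-cost recursion by exploiting the independence of $\tilde K^{(\theta,l,i)}$ from the recursive costs (Lemma \ref{lemma:theta-is-just-an-index}(iii)). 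The only cosmetic difference is that you close the cost recursion by direct induction, whereas the paper invokes the discrete Gronwall inequality of Lemma \ref{lemma:discrete-gronwall}; these are the same argument, and the final choice of $M_0$, $n_\varepsilon$ and the resulting $\kappa$ coincide with \eqref{eq:def-K-kappa}.
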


 To the best of our knowledge, this is the first MC estimator for MDPs with a polynomial complexity \emph{independent of the 
 dimensions and 
 cardinalities of the state and action spaces}.  
This contrasts with the polynomial complexity bounds established by \citet{grill2019planning} and \citet{beck2023nonlinearmontecarlomethods}, which explicitly depend on the cardinality of the action space and blow  up to infinity for continuous action spaces.

The condition $\gamma L < 1$ in Theorem \ref{theorem:error-complexity-BG} involves a different constant $L$ than in Theorems \ref{thm:simple-iterative-MC} and \ref{theorem:error-complexity-MC}.
This condition  holds if the discount factor $\gamma$ is sufficiently small. Indeed, observe that   the Lipschitz constant $ L $ depends on $ \gamma $ only through $ \alpha = (1 - \gamma)^{-1}c_{\min} $ and $ \beta = (1 - \gamma)^{-1}c_{\max} $. As $ \alpha $ and $ \beta $ remain bounded as $ \gamma \to 0 $, and the map  $ (\alpha, \beta) \mapsto L_{(\alpha,\beta, \tau ,r)} $ is continuous, it follows that $ L $ is bounded as $ \gamma \to 0 $ and  $\lim_{\gamma\to 0} \gamma L = 0 $.
However, it is
unclear whether $\gamma L < 1$
holds for a sufficiently large regularization parameter $\tau$.

We emphasize that achieving polynomial complexity in this setting requires incorporating   MLMC techniques into both the fixed-point iteration and the approximation of the soft-Bellman operator.  
The MLMC approach for the fixed-point iteration reduces variance in estimating expectations over the state space $  \mathcal{S}$ (Remark \ref{rmk:quasi-polynomial-simple-iterative}),
while   the MLMC technique applied to the soft-Bellman operator eliminates the bias of the approximation estimator and reduces variance in estimating expectations over the action space  $\mathcal{A}$ (Remark \ref{rmk:quasi-polynomial_MLMC}).

\section{Numerical Experiments}
\label{section:numerics}
In this section, we examine the performance of the MLMC estimators in multi-dimensional entropy-regularized linear quadratic (LQ) control problems. 
Specifically, we compare the effectiveness of the  MLMCb estimator analyzed in Section \ref{subsection:plain-monte-carlo} and the  MLMCu estimator analyzed in Section \ref{subsection:unbiased-monte-carlo}.\footnotemark 
Our numerical results demonstrate that:
\begin{itemize}
    \item The MLMCu estimator, with appropriately chosen hyperparameters, achieves polynomial complexity, whereas the MLMCb estimator exhibits super-polynomial complexity.

    \item
    The MLMCb estimator is robust with respect to  the sample size used to approximate the soft-Bellman operator, while the MLMCu estimator is sensitive to the choice of $r$
 for the geometric distribution. For large values of $r$, the MLMCu estimator exhibits numerical instability as the number of fixed-point iterations increases.

\end{itemize}

\footnotetext{The   implementation details can be found in Appendix \ref{app:algos}.}

\subsection{Experiment Setup}
\label{subsection:experimental-setup}

We   consider an 
infinite-horizon
entropy-regularized  discounted  LQ   control problem. 
Although this setup does not fully align with our framework due to the unbounded cost, it serves as a benchmark for validating our estimators since the optimal solution is available analytically.
More precisely, let  $d_a, d_s \in \N^*$,   $A \in \R^{d_s \times d_s}, B \in \R^{d_s \times d_a} $ and $ R_1 \in \R^{d_s \times d_s}, R_2 \in \R^{d_a \times d_a}$ be  symmetric positive semidefinite matrices.  Let
$\gamma \in (0,1), \tau > 0$ and $\mu = \mathcal N (0, I_{d_a})$
be a standard   normal distribution on $\mathbb R^{d_a}$,
consider the following minimization problem:
\begin{equation}
    \label{eq:infinite-horizon-lqg}
    \begin{aligned}
        J^\star(s_0)=\min_{\pi \in \mathcal P (\R^{d_a} | \R^{d_s})} J(\pi,s_0) & = \mathbb E \left[ \sum_{t = 0}^{\infty} \gamma^t(s_t^\top R_1 s_t + a_t ^\top R_2 a_t + \tau \mathrm{KL}\left(\pi(\cdot | s_t) \mid \mu\right) \right], 
    \end{aligned}
\end{equation}
subject to  
$s_0  \in \R^{d_s}$,
and  
$$ 
 s_{t + 1}                                       = A s_t + B a_t + w_t, 
\quad 
 a_t\sim \pi(\cdot|s_t),
\quad t \in \N,
$$ 
where $s_0$
is a given initial state, 
$a_t $ is a random  variable 
with distribution $\pi(\cdot|s_t)$,  conditionally independent of $\sigma((a_i)_{i=0}^{t-1}, (s_i)_{i=0}^t)$,
and 
$w_t$ is an independent $d_s$-dimensional standard normal random variable.

By the   dynamic programming principle, 
the optimal value function $J^\star$ of \eqref{eq:infinite-horizon-lqg} is given by $J^\star(s) = s^\top P s + c$,
where $P$ is the unique positive semidefinite solution to  the following algebraic Riccati equation:
\begin{equation}
	\label{eq:are}
	P = R_1 + \gamma A^\top P A - \gamma^2 A^\top P B \left(R_2 + \gamma B^\top P B + \frac{\tau}{2}I_{d_a}\right)^{-1}B^\top P A,
\end{equation}
and $c$ is given by 
\begin{equation}
	\label{eq:c-infinite-horizon-lqg}
	c \coloneqq  \frac{1}{1-\gamma} \left( \gamma \mathrm{tr}(P) + \frac{\tau}{2}\log \mathrm{det}\left( I_{d_a} + \frac{2}{\tau}(R_2 + \gamma B^\top P B) \right) \right).
\end{equation}
Moreover, the optimal policy is given by $\pi(\cdot | s) = \mathcal N (\mu(s), \Sigma)$, where
\begin{equation}
	\label{eq:optimal-policy-infinite-lqg}
	\begin{aligned}
		\mu(s) &= -\gamma \left(R_2 + \gamma B^\top P B + \frac{\tau}{2} I_{d_a}\right)^{-1}B^\top P A s, \\
		\Sigma &= \left(I_{d_a} + \frac{2}{\tau} (R_2 + \gamma B^\top P B)\right)^{-1}.	
	\end{aligned}
\end{equation}
The result follows   from \citet{guo2023fastpolicylearninglinear} by using 
$ \mathrm{KL}\left(\pi(\cdot | s_t) \mid \mu\right) 
=\mathrm{KL} \left(\pi(\cdot | s_t) \mid \mathcal L_{\rm Leb}\right)
-\int_{\R^{d_a}} 
\frac{|a|^2}{2} 
\pi(d a|s)+
\frac{1}{2}\log \mathrm{det} (I_{d_a})
+\frac{d_a}{2} \log (2\pi)
$, where $ \mathcal L_{\rm Leb}$ is the Lebesgue measure on $\R^d$.

In the following, we obtain a reference solution for our experiments by solving the Riccati equation \eqref{eq:are} using the given coefficients. The corresponding optimal \( Q \)-function is denoted as \( Q_{\mathrm{ref}} \).  We choose the parameters    $d_a = d_s = d = 20$, 
$R_d = R_{1,d} = R_{2,d} = I_d / d$,
and 
\[
    A_d = I_d, \quad  B_d = \begin{pmatrix}
        1        & \varepsilon & 0        & 0        & \cdots & 0      \\
        0        & 1        & \varepsilon & 0        & \cdots & 0      \\
        0        & 0        & 1        & \varepsilon & \cdots & 0      \\
        \vdots   & \vdots   & \vdots   & \vdots   & \ddots & \vdots \\
        \varepsilon & 0        & 0        & 0        & \cdots & 1
    \end{pmatrix},
\]
with $\varepsilon = 0.1$,
which ensures  a non-trivial dependence across dimensions.
 
We conduct a series of experiments for both the MLMCb estimator (using the plain Monte Carlo approximation) and the MLMCu estimator (using the Blanchet–Glynn approximation) to compare their performance. We fix  the basis number of outer samples in the MLMC estimator $M=7$
(cf.~Definition \ref{def:general-MLMC-estimator}), and vary the following parameters:
\begin{itemize}
    \item $\gamma\in \{0.4,0.5,0.6\}$, and  $\tau = (1 - \gamma)^{-1}$ for numerical stability;
    \item for the plain Monte Carlo estimator, we choose sample sizes \( K \in \{2, 4, 6\} \) to approximate the Bellman operator;
    \item for the Blanchet--Glynn estimator, we take $r \in \{0.6, 1 - 2^{-3/2}\}$
    for the geometric distribution,
    with $r=1 - 2^{-3/2}\approx 0.646$ being the parameter suggested by \citet{blanchet2015unbiased};
    \item the level $l$ varies in $\{1,2, \ldots, 6\}$.
\end{itemize}
Here, we choose small values of $\gamma$ to examine the asymptotic convergence rates of the MLMC estimators within a reasonable time frame. This allows us to gain insights into the differences between the two estimators while staying within a feasible computational budget.
For each parameter configuration, we  estimate the optimal Q-function at the state-action pair $s_0 = (0 , 0 , \ldots,  0), a_0 = (1,  1 , \ldots,  1)$, and  perform 20 runs in parallel on 20 13th Gen Intel Core i5-13500T CPUs.

\subsection{Numerical Results}
\label{subsection:results}

This section summarizes the results for the cases \( \gamma \in \{ 0.4,0.5\} \). Additional numerical results for the case  $\gamma = 0.6$  are presented in Appendix \ref{appendix:numerical-results}.

 Figure \ref{fig:gamma-04-avg-q-all}  visualizes the average estimate of $Q^\star(s_0,a_0)$ for each configuration of the MLMC estimator for $\gamma = 0.4$, and plot the reference value $Q_{\mathrm{ref}}(s_0,a_0)$ for comparison.
We clearly see that the MLMCu estimators give a better estimate of $Q^\star(s_0,a_0)$ for levels $l \geq 4$. For MLMCb, we observe that as the inner sample size $K$ decreases, the estimation at levels $l \geq 4$ gets worse. This is easily understood in terms of the bias of the plain Monte Carlo estimator, which results in overestimation of the optimal Q-function, as highlighted in Remark \ref{remark:bias-plain-MC}.

\begin{figure}[!ht]
    \centering
    \includegraphics[width=0.8\linewidth]{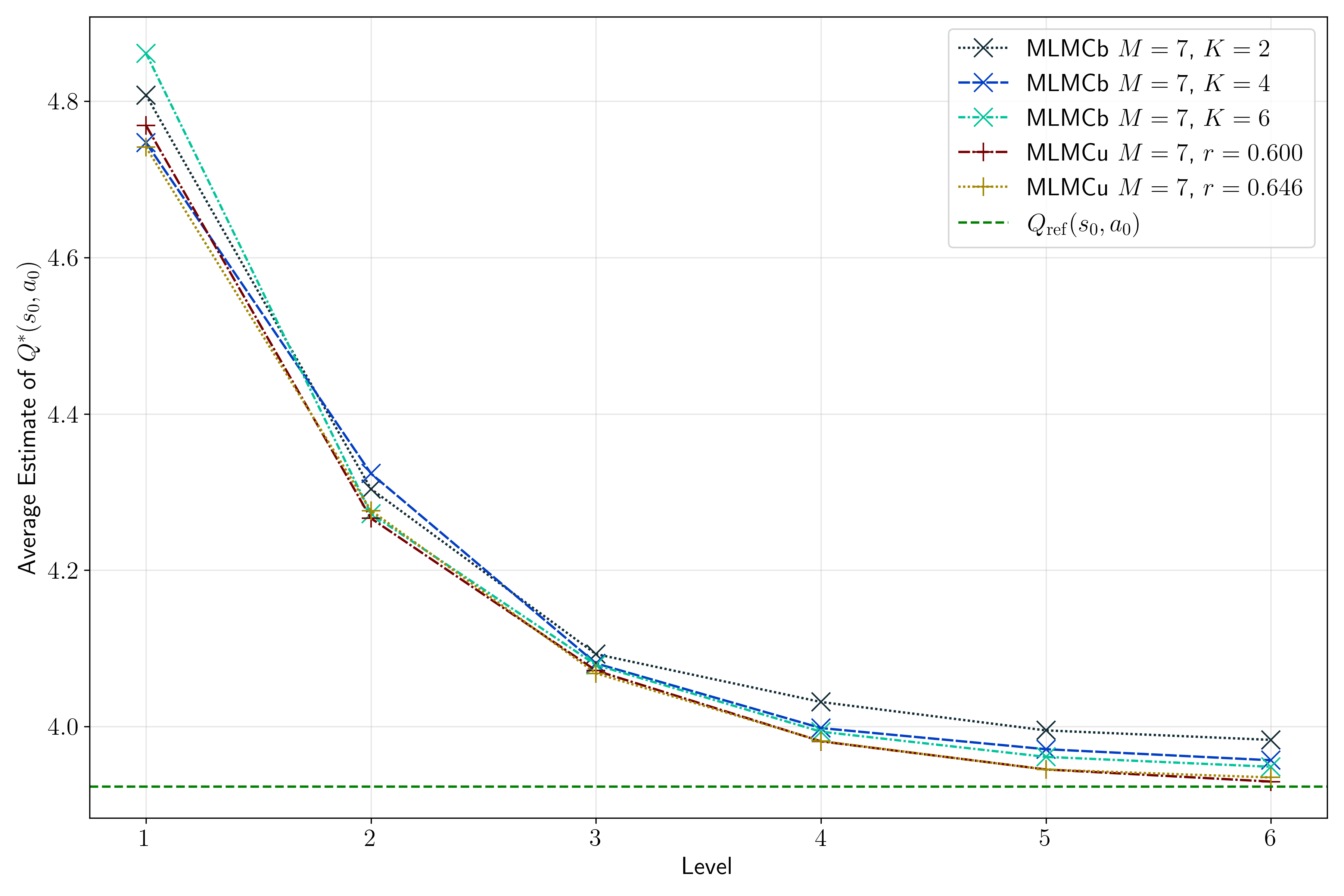}
    \caption{Average estimate of $Q^\star(s_0,a_0)$ over 20 runs for $d=20, \gamma=0.4$.}
    \label{fig:gamma-04-avg-q-all}
\end{figure}

 Figure \ref{fig:gamma-04-rmse-time} visualizes the root mean squared relative error (RMSRE) of the estimates as a function of average compute time for each configuration of the MLMC estimator. Given estimates $(\hat{q}_i)_{i=1}^{20}$ of $Q^\star(s_0,a_0)$ from 20 independent runs, we compute the RMSRE as
\[
    \mathrm{RMSRE} = \sqrt{\frac{1}{20} \sum_{i = 1}^{20} \left(\frac{\hat{q}_i - Q_{\mathrm{ref}}(s_0,a_0)}{Q_{\mathrm{ref}}(s_0,a_0)} \right)^2}.
\]
According to Theorem \ref{theorem:error-complexity-BG}, we expect   a power law relationship between these two quantities for MLMCu. This is confirmed by   the straight lines in Figure \ref{fig:gamma-04-time-error-bg-fit}. In contrast, it seems that the MLMCb estimator does not exhibit a power law, as can be clearly seen for $K = 2$ in Figure \ref{fig:gamma-04-rmse-time}.
This shows 
that the MLMCb error indeed suffers from a super-polynomial complexity, confirming 
Theorem \ref{theorem:error-complexity-MC}. 
This behaviour stems from
the intrinsic bias of the plain Monte Carlo average for approximating $T$ \eqref{eq:T-operator-def}.

\begin{figure}[!ht]
    \centering
    \begin{subfigure}[b]{0.49\linewidth}
        \centering
        \includegraphics[width=\linewidth]{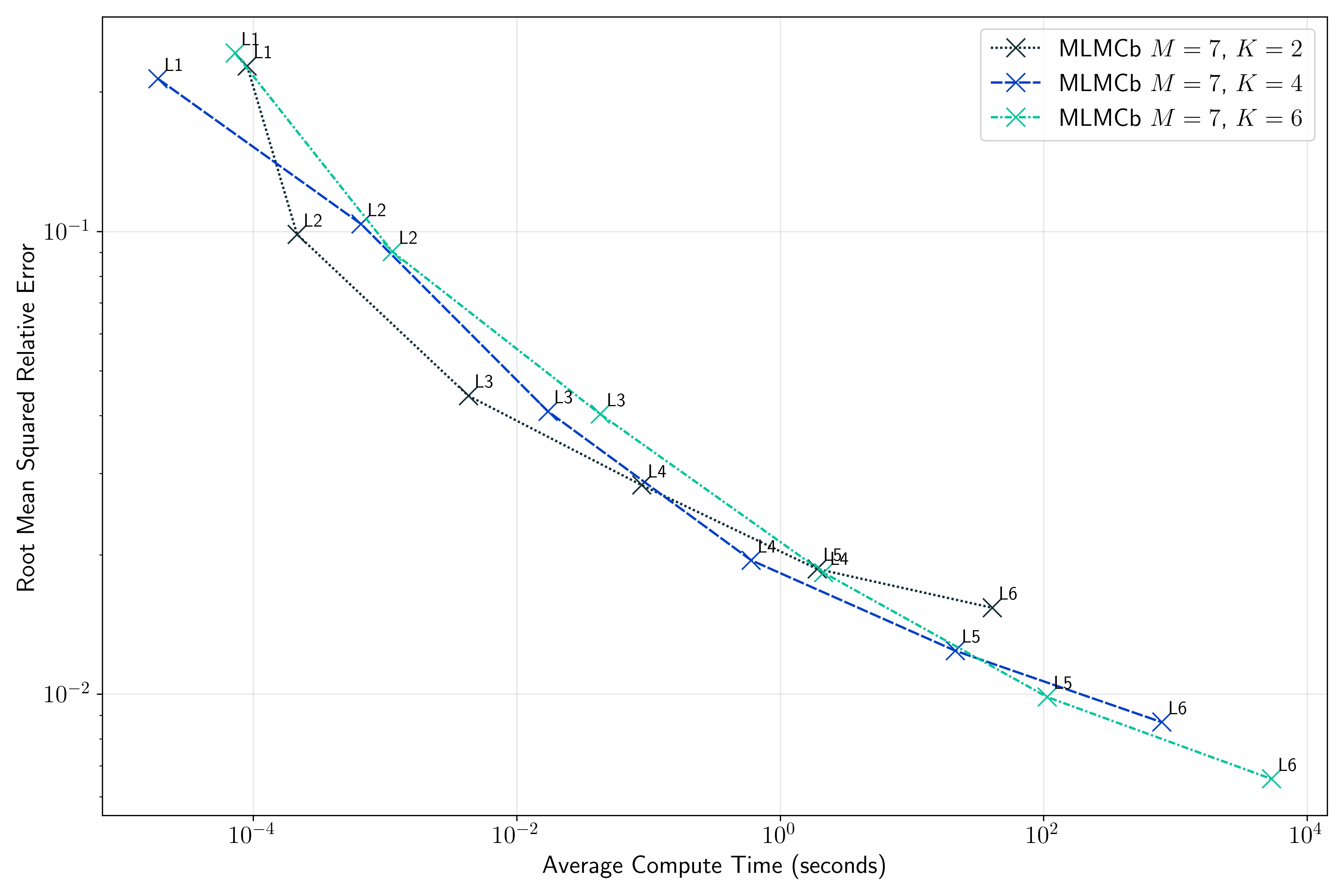}
        \caption{Plain Monte Carlo (MLMCb).}
        \label{fig:gamma-04-time-error-mc}
    \end{subfigure}
    \hfill
    \begin{subfigure}[b]{0.49\linewidth}
        \centering
        \includegraphics[width=\linewidth]{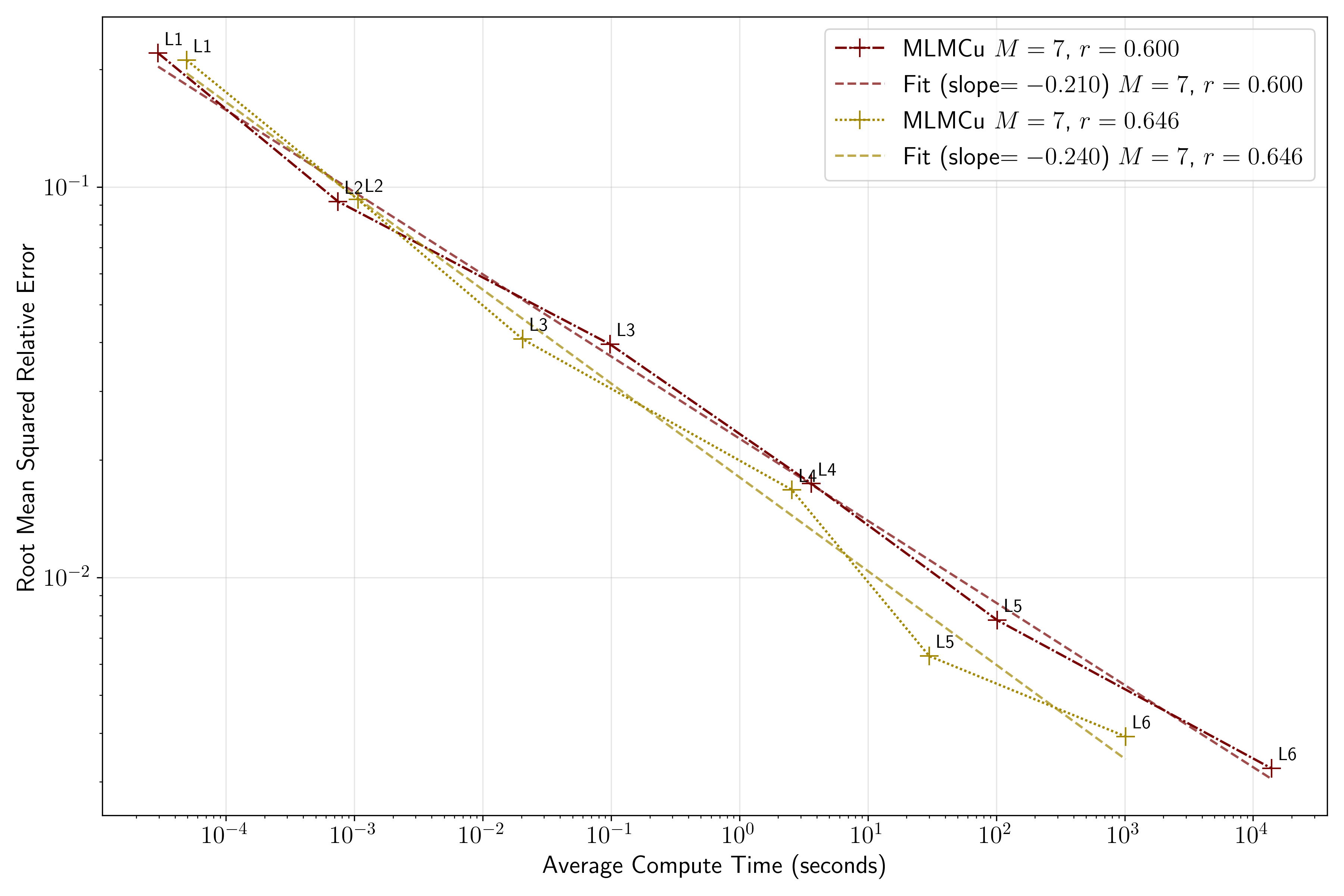}
        \caption{Unbiased Monte Carlo (MLMCu).}
        \label{fig:gamma-04-time-error-bg-fit}
    \end{subfigure}
    \caption{RMSRE as a function of average compute time over 20 runs for $d=20, \gamma = 0.4$  (plotted in a log-log scale). Each point is annotated with the level $n$ used.}
    \label{fig:gamma-04-rmse-time}
\end{figure}

Similar convergence behaviors are observed for a larger value of $\gamma = 0.5$ (at least for MLMCu estimators with sufficiently small $r$), as shown in Figures \ref{fig:gamma-05-avg-q-all} and \ref{fig:gamma-05-rmse-time}.

Moreover, Figures \ref{fig:gamma-05-avg-q-all} and \ref{fig:gamma-05-rmse-time} show that for larger values of \(\gamma\), the MLMCb estimator remains robust with respect to the inner sample sizes \(K\), while the MLMCu estimator requires smaller values of \(r\) (and thus more inner samples) to maintain numerical stability.
Specifically, for \(r = 0.6\), the MLMCu estimator remains stable and achieves polynomial complexity,
while for $r = 0.646$, the MLMCu estimator becomes numerically unstable as the level $n$ increases. 
This instability can be explained through the contraction condition $\gamma L < 1$ in Theorem \ref{theorem:error-complexity-BG} for the MLMCu estimator. To ensure numerical stability of the MLMCu estimator, the Lipschitz constant $L$ of $\tilde T^\theta$ in Proposition \ref{proposition:contraction-prop-T-blanchet-glynn-l2} needs to be less than $\gamma ^{-1}$. 
 As highlighted in Remark \ref{remark:r},
 the Lipschitz constant of $\tilde T^\theta$ 
 increases 
 as \(r\) increases,    which ultimately leads to a violation of this stability condition.

\begin{figure}[!ht]
    \centering
    \includegraphics[width=0.8\linewidth]{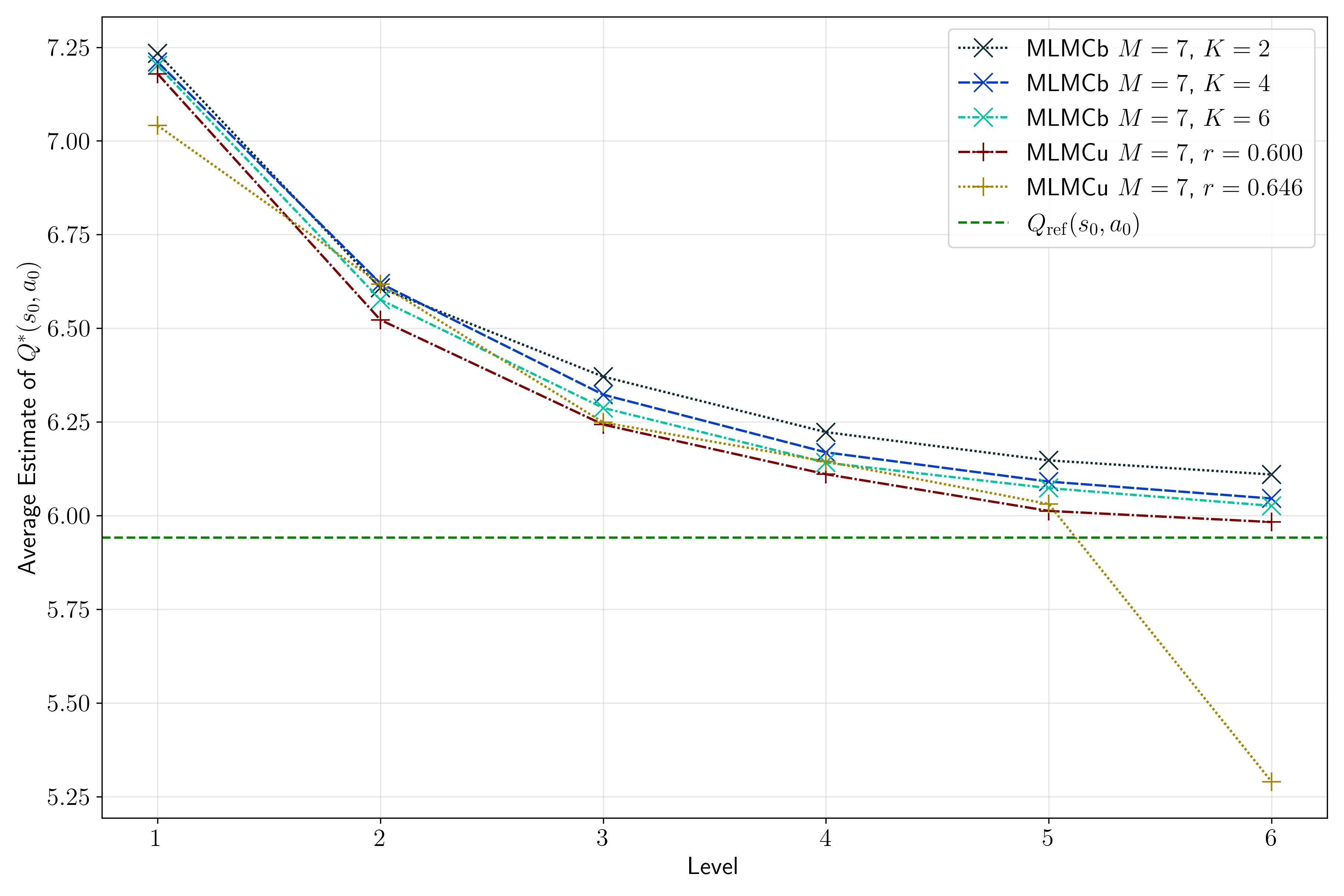}
    \caption{Average estimate of $Q^\star(s_0,a_0)$ over 20 runs for $d=20, \gamma=0.5$.}
    \label{fig:gamma-05-avg-q-all}
\end{figure}

\begin{figure}[!ht]
    \centering
    \begin{subfigure}[b]{0.49\linewidth}
        \centering
        \includegraphics[width=\linewidth]{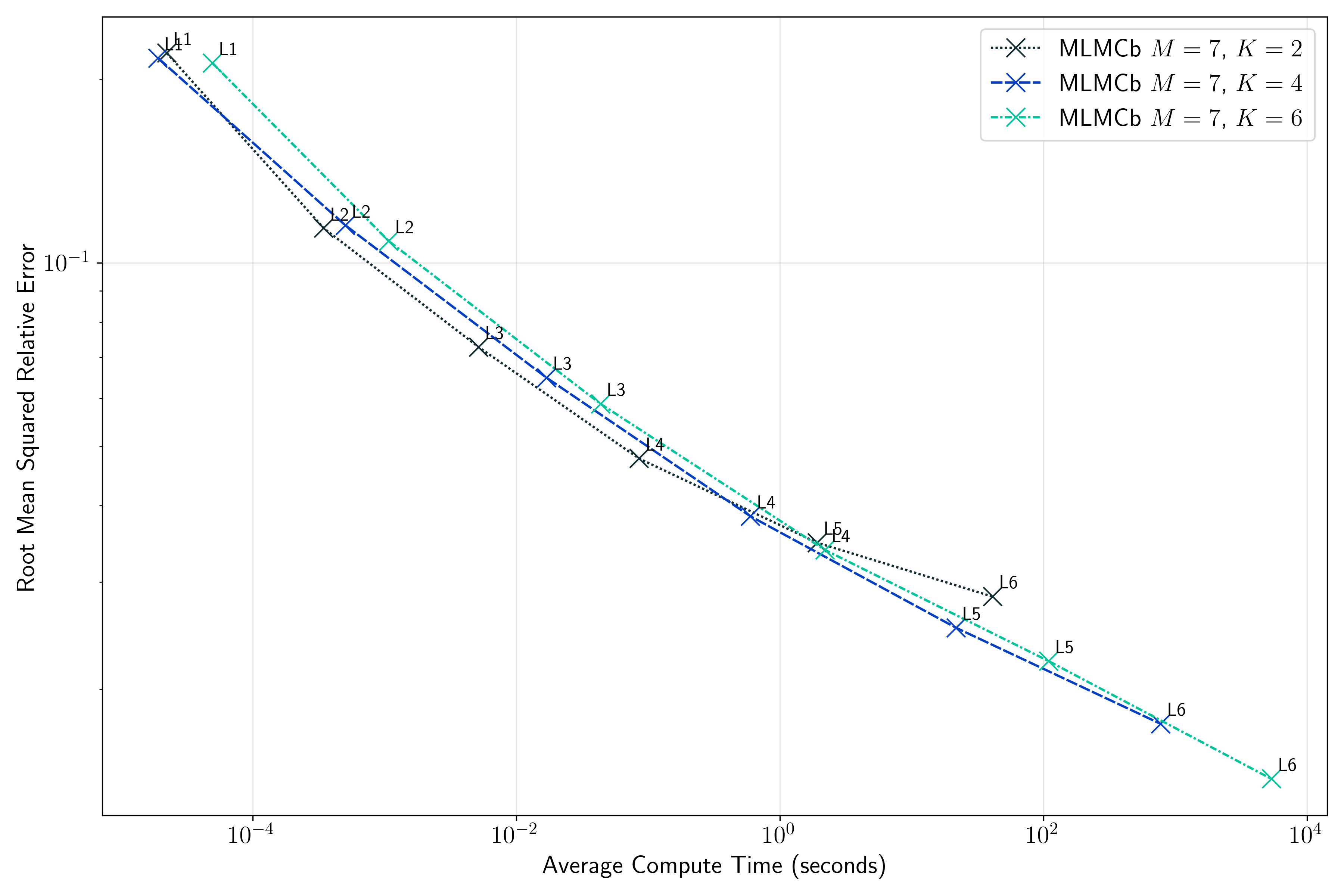}
        \caption{Plain Monte Carlo (MLMCb).}
        \label{fig:gamma-05-time-error-mc}
    \end{subfigure}
    \hfill
    \begin{subfigure}[b]{0.49\linewidth}
        \centering
        \includegraphics[width=\linewidth]{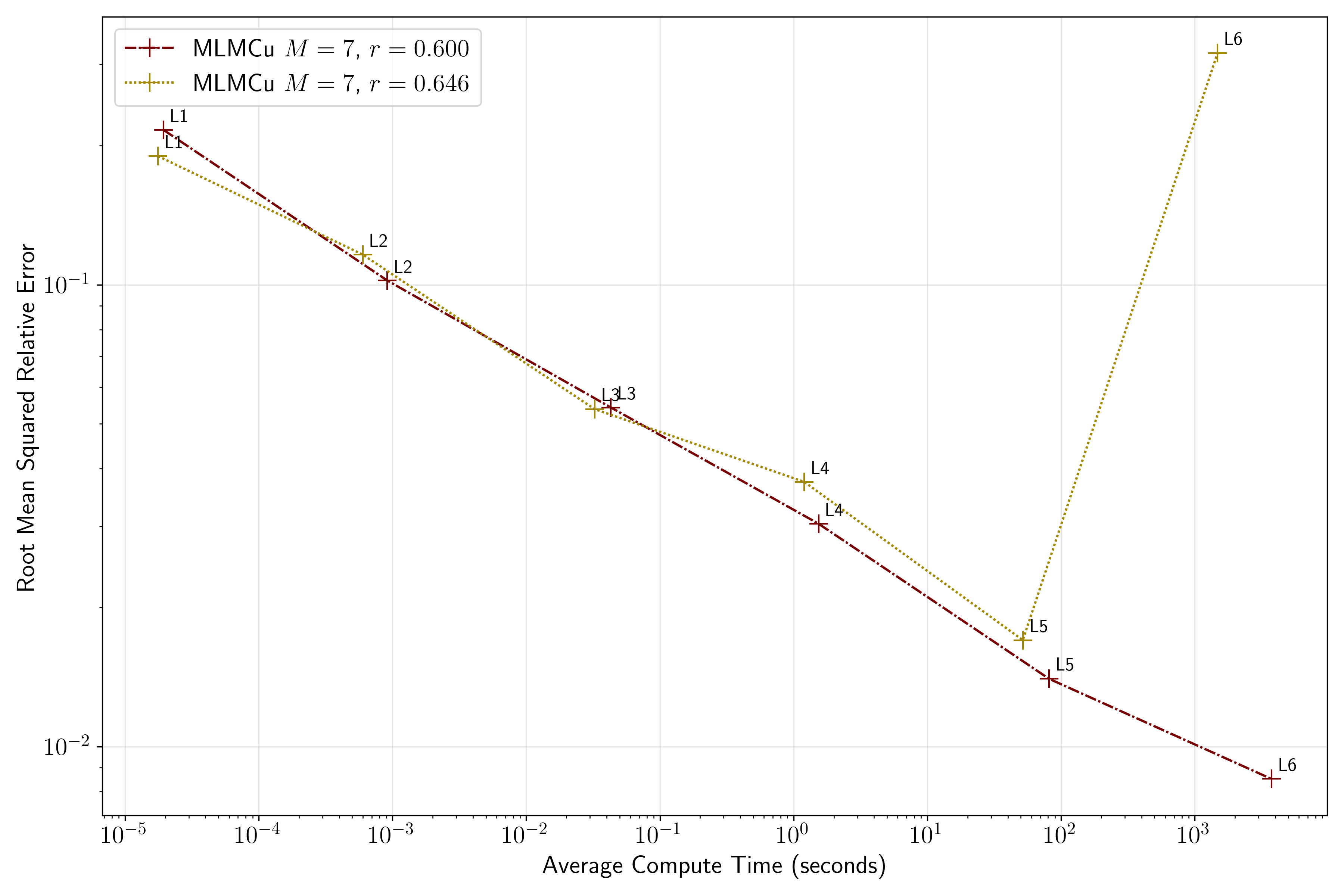}
        \caption{Unbiased Monte Carlo (MLMCu).}
        \label{fig:gamma-05-time-error-bg}
    \end{subfigure}
    \caption{RMSRE as a function of average compute time over 20 runs for $d=20, \gamma = 0.5$ (plotted in a log-log scale).}
    \label{fig:gamma-05-rmse-time}
\end{figure}

\section{Analysis of the Simple Iterative MC Estimator}
\label{sec:proof-simple-iterative-MC}

We first state a series of technical lemmas on the $T$ operator defined in \eqref{eq:T-operator-def} which guide our analysis. All proofs are presented in Appendix \ref{appendix:proofs}.
We begin with a lemma that ensures boundedness when applying $T$ iteratively.
\begin{lemma}
    \label{lemma:boundedness-of-iterates}
    Recall the notations of Assumption \ref{assumption:data}. Let $Q \in B_b(\mathcal S \times \mathcal A)$ such that $ \alpha \leq Q \leq \beta$. Then, for all $(s,s^\prime,a) \in \mathcal S \times \mathcal S \times \mathcal A$, $\alpha \leq c(s,a) + \gamma  TQ (s^\prime) \leq \beta$.
\end{lemma}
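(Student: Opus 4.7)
The plan is to show the two-sided bound $\alpha \le TQ(s') \le \beta$ for the soft-Bellman operator and then combine it with the bound on $c$ and the definitions $\alpha = c_{\min}/(1-\gamma)$, $\beta = c_{\max}/(1-\gamma)$.

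First, I would exploit monotonicity of $x \mapsto \exp(-x/\tau)$ and $x \mapsto -\tau \log x$ together with the fact that $\mu$ is a probability measure. Since $Q(s',a) \le \beta$ for all $a$, we have $\exp(-Q(s',a)/\tau) \ge \exp(-\beta/\tau)$; integrating against the probability measure $\mu$ preserves the inequality, yielding
\begin{equation*}
\int_{\mathcal A} \exp\!\left(-\frac{Q(s',a)}{\tau}\right)\mu(\d a) \;\ge\; \exp(-\beta/\tau).
\end{equation*}
Applying the decreasing function $-\tau\log(\cdot)$ then gives $TQ(s') \le \beta$. The symmetric argument, using $Q(s',a) \ge \alpha$, produces $TQ(s') \ge \alpha$.

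Next, I would plug these bounds into $c(s,a) + \gamma TQ(s')$. Using $c_{\min} \le c(s,a) \le c_{\max}$ together with the two-sided bound just obtained,
\begin{equation*}
c_{\min} + \gamma \alpha \;\le\; c(s,a) + \gamma TQ(s') \;\le\; c_{\max} + \gamma \beta.
\end{equation*}
The identities $c_{\min} + \gamma \alpha = c_{\min}(1 + \gamma/(1-\gamma)) = c_{\min}/(1-\gamma) = \alpha$ and $c_{\max} + \gamma \beta = \beta$ close the argument.

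There is no real obstacle here; the only substantive ingredient is the monotonicity-plus-probability-measure step for $T$, which is essentially a rewriting of Jensen-type reasoning. Everything else is routine algebra with the definitions of $\alpha$ and $\beta$. The lemma is purely a preparatory boundedness statement used later to keep the iterates $(Q_n)$ inside $[\alpha,\beta]$, and the proof is a direct two-line verification based on monotonicity.
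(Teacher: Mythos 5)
Your proof is correct and follows essentially the same route as the paper: the upper bound is obtained by exactly the same monotonicity argument ($Q\le\beta$ implies the integral is at least $e^{-\beta/\tau}$, hence $TQ\le\beta$), followed by the algebra $c_{\max}+\gamma\beta=\beta$. Your symmetric treatment of the lower bound (showing $TQ\ge\alpha$ and then $c_{\min}+\gamma\alpha=\alpha$) is in fact slightly more complete than the paper's one-line remark, which only invokes $TQ\ge 0$ and $c\ge c_{\min}$ and as written yields only the weaker bound $c_{\min}$ rather than $\alpha$.
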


We state a Lipschitz property of $T$ with respect to the reference measure $\mu$.
\begin{lemma}
    \label{lemma:contraction-prop-T-l2}
    Let $Q_1,Q_2$ be functions in $B_b(\mathcal S \times \mathcal A)$ such that $\alpha \leq Q_1,Q_2 \leq \beta$ for real constants $\alpha, \beta$. Then, for any $s \in \mathcal S$
    \begin{equation*}
        \label{eq:contraction-prop-T-l2}
        \left|TQ_1(s) - TQ_2(s) \right| \leq e^{(\beta - \alpha)/ \tau} \int_{\mathcal A} \left| Q_1(s,a) - Q_2(s,a)  \right| \mu (\diff a).
    \end{equation*}
\end{lemma}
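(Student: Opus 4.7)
The plan is to bound $|TQ_1(s) - TQ_2(s)|$ by rewriting $TQ_i(s) = -\tau \log I_i(s)$ where $I_i(s) \coloneqq \int_\cA \exp(-Q_i(s,a)/\tau)\,\mu(\d a)$, and then successively applying two mean-value-type bounds: one for the logarithm outside, and one for the exponential inside.

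First I would fix $s \in \cS$ and set $f_i(a) = \exp(-Q_i(s,a)/\tau)$ so that $I_i(s) = \int_\cA f_i(a)\,\mu(\d a)$. Since $\alpha \le Q_i(s,a) \le \beta$, one has the pointwise bound $e^{-\beta/\tau} \le f_i(a) \le e^{-\alpha/\tau}$, which transfers to $I_i(s)$ after integration against the probability measure $\mu$. Applying the elementary estimate $|\log x - \log y| \le |x-y|/\min(x,y)$ (a consequence of the monotonicity of $1/t$ on $(0,\infty)$) then gives
\begin{equation*}
    |TQ_1(s) - TQ_2(s)| = \tau |\log I_1(s) - \log I_2(s)| \le \tau \cdot \frac{|I_1(s) - I_2(s)|}{\min(I_1(s), I_2(s))} \le \tau\, e^{\beta/\tau}\, |I_1(s) - I_2(s)|.
\end{equation*}

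Next I would estimate $|I_1(s) - I_2(s)| \le \int_\cA |f_1(a) - f_2(a)|\,\mu(\d a)$ by Jensen/triangle inequality, and bound the pointwise difference by applying the mean value theorem to $x \mapsto e^{-x/\tau}$: for $x,y \in [\alpha,\beta]$,
\begin{equation*}
    \left|e^{-x/\tau} - e^{-y/\tau}\right| \le \tfrac{1}{\tau}\, e^{-\min(x,y)/\tau}\,|x-y| \le \tfrac{1}{\tau}\, e^{-\alpha/\tau}\,|x-y|.
\end{equation*}
Combining the two displayed estimates yields $|TQ_1(s) - TQ_2(s)| \le e^{(\beta-\alpha)/\tau} \int_\cA |Q_1(s,a) - Q_2(s,a)|\,\mu(\d a)$, which is precisely the claim.

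There is no real obstacle here; the only bookkeeping point is to ensure the two exponential factors (the $e^{\beta/\tau}$ from bounding the denominator $I_i(s)$ from below, and the $e^{-\alpha/\tau}$ from the Lipschitz constant of $x \mapsto e^{-x/\tau}$ on $[\alpha,\beta]$) combine into the stated constant $L = e^{(\beta-\alpha)/\tau}$, and the $\tau$ from the outer logarithm cancels the $1/\tau$ from the Lipschitz constant of the exponential. Everything else is routine.
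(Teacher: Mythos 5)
Your proof is correct and follows essentially the same route as the paper's: both arguments bound the outer logarithm by a local Lipschitz estimate with constant $e^{\beta/\tau}$ coming from the lower bound $e^{-\beta/\tau}$ on the integrals (the paper uses the concavity inequality $-\log x \le x^{-1}-1$, you use the mean value theorem, which give the same constant), and both then use the local Lipschitz property of $x \mapsto e^{-x/\tau}$ on $[\alpha,\beta]$ with constant $\tau^{-1}e^{-\alpha/\tau}$. The only cosmetic difference is that you work with absolute values symmetrically, whereas the paper bounds $TQ_1(s)-TQ_2(s)$ and then repeats the computation with the roles of $Q_1,Q_2$ swapped.
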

Notice that this Lipschitz property is different from the usual $\|\cdot\|_{\infty}$ Lipschitz property, where taking the supremum over action spaces ensures a Lipschitz constant equal to $1$.

We now show that the family of plain Monte Carlo estimators satisfies Assumption \ref{assumption:lipschitz-condition-l2-stochastic-operator}.
\begin{lemma}
    \label{lemma:contraction-prop-T-hat-l2}
    For any $K \in \N^*$, the family $(\hat{T}^\theta_K)_{\theta \in \Theta}$ is an admissible family of stochastic operators that satisfy Assumptions \ref{assumption:lipschitz-condition-l2-stochastic-operator} with $L(\alpha, \beta) = \exp(\tau^{-1}(\beta - \alpha))$.
\end{lemma}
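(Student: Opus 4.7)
The plan has two parts. First, for admissibility, I would define the measurable function $\Phi:\R^{\Z}\times\N\to\R$ by $\Phi((x_k)_{k\in\Z},K)\coloneqq-\tau\log\bigl(K^{-1}\sum_{k=1}^K\exp(-x_k/\tau)\bigr)$ (with any fixed convention for $K=0$), and take the deterministic family $K^\theta\equiv K$, which is trivially i.i.d.\ and independent of $(A^\theta,U^\theta)_{\theta\in\Theta}$. Then Definition~\ref{def:admissible-stochastic-operator} is satisfied by construction, and the measurability of $\omega\mapsto(\hat T_K^\theta Q)(s)$ is immediate. Boundedness (Assumption~\ref{assumption:lipschitz-condition-l2-stochastic-operator}(i)) follows from the fact that $\mathbb E[\hat T_K^\theta Q(s)]$ is the value of the soft-Bellman operator evaluated at the empirical measure (see below), together with Lemma~\ref{lemma:boundedness-of-iterates} applied to any probability measure.

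For the Lipschitz property (Assumption~\ref{assumption:lipschitz-condition-l2-stochastic-operator}(ii)), the key observation is that
\[
\hat T_K^\theta Q(s)=-\tau\log\int_{\cA}\exp(-Q(s,a)/\tau)\,\mu_K^\theta(\d a),\qquad\mu_K^\theta\coloneqq\frac{1}{K}\sum_{k=1}^K\delta_{A^{(\theta,k)}},
\]
so $\hat T_K^\theta$ is just the exact soft-Bellman operator applied with the random empirical measure $\mu_K^\theta$ in place of $\mu$. Inspecting the proof of Lemma~\ref{lemma:contraction-prop-T-l2}, one sees that boundedness of $Q_1,Q_2$ in $[\alpha,\beta]$ is the only ingredient used, and the argument goes through verbatim for any probability measure. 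Therefore, for almost every $\omega\in\Omega$, applying this pointwise bound with $\mu_K^\theta(\omega)$ yields
\[
\bigl|\hat T_K^\theta Q_1(S)(\omega)-\hat T_K^\theta Q_2(S)(\omega)\bigr|\le e^{(\beta-\alpha)/\tau}\,\frac{1}{K}\sum_{k=1}^K\bigl|Q_1(S,A^{(\theta,k)})-Q_2(S,A^{(\theta,k)})\bigr|(\omega).
\]

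The final step is to take $L^2$-norms and use Minkowski's inequality:
\[
\bigl\|\hat T_K^\theta Q_1(S)-\hat T_K^\theta Q_2(S)\bigr\|_{L^2}\le e^{(\beta-\alpha)/\tau}\,\frac{1}{K}\sum_{k=1}^K\bigl\|Q_1(S,A^{(\theta,k)})-Q_2(S,A^{(\theta,k)})\bigr\|_{L^2}.
\]
By the identical-distribution hypothesis in Assumption~\ref{assumption:lipschitz-condition-l2-stochastic-operator}(ii), every summand equals $\|Q_1(S,A^{(\theta,1)})-Q_2(S,A^{(\theta,1)})\|_{L^2}$, which collapses the average and gives the claimed Lipschitz constant $L=\exp(\tau^{-1}(\beta-\alpha))$. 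I do not expect any serious obstacle; the only subtlety is checking that Lemma~\ref{lemma:contraction-prop-T-l2}'s pointwise estimate, which is stated for deterministic $Q\in B_b(\cS\times\cA)$, can be applied to $Q_i(\cdot,\cdot,\omega)$ for almost every $\omega$, which is justified by the pointwise bound $\alpha\le Q_i(s,a,\omega)\le\beta$ assumed in the hypothesis.
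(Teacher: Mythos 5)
Your proposal is correct and follows essentially the same route as the paper's proof: both establish admissibility via the explicit $\Phi$, replicate the pointwise estimate of Lemma \ref{lemma:contraction-prop-T-l2} with the sample average in place of the integral over $\mu$, and then conclude via Minkowski's inequality together with the identical-distribution hypothesis to collapse the average. The empirical-measure reformulation you use is a clean way of phrasing what the paper does implicitly, but it is not a different argument.
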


\begin{proof}
     Let $\Phi:  \R^{\mathbb Z} \rightarrow \R$ be defined by 
    $$
        \Phi ((q_k)_{k \in \Z}) =  -\tau \log \frac{1}{K}\sum_{k = 1}^{K} \exp \left(- \frac{q_k}{\tau}   \right).
    $$
    By the definition of $\hat{T}^\theta_K$ \eqref{def:T-plain-monte-carlo} we have $\hat{T}^\theta_K Q(s) = \Phi\left( \left(Q(s,A^{(\theta,k)}))\right)_{k \in \Z}\right)$, for any fixed $Q \in B_b (\mathcal S \times \mathcal A)$, therefore $ \mathbf{T}_K \coloneqq (\hat{T}^\theta_K)$ is an admissible family of stochastic operators in the sense of Definition \ref{def:admissible-stochastic-operator}.
    Notice that the proof of Lemma \ref{lemma:boundedness-of-iterates} can be replicated by replacing $T$ by its approximation $\hat{T}^\theta_K$ to get Assumption \ref{assumption:lipschitz-condition-l2-stochastic-operator}.\ref{item:boundedness}. Let $Q_1,Q_2:\mathcal S \times \mathcal A \times \Omega \rightarrow \R$ as in Assumption \ref{assumption:lipschitz-condition-l2-stochastic-operator}.\ref{item:lipschitz-property}.
    Notice that the computation for the proof of Lemma \ref{lemma:contraction-prop-T-l2} also applies when replacing $T$ by $\hat{T}^\theta_K$ and the integral by a sample average, as long as all other hypotheses remain.
    This allows to write
    \begin{equation*}
        \left| \hat{T}^\theta_K Q_1 (S) - \hat{T}^\theta_K Q_2 (S)\right| \leq e^{(\beta - \alpha)/ \tau} \frac{1}{K}\sum_{k = 1}^K|Q_1(S,A^{(\theta,k)}) - Q_2(S,A^{(\theta,k)})|.
    \end{equation*}
    Now, taking the $L^2$ norm and applying the triangle inequality yields
    \begin{equation}
    \label{eq:lipschitz-plain-mc}
        \begin{aligned}
            \left\| \hat{T}^\theta_K Q_1(S) - \hat{T}^\theta_K Q_2(S) \right\|_{L^2} & \leq  e^{(\beta - \alpha)/ \tau} \left\|Q_1(S,A^{(\theta,1)}) - Q_2(S,A^{(\theta, 1)})\right\|_{L^2},
        \end{aligned}
    \end{equation}
    hence $ \mathbf{T}_K$ satisfies Assumption \ref{assumption:lipschitz-condition-l2-stochastic-operator}.\ref{item:lipschitz-property}.
\end{proof}

We now state an upper bound on the bias of the plain Monte Carlo estimators.
\begin{lemma}
    \label{lemma:additional-MC-bias}
    Let $K \in \N^\star$ and define
    $$
     \sigma_{\mathbf T}(s,a)  \coloneqq \mathrm{Var} \left(T^0 Q_0 \left(S^0_{s,a}\right)\right)^{\frac{1}{2}},
        \quad
        \delta_{\mathbf T}(s,a) \coloneqq \left| \mathbb E \left[ T^0Q^\star\left(S^0_{s,a}\right) - TQ ^{\star}\left(S^0_{s,a}\right)\right] \right|.
    $$
    Then  
    $\sigma_{\mathbf T_K}(s,a) \leq L^\prime K^{-1/2}$ and $ \delta_{ \mathbf{T}_K} (s,a) \leq (L^\prime)^2 (2 \tau K) ^{-1}$,
    where $L^\prime  = \tau \left( e^{(\beta - \alpha) / \tau} - 1 \right)$.
\end{lemma}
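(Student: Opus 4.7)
The key observation is that $\hat T^0_K Q(s') = g(\bar X)$ and $TQ(s') = g(m)$, where $g(x) \coloneqq -\tau \log x$, $X_k \coloneqq \exp(-Q(s', A^{(0,k)})/\tau)$, $\bar X \coloneqq K^{-1}\sum_{k=1}^K X_k$, and $m \coloneqq \mathbb E[X_1]$, with $Q \in \{Q_0, Q^\star\}$ taking values in $[\alpha,\beta]$ and $s' = S^0_{s,a}$ viewed as a realization. Consequently all of $X_k, m, \bar X$ lie in $[e^{-\beta/\tau}, e^{-\alpha/\tau}]$. Both bounds reduce to controlling $\bar X - m$ through a Taylor analysis of $g$, whose derivatives on this interval satisfy $|g'(x)| \leq \tau e^{\beta/\tau}$ and $0 \leq g''(x) \leq \tau e^{2\beta/\tau}$.

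For the variance, the mean value theorem yields $|g(\bar X) - g(m)| \leq \tau e^{\beta/\tau}|\bar X - m|$. Squaring, taking expectation, and invoking independence of $(A^{(0,k)})_{k=1}^K$ together with the crude range-based bound $\mathrm{Var}(X_1) \leq (e^{-\alpha/\tau} - e^{-\beta/\tau})^2$ give (conditionally on $s'$)
\[
    \mathrm{Var}\bigl(\hat T^0_K Q_0(s')\bigr) \leq \tau^2 e^{2\beta/\tau} \cdot \frac{(e^{-\alpha/\tau} - e^{-\beta/\tau})^2}{K} = \frac{\tau^2 (e^{(\beta-\alpha)/\tau} - 1)^2}{K} = \frac{(L')^2}{K},
\]
where the algebraic collapse uses $L' = \tau(L-1)$ with $L = e^{(\beta-\alpha)/\tau}$. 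This is the heart of the first claim; a routine tower-property argument transfers it to $\sigma_{\mathbf T_K}(s,a)$ after averaging over $S^0_{s,a}$.

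For the bias, I expand $g$ to second order around $m$: there exists a random $\xi$ between $\bar X$ and $m$ (hence $\xi \geq e^{-\beta/\tau}$) such that
\[
    g(\bar X) = g(m) + g'(m)(\bar X - m) + \tfrac{1}{2} g''(\xi)(\bar X - m)^2.
\]
Taking conditional expectation given $s'$, the linear term vanishes by $\mathbb E[\bar X \mid s'] = m$, and the quadratic remainder is non-negative with $g''(\xi) \leq \tau e^{2\beta/\tau}$, so
\[
    0 \leq \mathbb E[g(\bar X) \mid s'] - g(m) \leq \tfrac{1}{2} \tau e^{2\beta/\tau} \mathrm{Var}(\bar X) \leq \frac{\tau e^{2\beta/\tau}(e^{-\alpha/\tau} - e^{-\beta/\tau})^2}{2K} = \frac{(L')^2}{2\tau K}.
\]
Applying this with $Q = Q^\star$ (which also lies in $[\alpha,\beta]$ by Assumption \ref{assumption:data}) and taking the outer expectation over $S^0_{s,a}$ delivers the bound on $\delta_{\mathbf T_K}(s,a)$; the nonnegative sign simultaneously recovers the overestimation phenomenon noted in Remark \ref{remark:bias-plain-MC}.

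The main technical point to watch is the algebraic simplification $\tau^2 e^{2\beta/\tau}(e^{-\alpha/\tau} - e^{-\beta/\tau})^2 = (L')^2$, which dictates that one must use the crude range-squared bound for $\mathrm{Var}(X_1)$ rather than Popoviciu's tighter $\tfrac14$ factor, in order to match the constants stated in the lemma; everything else is a standard MVT and Taylor computation.
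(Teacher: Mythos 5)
Your proposal follows essentially the same route as the paper's proof: a Lipschitz (mean-value) bound on $g(x)=-\tau\log x$ over $[e^{-\beta/\tau},e^{-\alpha/\tau}]$ for the variance, and a second-order Taylor expansion with Lagrange remainder, whose linear term vanishes because the $A^{(0,k)}$ are independent of $S^0_{s,a}$, for the bias. The bias half is correct and complete, the constants work out exactly as you say, and your observation that the remainder is nonnegative (recovering the overestimation of Remark \ref{remark:bias-plain-MC}) matches the paper.

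The one step that does not go through as written is the transfer of the variance bound from a fixed realization $s'$ to $\sigma_{\mathbf T_K}(s,a)$. The quantity $\sigma_{\mathbf T_K}(s,a)^2=\mathrm{Var}\bigl(\hat T^0_K Q_0(S^0_{s,a})\bigr)$ is an unconditional variance over both $S^0_{s,a}$ and the $A^{(0,k)}$, and the law of total variance reads $\mathrm{Var}\bigl(\hat T^0_K Q_0(S^0_{s,a})\bigr)=\mathbb E\bigl[\mathrm{Var}\bigl(\hat T^0_K Q_0(S^0_{s,a})\mid S^0_{s,a}\bigr)\bigr]+\mathrm{Var}\bigl(\mathbb E\bigl[\hat T^0_K Q_0(S^0_{s,a})\mid S^0_{s,a}\bigr]\bigr)$. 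Your conditional computation controls only the first term by $(L')^2/K$; the second term is close to $\mathrm{Var}\bigl(TQ_0(S^0_{s,a})\bigr)$, which is in general positive and does not decay in $K$, so a ``routine tower-property argument'' cannot deliver the stated $L'K^{-1/2}$ bound. To be fair, the paper's own proof makes the same silent identification --- it writes $\mathrm{Var}(\Sigma_K)=K^{-1}\mathrm{Var}(X_1)$, which requires the summands to be uncorrelated even though they all share $S^0_{s,a}$ --- and downstream only the $K$-independent bound $\sigma_{\mathbf T_K}\le L'$ (or the trivial $\beta-\alpha$) is actually used. So you have not introduced an error absent from the paper, but the phrase ``routine tower-property argument'' glosses over precisely the point where the claimed $K^{-1/2}$ decay fails in general; you should either restrict to the conditional variance or settle for a $K$-independent bound at this step.
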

\begin{proof}
    First, it is clear by the proof of Lemma \ref{lemma:boundedness-of-iterates} that $\hat{T}_K^0 Q_0(S^0_{s,a}),\hat{T}_K^0 Q^\star(S^0_{s,a})$ and $TQ^\star(S^0_{s,a})$ belong to the interval \(\left[ \alpha, \beta  \right]\),
    hence 
    $$
    \exp \left(-\hat{T}_K^0 Q_0(S^0_{s,a}) / \tau\right),\exp \left(- \hat{T}_K^0 Q^\star(S^0_{s,a})/ \tau \right),\exp \left(-TQ^\star(S^0_{s,a}) / \tau \right) \in \left[e^{-\beta / \tau}, e ^{-\alpha/ \tau}\right].
    $$
    The function $g:x \mapsto -\tau \log x$ is Lipschitz on $\left[e^{-\beta / \tau}, e ^{-\alpha/ \tau}\right]$, with a Lipschitz constant given by
$ \tau e^{\beta/ \tau}$.
Let 
$$
    m (S_{s,a}^0) = \int_{\cA} \exp(- Q^\star(S_{s,a}^0, b) / \tau) \mu(\d b), \quad \Sigma_K = \frac{1}{K}\sum_{k = 1}^K \exp(-Q^\star(S_{s,a}^0, A^{(0,k)}) / \tau).
    $$
We have
    \[
    \begin{aligned}
    \sigma^2_{\mathbf T_K}(s,a) &= \mathrm{Var} \left[ g \left( \Sigma_K \right) \right] \\
    & \leq \mathbb E \left|  g \left( \Sigma_K \right) - g \left(\mathbb E \exp \left(-Q(S^0_{s,a}, A^{(0,1)}) / \tau \right) \right)\right|^2 \\
    & \leq \frac{\tau^2 e^{2\beta / \tau}}{K} \mathrm{Var} \left[ \exp \left(-Q \left(S_{s,a}^0,A^{(0,1)}\right) / \tau \right) \right] \\
    & \leq \frac{\tau^2 e^{2\beta / \tau}}{K} \left( e^{-\alpha / \tau} - e^{-\beta / \tau}  \right)^2 = \frac{(L^\prime)^2}{K}.
    \end{aligned}
    \]
     Since $g$ is smooth on $(0,\infty)$ and $m (S_{s,a}^0),\Sigma_K \in [e^{-\beta / \tau}, e^{-\alpha / \tau}]$, we perform a Taylor expansion with Lagrange remainder
    \[
    g(\Sigma_K) = g(m(S_{s,a}^0)) + g^\prime(m(S^0_{s,a})) (\Sigma_K - m(S^0_{s,a})) + \frac{g^{\prime \prime}(\xi_K)}{2} (\Sigma_K - m(S^0_{s,a}))^2,
    \]
    where $\xi_K \in [\Sigma_K , m(S^0_{s,a})]$. Moreover, since $g^{\prime \prime}$ is strictly monotone and continuous, $\xi_K$ is defined uniquely and continuously as a function of $\Sigma_K , m(S^0_{s,a})$. In particular $\xi_K$ is a random variable. Hence
    \[
        \delta_{\mathbf T_K}(s,a) = \left| \mathbb E \left[ g^\prime(m(S^0_{s,a})) (\Sigma_K - m(S^0_{s,a})) + \frac{g^{\prime \prime}(\xi_K)}{2} (\Sigma_K - m(S^0_{s,a}))^2 \right] \right|.
    \]
    The first order term can be rewritten using the tower property
    \[
    \mathbb E \left[ g^\prime(m(S^0_{s,a})) (\Sigma_K - m(S^0_{s,a})) \right] = \mathbb E \left[g^\prime(m(S_{s,a}^0))\mathbb E \left[ (\Sigma_K - m(S^0_{s,a})) \mid S_{s,a}^0\right]\right],
    \]
    and by independence of $A^{(0,k)}$ and $S^0_{s,a}$, we have $\mathbb E \left[ (\Sigma_K - m(S^0_{s,a})) \mid S_{s,a}^0\right] = 0$.
    Hence
    \[
    \begin{aligned}
        \delta_{\mathbf T_K} (s,a) &= \left| \mathbb E \left[ \frac{g^{\prime \prime}(\xi_K)}{2} (\Sigma_K - m(S^0_{s,a}))^2 \right] \right| \\ 
        &\leq \sup_{x \in [e^{-\beta / \tau}, e^{-\alpha / \tau}]} \frac{\tau}{2 x^2} \mathbb E\left[ \left( \Sigma_K - m(S^0_{s,a}) \right)^2 \right]  \\
        &= \frac{\tau e^{2\beta / \tau}}{2K} \mathbb E \left[ \left(\exp( - Q^\star (S^0_{s,a}, A^{(0,1)})) - m(S_{s,a}^0)\right)^2 \right] \\
        & \leq \frac{\tau e^{2 \beta / \tau}}{2K} \left(e^{-\alpha / \tau} - e^{-\beta / \tau}\right)^2 = \frac{(L^\prime)^2}{2\tau K}.
    \end{aligned}
    \]
\end{proof}

\begin{proof}[Proof of Theorem \ref{thm:simple-iterative-MC}]
We consider the   estimator $Q^\theta_{n,M,\mathbf T_K }$ with fixed parameters $M, K \in \N^*$ and drop the indices $M,\mathbf T_K $ for legibility. For all  $(s,a) \in \cS \times \cA$,

\[
	\left\|Q_{n}^\theta(s,a) - Q^\star(s,a) \right\|_{L^2} \leq \sqrt{\mathrm{Var} Q_n^\theta(s,a)} + \left| \mathbb E \left[ Q^\theta_n(s,a) - Q^\star(s,a) \right] \right|.
\]
The variance can be upper bounded by independence
\[
	\mathrm{Var} Q_n^\theta(s,a) = \frac{\gamma^2}{M} \mathrm{Var} \left( \hat{T}^\theta_K Q_{n-1}(s,a) \right) \leq \frac{\gamma^2C}{M},
\]
where $C = \left( \beta - \alpha \right)^2$ since all iterates $Q_{n}, n \in \N$ are bounded in $[\alpha, \beta]$.

The bias can be bounded   using the Bellman equation for $Q^\star $   and the triangle inequality 
\[
	\begin{aligned}
		\left| \mathbb E \left[ Q^\theta_n(s,a) - Q^\star(s,a)\right] \right| &\leq \gamma \mathbb E \left| \hat{T}_K^\theta Q^\theta_{n-1}(S_{s,a}^\theta) - \hat{T}_K^\theta Q^\star(S_{s,a}^\theta)\right| \\
        &+\gamma \left|\mathbb E \left[\hat{T}_K^\theta Q^\star(S_{s,a}^\theta)  - TQ^\star(S^\theta_{s,a})\right]\right| \\
										      &\leq \gamma L \sup_{s,a} \left\|Q^\theta_{n-1}(s,a) - Q^\star(s,a) \|_{L^2} + \gamma \right\| \delta_{\mathbf T}\|_{\infty},
\end{aligned}
\]	
where $L = \exp \left( \tau^{-1}(\beta - \alpha) \right)$ is the Lipschitz constant of $\hat{T}_K^\theta$ given by Lemma \ref{lemma:contraction-prop-T-hat-l2} and $\delta_{\mathbf T}(s,a) = \left|\mathbb E \left[\hat{T}_K^\theta Q^\star(S_{s,a}^\theta)  - TQ^\star(S^\theta_{s,a})\right]\right|$. Hence we get the recursive bound
\[
	E_n \leq \frac{\gamma \sqrt{C}}{\sqrt{M}} + \gamma L E_{n-1} + \gamma \| \delta_{\mathbf T}\|_{\infty},
\]
which implies, assuming that $\gamma L < 1$
\[
	\begin{aligned}
		E_n &\leq \frac{\gamma \sqrt{C}}{\sqrt{M}} \frac{ 1 - (\gamma L)^n}{1 - \gamma L} + \frac{\gamma \| \delta_{\mathbf T}\|_{\infty}(1 - (\gamma L)^n)}{1 - \gamma L} + (\gamma L)^n E_0 \\
		    & \leq \frac{\gamma \sqrt{C}}{\sqrt{M}} \frac{1}{1 - \gamma L} + \frac{\gamma \| \delta_{\mathbf T}\|_{\infty}}{1 - \gamma L} + (\gamma L)^n E_0.
\end{aligned}
	\]
By Lemma \ref{lemma:additional-MC-bias}, we know that $\|\delta_{\mathbf T} \|_{\infty} \leq (L^\prime)^2 (2 \tau K)^{-1}$, where 
$L^\prime =  \tau (L - 1) $ is a constant depending only on $c_{\min}, c_{\max}, \gamma,\tau$, hence the final bound is
\begin{equation*}
	E_{n} \leq \frac{\gamma \sqrt{C}}{\sqrt{M}(1 - \gamma L)} + \frac{\gamma (L^\prime)^2}{2 \tau K(1 - \gamma L)} + (\gamma L)^n E_0,
	\end{equation*}
    with   $E_0\coloneqq \left\|Q_0 - Q^\star\right\|_{\infty}$. 
The sample complexity of this estimator is simply $(MK)^n$. Therefore, assuming that $\gamma L < 1$, we can get an $\varepsilon$-approximation by choosing 
\[
	n = \left\lceil \frac{\log \varepsilon - \log 3E_0}{\log \gamma L}\right\rceil, M = \left\lceil 9\frac{\gamma^2C}{(1-\gamma L)^2\varepsilon^2}\right\rceil, K = \left\lceil 3 \frac{\gamma (L^\prime)^2}{2 \tau (1-\gamma L)\varepsilon}\right\rceil.
\]
We get $E_n \leq \varepsilon/ 3 + \varepsilon/3 + \varepsilon/3 = \varepsilon$ and the sample complexity is of order $\varepsilon^{ \frac{-3 \log \varepsilon}{\log \gamma L}(1 + o(1))}$.
\end{proof}
\section{Proofs of  Error Bounds for MLMC Estimators}
\label{section:analysis-of-the-error}
In this section, we present a rigorous error analysis of the  general MLMC estimator   in Definition~\ref{def:general-MLMC-estimator}, and specialize the error bounds to the MLMCb and MLMCu estimators.

\subsection{Sketch of the Analysis}
The $L^2$ error of the estimator can be decomposed using the triangle inequality
\begin{equation}
    \label{eq:error-decomposition}
    \begin{aligned}
         & \left\| Q ^{0}_{n,M,\mathbf T}(s,a) - Q ^{\star}(s,a) \right\| _{L^2}  \leq \left\| Q ^{0}_{n,M,\mathbf T}(s,a) - \mathbb E \left[ Q ^{0} _{n,M,\mathbf T}(s,a) \right] \right\|_{L^2} \\
         & + \left| \mathbb E \left[ Q ^{0}_{n,M, \mathbf T}(s,a) \right] - \mathbb E \left[ \hat{Q} ^{0}_{n,M, \mathbf T} (s,a) \right] \right|
        + \left| \mathbb E \left[ \hat{Q} ^{0}_{n,M,\mathbf T}(s,a) \right]  - Q ^{\star}(s,a) \right|.
    \end{aligned}
\end{equation}
The three terms can be respectively interpreted as:
\begin{itemize}
    \item the variance of our estimator: the independence between levels is crucial to decompose the variance into a sum of variances at lower levels;
    \item the bias due to the truncation;
    \item the bias due to the number of iterations $n$: as the optimal Q-function is a solution to a fixed point equation, we should expect an exponential factor in $n$. The bias of the operator $T^\theta$ also appears in that term.
\end{itemize}
Once we have estimates for each of these sources of error, we combine them to get a recursive bound on the total error in terms of total errors at lower levels, and then use a discrete Gronwall-type inequality presented in Lemma \ref{lemma:refined-gronwall}.

We now study each term of the upper bound \eqref{eq:error-decomposition}. From now on we work under Assumptions \ref{assumption:data}, \ref{assumption:random-variables} and \ref{assumption:lipschitz-condition-l2-stochastic-operator} unless specified otherwise.

\subsection{Distributional Properties of the General MLMC Estimator}
\label{subsection:technal-lemmas}

We state a lemma which ensures that $\theta$ only contributes as an index in the definition of the general MLMC estimator given by \eqref{eq:MLMC-estimator-approx-T} and state some measurability properties of the general MLMC estimator in the framework of admissible stochastic operators.
\begin{lemma}
    \label{lemma:theta-is-just-an-index}
    Suppose Assumptions \ref{assumption:data} and \ref{assumption:random-variables} hold. Let $\mathbf T$ be an admissible family of stochastic operators in the sense of Definition \ref{def:admissible-stochastic-operator}, let $Q^\theta_{n,M,\mathbf T}$ be defined as in Definition \ref{def:general-MLMC-estimator}. Then, for all $(s,a) \in \mathcal S \times \mathcal A, n \in \N, M \in \N^*, \theta \in \Theta$:
    \begin{itemize}
	    \item[(i)] There exists a Polish space $\mathcal U_n$, a measurable function $f_n : \mathcal S \times \mathcal A \times \mathcal U_n \rightarrow \R$ and random variables $U_n^\theta$ valued in $\mathcal U_n$ such that $Q_{n,M,\mathbf T}^\theta(s,a) = f_n(s,a,U_n^\theta)$ for all $\theta \in \Theta, (s,a) \in \mathcal S \times \mathcal A$. Moreover, these random variables can be taken such that if $\theta \in \Z^m$, $U^{\theta}_n$ is independent of $\left((S^{\theta^\prime}_{s,a})_{(s,a)\in \mathcal S \times \mathcal A}, A^{\theta^\prime} \right)_{\theta^\prime \in \cup_{m^\prime < m + 2} \Z^{m^\prime}}$. In particular, $Q^\theta_{n,M,\mathbf T}(s,a)$ is a random variable.
        
        \item[(ii)] $\sigma \left(Q^\theta_{n,M,\mathbf T}(s,a)\right) \subseteq \sigma \left(\left(A^{(\theta,\theta^\prime)}\right)_{\theta^\prime \in \bigcup_{m \geq 3}\Z^m}, \left(S^{(\theta,\theta^\prime)}_{s^\prime, a^\prime}\right)_{\theta^\prime \in  \bigcup_{m \geq 2}\Z^m, s^\prime \in \mathcal S, a^\prime \in \mathcal A}\right)$;
        
        \item[(iii)] if $\theta \in \Z^m$, $Q^\theta_{n,M,\mathbf T}(s,a)$ is independent from $\left(A^{\theta^\prime}\right)_{\theta^\prime \in \bigcup_{m^\prime \leq m + 2}\Z^{m^\prime}}$ and from \\ $\left(S^{\theta^\prime}_{s^\prime, a^\prime}\right)_{\theta^\prime \in  \bigcup_{m^\prime \leq m + 1}\Z^{m^\prime}, s^\prime \in \mathcal S, a^\prime \in \mathcal A}$;
        
        \item[(iv)] if $\theta \neq \theta^\prime$ for $\theta, \theta^\prime \in \Z^m$, then for all $n^\prime \in \N, M^\prime \in \N^*$, $Q^\theta_{n,M,\mathbf T}(s,a)$ and $Q^{\theta^\prime}_{n^\prime, M^\prime, \mathbf T}(s,a)$ are independent;
        \item[(v)] $\left(Q^{\theta^\prime}_{n,M,\mathbf T} \right)_{\theta^\prime \in \Theta}$ are identically distributed.
    \end{itemize}
\end{lemma}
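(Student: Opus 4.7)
The plan is to establish (i)--(v) simultaneously by strong induction on the level $n$, using the construction of a Polish-valued driving variable $U^\theta_n$ in (i) as the scaffold from which the measurability/independence claims (ii)--(v) follow by bookkeeping of the index prefixes carried by the underlying primitive samples. The base case $n=0$ is immediate because $Q^\theta_{0,M,\mathbf T}(s,a)=Q_0(s,a)$ is deterministic: take $\mathcal U_0$ to be a singleton Polish space, so that (i) is trivial and (ii)--(v) are vacuous or immediate.

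For the inductive step, assume (i)--(v) hold at levels $0,1,\dots,n-1$ and unfold the definition \eqref{eq:MLMC-estimator-approx-T}. For each $l\in\{0,\dots,n-1\}$ and $i\in\{1,\dots,M^{n-l}\}$, the random variable $\hat Q^\theta_{n,M,\mathbf T}(s,a)$ depends only on: (a) $S^{(\theta,l,i)}_{s,a}=f(s,a,U^{(\theta,l,i)})$ by Assumption~\ref{assumption:random-variables}\ref{item:S_sample}; (b) admissible operator evaluations $T^{(\theta,l,i)}Q(s')=\Phi\bigl((Q(s',A^{((\theta,l,i),k)}))_{k\in\Z},K^{(\theta,l,i)}\bigr)$ by Definition~\ref{def:admissible-stochastic-operator}; and, for $l\ge 1$, (c) the lower-level estimators $Q^{(\theta,l,i)}_{l,M,\mathbf T}$ and $Q^{(\theta,-l,i)}_{l-1,M,\mathbf T}$, which by the induction hypothesis are measurable functions $f_l,f_{l-1}$ of their own Polish-valued driving variables $U^{(\theta,l,i)}_l,U^{(\theta,-l,i)}_{l-1}$. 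Concatenating all these primitives (a countable family) together with the lower-level $U$'s yields a Polish-valued $U^\theta_n$; composing the associated measurable maps and applying the truncation $\rho_\alpha^\beta$ gives a measurable $f_n$ with $Q^\theta_{n,M,\mathbf T}(s,a)=f_n(s,a,U^\theta_n)$, proving (i).

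Statements (ii)--(v) are then read off from the indexing. Every primitive variable entering $U^\theta_n$ carries an index beginning with $\theta$ followed by a suffix of length $\ge 2$ for $S$- (and $U$-) variables and $\ge 3$ for $A$-variables; the minimum is attained at the $l=0$ term via $S^{(\theta,0,i)}_{s,a}$ and $A^{((\theta,0,i),k)}$ respectively, since $Q_0$ is deterministic and the recursive calls only push the suffix length up by the induction hypothesis. This yields (ii); combined with the mutual independence of the primitive family $(A^{\theta'},U^{\theta'},K^{\theta'})_{\theta'\in\Theta}$ granted by Assumption~\ref{assumption:random-variables} and Definition~\ref{def:admissible-stochastic-operator}, it yields both the independence clause in (i) and statement (iii). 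For (iv), if $\theta\ne\theta'$ in $\Z^m$, the $(\theta,\cdot)$- and $(\theta',\cdot)$-indexed primitives feeding $U^\theta_n$ and $U^{\theta'}_{n'}$ form disjoint subfamilies of the independent primitive family, so $Q^\theta_{n,M,\mathbf T}(s,a)$ and $Q^{\theta'}_{n',M',\mathbf T}(s,a)$ are independent. For (v), the index-shift $\theta''\mapsto(\theta,\theta'')$ is a bijection of $\Theta$ under which the joint law of the i.i.d.\ primitive family is invariant, and $f_n$ does not depend on $\theta$; hence $Q^\theta_{n,M,\mathbf T}$ has the same law for every $\theta\in\Theta$.

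The main obstacle I anticipate is the bookkeeping of index lengths across the recursion, i.e.\ verifying that the suffix-length lower bounds $\ge 2$ (for $S$) and $\ge 3$ (for $A$) propagate cleanly through one composition of an admissible $T^{(\theta,l,i)}$ with a recursive $Q^{(\theta,l,i)}_{l,M,\mathbf T}$. A convenient way to package this is to strengthen (i) in the induction by recording that every primitive in $U^\theta_n$ has index suffix length bounded below by these exact values; then at each recursion step only the $l=0$ summand of \eqref{eq:MLMC-estimator-approx-T} is binding, since deeper $l$ and the recursive calls add further to the suffix length by the induction hypothesis. Once this invariant is in place, (ii)--(v) reduce to disjointness of the relevant index sets combined with joint independence of the underlying $A,U,K$ families.
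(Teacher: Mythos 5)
Your proposal is correct and follows essentially the same route as the paper: induction on $n$, representing $Q^\theta_{n,M,\mathbf T}(s,a)$ as a measurable function of a Polish-valued driving variable built by concatenating the primitives $U^{(\theta,\cdot)}, A^{(\theta,\cdot)}, K^{(\theta,\cdot)}$, and then deducing (ii)--(v) from disjointness of the index sets together with the joint independence and i.i.d.\ structure of the primitive family (the paper proves (i) this way and outsources the index bookkeeping for (ii)--(v) to Lemma 3.9 of \citet{hutzenthaler2021multilevel}, which you instead spell out). The only nitpick is that the map $\theta''\mapsto(\theta,\theta'')$ in your argument for (v) is an injection of $\Theta$ onto its image rather than a bijection of $\Theta$, but since the primitives are i.i.d.\ this does not affect the conclusion.
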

\begin{proof}
	Fix $M \in \N^*, \theta \in \Theta$. We start by proving (i). For $n = 0$, $Q_0$ is measurable and bounded by assumption, so we just take $f_0 = Q_0, \mathcal U_0 = \emptyset$. In order to show the property for $n=1$, we first show that $T^\theta Q_0 (S_{s,a}^\theta)$ can be represented as $g_1(s,a,U_1)$ for some random variable $U_1$ valued in a Polish space to be defined. Using Assumption \ref{assumption:random-variables} and Definition \ref{def:admissible-stochastic-operator} gives the following representation of $ T^\theta Q_0(S_{s,a}^\theta)$,
	\begin{equation}
	\label{eq:representation-TQ0}
	T^\theta Q_0(S_{s,a}^\theta) = \Phi \left(\left(Q_0 \left(S_{s,a}^\theta,A^{(\theta,k)} \right)\right)_{k \in \Z}, K^\theta\right) = \Phi \left( \left(f_0 \left(f \left(s,a,U^\theta \right),A^{(\theta, k)}\right)\right)_{k \in \Z}, K^\theta\right).
	\end{equation}
	Hence, by setting $g_1(s^\prime,a^\prime,u,(a_k)_{k \in \Z},k)) = \Phi\left(\left(f_0(f(s,a,u),a_k)\right)_{k \in \Z},k\right)$, $\mathcal U_1 = [0,1] \times \mathcal A ^{\Z} \times \N$ and $U_1^\theta = (U^\theta, (A^{(\theta,k)})_{k \in \Z}, K^\theta)$, we have $T^\theta Q_0 (S_{s,a}^\theta) = g_1(s,a,U_1^\theta)$. It is easy to check that $g_1$ is measurable, and that $\mathcal U_1$ is Polish as a countable product of Polish spaces. Moreover, $U^\theta_1$ is independent of $\left((S^{\theta^\prime}_{s,a} )_{(s,a) \in \mathcal S \times \mathcal A}, A^{\theta^\prime}\right)$ for $\theta^\prime \in \Z^{m^\prime}$ with $m^\prime < m$ where $\theta \in  \Z^m$. Finally, notice that by definition $\sigma \left(Q_{1,M,\mathbf T}^\theta(s,a) \right) \subseteq \sigma \left( T^{\theta^\prime}Q_0(S^{\theta^\prime}_{s,a}): \theta^\prime \in \Z^{m^\prime}\right)$ for $m^\prime = m + 2$, hence this shows that $Q_{1,M,\mathbf T}^\theta(s,a) = f_1(s,a,U_1^\theta)$ for some measurable function $f_1$, for a random variable $U^\theta_1$ independent of $\left((S^{\theta^\prime}_{s,a})_{(s,a)\in \mathcal S \times \mathcal A}, A^{\theta^\prime} \right)$.  The proof extends to any $n \geq 2$ by recursion. The proof of (ii)-(v) is exactly as in Lemma 3.9 in \citet{hutzenthaler2021multilevel}.
\end{proof}
 This allows to only consider the computation of the MLMC estimator for $\theta = 0$.

\subsection{Analysis of the Monte Carlo Error}
\label{subsection:analysis-mc-error}
We first analyse the variance of the estimator given by Definition \ref{def:general-MLMC-estimator}.
\begin{proposition}
    \label{prop:variance-monte-carlo}
    Let $(s,a) \in \mathcal S \times \mathcal A$, and let $n,M$ be positive integers. Assume that $\mathbf T$ is an admissible family of stochastic operators satisfying Assumption \ref{assumption:lipschitz-condition-l2-stochastic-operator} with $L = L(\alpha, \beta)$.
    Then for any $\theta \in \Theta$,
    \[
        \mathrm{Var} \left(Q _{n,M,\mathbf T} ^{\theta}(s,a)\right) = \mathrm{Var} \left(Q _{n,M,\mathbf T} ^{0}(s,a) \right) \leq \mathrm{Var} \left(\hat{Q} _{n,M,\mathbf T} ^{0}(s,a)\right),
    \]
    and we have
    \begin{equation*}
        \label{eq:monte-carlo-error-final}
        \mathrm{Var} \left(\hat{Q} _{n,M,\mathbf T} ^{0}(s,a)\right)\leq \frac{\gamma^2 \sigma_{\mathbf T}(s,a)^2}{M^n}+ \sum_{l = 1}^n \frac{(\gamma L)^2}{M^{n-l}} \left\| Q_{l,M,\mathbf T} ^{(0,l)} \left(S_{s,a}^{0},A^{0} \right) - Q_{l-1,M,\mathbf T} ^{(0,-l)} \left(S_{s,a}^0,A^{0} \right) \right\|^2_{L^2},
    \end{equation*}
    where $ \sigma_{\mathbf T}(s,a)^2 = \mathrm{Var} \left(T^0 Q_0 (S^0_{s,a}) \right)$.
\end{proposition}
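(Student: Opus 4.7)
The plan is to prove the three assertions in order. The first equality $\mathrm{Var}(Q^\theta_{n,M,\mathbf T}(s,a)) = \mathrm{Var}(Q^0_{n,M,\mathbf T}(s,a))$ is immediate from Lemma \ref{lemma:theta-is-just-an-index}(v), which guarantees that the family $(Q^\theta_{n,M,\mathbf T}(s,a))_{\theta \in \Theta}$ is identically distributed.

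For the second inequality, I would observe that $Q^0_{n,M,\mathbf T}(s,a) = \rho_\alpha^\beta\bigl(\hat Q^0_{n,M,\mathbf T}(s,a)\bigr)$, and that the truncation map $\rho_\alpha^\beta$ is $1$-Lipschitz. A standard coupling argument (let $X'$ be an independent copy of $X = \hat Q^0_{n,M,\mathbf T}(s,a)$ and write $\mathrm{Var}(\rho_\alpha^\beta(X)) = \tfrac12 \mathbb E[(\rho_\alpha^\beta(X) - \rho_\alpha^\beta(X'))^2] \le \tfrac12 \mathbb E[(X-X')^2] = \mathrm{Var}(X)$) yields the claim.

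The main work is the variance decomposition. Writing $\hat Q^0_{n,M,\mathbf T}(s,a) - c(s,a)$ as a weighted sum of the level-$0$ terms $T^{(0,0,i)} Q_0(S^{(0,0,i)}_{s,a})$ for $i = 1,\ldots,M^n$ and the level-$l$ increments $\Delta^{(0,l,i)} \coloneqq T^{(0,l,i)} Q^{(0,l,i)}_{l,M,\mathbf T}(S^{(0,l,i)}_{s,a}) - T^{(0,l,i)} Q^{(0,-l,i)}_{l-1,M,\mathbf T}(S^{(0,l,i)}_{s,a})$ for $l=1,\ldots,n-1$ and $i = 1,\ldots,M^{n-l}$, I would invoke Lemma \ref{lemma:theta-is-just-an-index} (parts (iii)--(iv)) to establish that these blocks are mutually independent across distinct indices $(l,i)$, and that within each level $l$ the $M^{n-l}$ increments are i.i.d. Hence the variance decomposes as a sum over levels, and within each level the variance of the average equals the variance of a single term divided by $M^{n-l}$, giving
\begin{equation*}
\mathrm{Var}(\hat Q^0_{n,M,\mathbf T}(s,a)) = \frac{\gamma^2}{M^n}\mathrm{Var}\bigl(T^{(0,0,1)} Q_0(S^{(0,0,1)}_{s,a})\bigr) + \gamma^2 \sum_{l=1}^{n-1} \frac{1}{M^{n-l}} \mathrm{Var}(\Delta^{(0,l,1)}).
\end{equation*}
Bounding each $\mathrm{Var}(\Delta^{(0,l,1)})$ by its second moment and then applying the Lipschitz estimate \eqref{eq:lipschitz-condition-l2-stochastic-operator} from Assumption \ref{assumption:lipschitz-condition-l2-stochastic-operator}\ref{item:lipschitz-property} with $Q_1 = Q^{(0,l,1)}_{l,M,\mathbf T}$, $Q_2 = Q^{(0,-l,1)}_{l-1,M,\mathbf T}$ and $S = S^{(0,l,1)}_{s,a}$ produces the desired factor $(\gamma L)^2/M^{n-l}$ times the $L^2$ distance between the two iterates evaluated at $(S_{s,a}^0, A^0)$ (relabelling of indices is permitted by the identical-distribution statement in Lemma \ref{lemma:theta-is-just-an-index}(v)).

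The main obstacle is the careful verification of the hypotheses of Assumption \ref{assumption:lipschitz-condition-l2-stochastic-operator}\ref{item:lipschitz-property} for each increment: one needs $Q^{(0,l,1)}_{l,M,\mathbf T}$ and $Q^{(0,-l,1)}_{l-1,M,\mathbf T}$ to be $[\alpha,\beta]$-valued (which holds because of the outer truncation in Definition \ref{def:general-MLMC-estimator}), and, more delicately, one needs the joint law of $\bigl(Q_i(S^{(0,l,1)}_{s,a}, A^{(\theta,k)})\bigr)_{k\in\mathbb Z}$ to be stationary in $k$ and independent of $K^\theta$ for $\theta = (0,l,1)$. This reduces to checking that the random variables indexing the two iterates $Q^{(0,l,1)}_{l,M,\mathbf T}$ and $Q^{(0,-l,1)}_{l-1,M,\mathbf T}$ are disjoint from, and independent of, $\bigl(A^{((0,l,1),k)}\bigr)_{k\in\mathbb Z}$, $K^{(0,l,1)}$ and $S^{(0,l,1)}_{s,a}$. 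This is precisely what the index-tracking in Lemma \ref{lemma:theta-is-just-an-index}(ii)--(iii) provides, and once this bookkeeping is in place the estimate \eqref{eq:monte-carlo-error-final} follows directly.
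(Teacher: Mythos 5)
Your proposal is correct and follows essentially the same route as the paper's proof: identical distribution in $\theta$ for the first equality, variance reduction under truncation for the second inequality (the paper simply asserts this, while you supply the standard $1$-Lipschitz coupling argument), and then the level-wise independence decomposition followed by bounding each increment's variance by its second moment and applying the $L^2$-Lipschitz property of $\mathbf T$ with a final relabelling of indices. The only cosmetic difference is that your exact decomposition runs over levels $l=1,\dots,n-1$ while the stated bound sums to $n$; since the extra summand is non-negative this is immaterial, exactly as in the paper.
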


\begin{proof}
    Throughout the proof, we assume that $\sigma_{\mathbf T}(s,a) < \infty$, otherwise the result is trivial. Observe that $\mathrm{Var} (Q _{n,M,\mathbf T} ^{\theta}(s,a)) = \mathrm{Var}(Q _{n,M,\mathbf T} ^{0}(s,a))$ is an immediate consequence of Lemma \ref{lemma:theta-is-just-an-index}.(v). Since $Q^0_{n,M,\mathbf T}$ is a truncated version of $\hat{Q}^0_{n,M,\mathbf T}$, it is clear that $\mathrm{Var}(Q _{n,M,\mathbf T} ^{0}(s,a)) \leq \mathrm{Var}(\hat{Q} _{n,M,\mathbf T} ^{0}(s,a))$. Moreover, it is easy to see that $\left(T^{\theta^\prime}Q^{\theta^{\prime\prime}}_{l,M,\mathbf T}\left(S_{s,a}^{\theta^\prime}\right)\right)_{\theta^\prime, \theta^{\prime \prime} \in \Z^m}$ are i.i.d.~for any fixed $m,l,M,s$ and $a$. Combining that with Lemma \ref{lemma:theta-is-just-an-index}.(iv), we can use independence to get
    \begin{align*}
        \mathrm{Var} \left(Q^0_{n,M,\mathbf T}(s,a) \right) & \leq  \mathrm{Var}\left(\hat{Q}^0_{n,M,\mathbf T}(s,a)\right) \\
            & = \frac{\gamma^2}{M ^{n}} \mathrm{Var}\left[T^{(0,0,1)}Q_0 \left(S^0_{s,a} \right)\right] \\
            & + \sum_{l = 1}^{n-1} \frac{\gamma^2}{M ^{n - l}} \mathrm{Var} \left[ T^{(0,l,1)}Q _{l,M,\mathbf T}^{(0,l,1)} \left(S_{s,a}^{(0,l,1)} \right)   - T^{(0,l,1)}Q _{l,M,\mathbf T}^{(0,-l,1)}\left(S_{s,a}^{(0,l,1)} \right) \right].
    \end{align*}
    By distributional equality, we have $\mathrm{Var}[T^{(0,0,1)}Q_0(S^0_{s,a})] = \sigma_{\mathbf T}(s,a)^2$. Using the fact that $\| X \|^2_{L^2} \geq \mathrm{Var}(X)$ and the Lipschitz property \eqref{eq:lipschitz-condition-l2-stochastic-operator}, we have 
    \begin{equation*}
        \begin{aligned}
            &\mathrm{Var} \left(\hat{Q} _{n,M,\mathbf T} ^{0}(s,a)\right) \\
            & \leq \frac{\gamma^2}{M ^{n}} \sigma _{\mathbf T}(s,a) ^{2}     + \sum_{l = 1}^{n} \frac{\gamma^2}{M ^{n - l}} \left\| T^{(0,l,1)} Q_{l,M,\mathbf T} ^{(0,l,1)}\left(S^{(0,l,1)}_{s,a}\right) - T^{(0,l,1)} Q_{l-1,M,\mathbf T} ^{(0,-l,1)} \left(S^{(0,l,1)}_{s,a}\right) \right\|_{L^2}^2\\
            & \leq \frac{\gamma^2}{M^n}\sigma_{\mathbf T}(s,a)^2  + \sum_{l = 1}^n \frac{(\gamma L)^2}{M^{n-l}} \left\| Q_{l,M,\mathbf T} ^{(0,l,1)}\left(S_{s,a}^{(0,l,1)},A^{(0,l,1,1)}\right) - Q_{l-1,M,\mathbf T} ^{(0,-l,1)}\left(S_{s,a}^{(0,l,1)},A^{(0,l,1,1)}\right) \right\|^2_{L^2} .
        \end{aligned}
    \end{equation*}
    Finally, observing that $Q_{l,M,\mathbf T} ^{(0,l,1)}(S_{s,a}^{(0,l,1)},A^{(0,l,1,1)}) - Q_{l-1,M,\mathbf T} ^{(0,-l,1)}(S_{s,a}^{(0,l,1)},A^{(0,l,1,1)})$ has the same distribution as $Q_{l,M,\mathbf T} ^{(0,l)}(S_{s,a}^{0},A^{0}) - Q_{l-1,M,\mathbf T} ^{(0,-l)}(S_{s,a}^0,A^{0})$ (Lemma \ref{lemma:theta-is-just-an-index}) concludes the proof.
\end{proof}
\begin{remark}
    
    Notice that, because we eventually want to take a supremum outside of the expectation, we need to rely on a Lipschitz property similar to the one satisfied by $T$ in Lemma \ref{lemma:contraction-prop-T-l2}. Hence Assumption \ref{assumption:lipschitz-condition-l2-stochastic-operator}.\ref{item:lipschitz-property} is crucial to carry out the recursive analysis, and we cannot simply rely on a $\| \cdot \|_{\infty}$-Lipschitz property of $T^\theta$.
\end{remark}
\subsection{Analysis of the Truncation Error}
\label{subsection:analysis-truncation}
We now look at the following error term corresponding to the truncation error in \eqref{eq:error-decomposition},
\begin{equation*}
    \label{eq:truncation-error}
    \delta^{\mathrm{trunc}}_{n,M,\mathbf T}(s,a) \coloneqq \left| \mathbb E \left[ Q ^{0}_{n,M,\mathbf T}(s,a) \right] - \mathbb E \left[ \hat{Q} ^{0}_{n,M,\mathbf T} (s,a) \right] \right|.
\end{equation*}

\begin{proposition}
    \label{proposition:truncation-cauchy-schwarz}
    We have for any $ \in \N, M \in \N^*, \theta \in \Theta$,
    \begin{equation*}
        \label{eq:truncation-upper-bound-cauchy-schwarz}
        \delta^{\mathrm{trunc}}_{n,M,\mathbf T}(s,a)^2 \leq \max\left\{\mathbb P \left(\hat{Q}^\theta_{n,M, \mathbf T}(s,a) < \alpha \right), \mathbb P \left(\hat Q^\theta_{n,M, \mathbf T} (s,a) > \beta \right)\right\} \mathrm{Var}\left(\hat{Q}^\theta_{n,M, \mathbf T}(s,a)\right).
    \end{equation*}
\end{proposition}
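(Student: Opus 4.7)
Let $X \coloneqq \hat{Q}^\theta_{n,M,\mathbf{T}}(s,a)$ and $Y \coloneqq Q^\theta_{n,M,\mathbf{T}}(s,a) = \rho_\alpha^\beta(X)$, so that $\delta^{\mathrm{trunc}}_{n,M,\mathbf{T}}(s,a)=|\mathbb{E}[Y-X]|$. By construction of the truncation,
\[
Y - X = (\alpha - X)\mathbf{1}_{\{X<\alpha\}} + (\beta - X)\mathbf{1}_{\{X>\beta\}},
\]
where the first summand is nonnegative and the second is nonpositive. Setting $A \coloneqq \mathbb{E}[(\alpha - X)\mathbf{1}_{\{X<\alpha\}}] \ge 0$ and $B \coloneqq \mathbb{E}[(X-\beta)\mathbf{1}_{\{X>\beta\}}] \ge 0$, one has $\delta^{\mathrm{trunc}}_{n,M,\mathbf{T}}(s,a) = |A - B| \le \max(A, B)$, hence it suffices to bound $\max(A^2, B^2)$. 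By Lemma~\ref{lemma:theta-is-just-an-index}(v) the right-hand side of the claimed inequality is $\theta$-independent, so I may work with a single $\theta$.

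The next step is to apply the Cauchy--Schwarz inequality to each of $A$ and $B$:
\[
A^2 \le \mathbb{E}\bigl[(\alpha - X)^2 \mathbf{1}_{\{X<\alpha\}}\bigr]\, \mathbb{P}(X<\alpha),\qquad
B^2 \le \mathbb{E}\bigl[(X-\beta)^2 \mathbf{1}_{\{X>\beta\}}\bigr]\, \mathbb{P}(X>\beta).
\]
The plan is then to bound each second-moment factor by $\mathrm{Var}(X)$. The elementary observation is that once $\mathbb{E}[X]\in[\alpha,\beta]$ is available, the chain $0 \le \alpha-X \le \mathbb{E}[X]-X$ holds pointwise on $\{X<\alpha\}$, so $(\alpha-X)^2\mathbf{1}_{\{X<\alpha\}} \le (\mathbb{E}[X]-X)^2$ and integration gives the first bound; a symmetric argument using $\mathbb{E}[X]\le\beta$ gives the second. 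Combining with $(\delta^{\mathrm{trunc}})^2\le\max(A^2,B^2)$ delivers the proposition.

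The main obstacle is thus the auxiliary claim that $\alpha \le \mathbb{E}[\hat{Q}^0_{n,M,\mathbf{T}}(s,a)] \le \beta$ for every $n$, $M$, $s$, $a$. I would prove this by induction on $n$. The base case is immediate since $Q_0\in[\alpha,\beta]$. For the induction step, I take expectation in \eqref{eq:MLMC-estimator-approx-T}; the independence properties collected in Lemma~\ref{lemma:theta-is-just-an-index} (together with Definition~\ref{def:admissible-stochastic-operator}) imply that at each level $l$ the operator $T^{(0,l,i)}$ is independent of the pair $\bigl(Q^{(0,\pm l,i)}_{\cdot,M,\mathbf{T}}, S^{(0,l,i)}_{s,a}\bigr)$, and the two level-$l$ terms share the marginal law of $Q^{0}_{\cdot,M,\mathbf{T}}$. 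Writing $\bar{T}Q \coloneqq \mathbb{E}[T^\theta Q]$ acting conditionally on the (random) $Q$ and conditioning, the level-by-level differences telescope to
\[
\mathbb{E}[\hat{Q}^0_{n,M,\mathbf{T}}(s,a)] = c(s,a) + \gamma\,\mathbb{E}\bigl[\bar{T}Q^{0}_{n-1,M,\mathbf{T}}(S^{0}_{s,a})\bigr].
\]
Since $Q^{0}_{n-1,M,\mathbf{T}} = \rho_\alpha^\beta(\hat{Q}^{0}_{n-1,M,\mathbf{T}}) \in [\alpha,\beta]$ almost surely by construction, Assumption~\ref{assumption:lipschitz-condition-l2-stochastic-operator}\ref{item:boundedness}, applied conditionally on $Q^{0}_{n-1,M,\mathbf{T}}$ and then integrated, closes the induction and completes the proof.
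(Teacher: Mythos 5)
Your proposal is correct and follows essentially the same route as the paper: the same decomposition of $Y-X$ into the two opposite-signed truncation terms, the same reduction $|A-B|\le\max(A,B)$, the same use of $\mathbb E[\hat Q^0_{n,M,\mathbf T}(s,a)]\in[\alpha,\beta]$ (which the paper asserts in one line via the telescoping identity and Assumption \ref{assumption:lipschitz-condition-l2-stochastic-operator}\ref{item:boundedness}, exactly as your induction makes explicit), and the same Cauchy--Schwarz bound by $\mathbb P(\cdot)\,\mathrm{Var}(\hat Q)$. The only cosmetic difference is that the paper first dominates $\alpha-X$ by $\mathbb E[X]-X$ pointwise and then applies Cauchy--Schwarz, whereas you apply Cauchy--Schwarz first and then bound the resulting second moment by the variance.
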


\begin{proof}
    We assume that $\mathrm{Var}(\hat{Q}^\theta_{n,M, \mathbf T}(s,a)) < \infty$, otherwise the result is trivial. Observe that the upper bound is independent of $\theta$ due to Lemma \ref{lemma:theta-is-just-an-index}. For \((s,a) \in \mathcal S \times \mathcal A\), dropping the indices \(M,\mathbf T, \theta\)
    \[
        \delta^{\mathrm{trunc}}_n(s,a) = \left| \mathbb E \left[ \mathbf 1 _{\hat{Q}_n(s,a) < \alpha} \left(\alpha - \hat{Q}_n(s,a)\right)  + \mathbf 1 _{\hat{Q}_n(s,a) > \beta}\left(\beta - \hat{Q}_n(s,a)\right)\right] \right|.
    \]
    Notice that the two terms are of opposite sign, and observe further that $\mathbb E \hat Q_n(s,a) = c(s,a) + \gamma\mathbb E T^0  Q_{n - 1}(S^0_{s,a}) \in [\alpha, \beta]$ by Assumption \ref{assumption:lipschitz-condition-l2-stochastic-operator}.\ref{item:boundedness}.
   By the Cauchy--Schwarz inequality,
    \begin{align*}
        \mathbb E \left[ 1 _{\hat{Q_n}(s,a) < \alpha} \left(\alpha - \hat{Q}_n(s,a)\right) \right]^2 & \leq   \mathbb E \left[ 1 _{\hat{Q_n}(s,a) < \alpha} \left(\mathbb E \left[\hat Q_n(s,a)\right] - \hat{Q}_n(s,a)\right) \right]^2 \\
        & \leq \mathbb P \left(\hat{Q}_n(s,a) < \alpha \right) \mathrm{Var}\left(\hat{Q}_n(s,a)\right).
    \end{align*}
    Similarly, we have
    \[
        \mathbb E \left[1 _{\hat{Q}_n(s,a) > \beta}\left(\beta - \hat{Q}_n(s,a)\right) \right]^2 \leq \mathbb P \left(\hat Q_n (s,a) > \beta \right)\mathrm{Var} \left(\hat{Q}_n(s,a)\right).
    \]
    Therefore, we have the following upper bound on the truncation error
    \[
        \delta^{\mathrm{trunc}}_{n}(s,a)^2 \leq \max \left\{ \mathbb P \left(\hat{Q}_n(s,a) < \alpha \right), \mathbb P \left(\hat Q_n (s,a) > \beta \right)\right\} \mathrm{Var}\left(\hat{Q}_n(s,a)\right).
    \]
\end{proof}
\subsection{Analysis of the Bias}
\label{subsection:analysis-bias}
We now look at the bias of the untruncated estimator
\[
    \delta ^{\mathrm{bias}}_{n,M, \mathbf T} \coloneqq \left| \mathbb E \left[\hat{Q} ^{0} _{n,M,\mathbf T}(s,a)\right] - Q ^{\star}(s,a) \right|.
\]
\begin{proposition}
    \label{lemma:bias-upper-bound}
    For any $(s,a) \in \mathcal S \times \mathcal A$,
    \begin{equation*}
        \label{eq:bias-upper-bound}
        \delta ^{\mathrm{bias}}_{n,M, \mathbf T} \leq \gamma L  \left\| Q^0_{n-1,M,\mathbf T} \left(S^0_{s,a}, A^{0} \right) - Q^\star \left(S^0_{s,a}, A^{0}\right) \right\|_{L^2} + \gamma \delta_{\mathbf T}(s,a),
    \end{equation*}
    where $\delta_{\mathbf T}$ is defined by \eqref{eq:def-sigma-t}.
\end{proposition}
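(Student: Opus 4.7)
The plan is to first write $\mathbb{E}[\hat{Q}^0_{n,M,\mathbf T}(s,a)]$ in closed form via a telescoping argument, then compare with the Bellman fixed-point identity for $Q^\star$, and finally split the resulting difference into a Lipschitz contribution and the bias term $\delta_{\mathbf T}(s,a)$.

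Starting from Definition \ref{def:general-MLMC-estimator}, I would take the expectation of $\hat{Q}^0_{n,M,\mathbf T}(s,a)$ term by term. By the distributional-invariance statement in Lemma \ref{lemma:theta-is-just-an-index}(v), for each level $l\ge 1$ and index $i$, the triples $(T^{(0,l,i)}, Q^{(0,l,i)}_{l,M,\mathbf T}, S^{(0,l,i)}_{s,a})$ and $(T^{(0,l,i)}, Q^{(0,-l,i)}_{l-1,M,\mathbf T}, S^{(0,l,i)}_{s,a})$ have the same joint laws as $(T^0, Q^0_{l,M,\mathbf T}, S^0_{s,a})$ and $(T^0, Q^0_{l-1,M,\mathbf T}, S^0_{s,a})$ respectively, so the inner averages over $i$ collapse and the outer sum over $l$ telescopes to
\[
\mathbb{E}[\hat{Q}^0_{n,M,\mathbf T}(s,a)] \;=\; c(s,a) + \gamma\,\mathbb{E}\!\left[T^0 Q^0_{n-1,M,\mathbf T}(S^0_{s,a})\right],
\]
where I use $Q^0_{0,M,\mathbf T}=Q_0$. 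Since $Q^\star$ is the fixed point of the regularized Bellman operator, $Q^\star(s,a)=c(s,a)+\gamma\,\mathbb{E}[TQ^\star(S^0_{s,a})]$, so subtraction yields $\delta^{\mathrm{bias}}_{n,M,\mathbf T}=\gamma\,\bigl|\mathbb{E}[T^0 Q^0_{n-1,M,\mathbf T}(S^0_{s,a})-TQ^\star(S^0_{s,a})]\bigr|$.

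Next I would add and subtract $T^0 Q^\star(S^0_{s,a})$ inside the expectation and apply the triangle inequality. One of the two resulting summands is exactly $\gamma\,\delta_{\mathbf T}(s,a)$ by \eqref{eq:def-sigma-t}. The other summand is $\gamma\,|\mathbb{E}[T^0 Q^0_{n-1,M,\mathbf T}(S^0_{s,a})-T^0 Q^\star(S^0_{s,a})]|$, which I would bound by $\gamma\,\|T^0 Q^0_{n-1,M,\mathbf T}(S^0_{s,a})-T^0 Q^\star(S^0_{s,a})\|_{L^2}$ via Jensen's inequality and then by $\gamma L\,\|Q^0_{n-1,M,\mathbf T}(S^0_{s,a},A^0) - Q^\star(S^0_{s,a},A^0)\|_{L^2}$ via the Lipschitz estimate \eqref{eq:lipschitz-condition-l2-stochastic-operator} in Assumption \ref{assumption:lipschitz-condition-l2-stochastic-operator}(ii). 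Combining the two bounds yields the claim.

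The main obstacle is to verify the hypotheses required to invoke Assumption \ref{assumption:lipschitz-condition-l2-stochastic-operator}(ii) with $Q_1=Q^0_{n-1,M,\mathbf T}$, $Q_2=Q^\star$, and $S=S^0_{s,a}$. The boundedness $\alpha\le Q_i\le \beta$ is immediate: for $Q^0_{n-1,M,\mathbf T}$ it follows from the $\rho_\alpha^\beta$-truncation baked into Definition \ref{def:general-MLMC-estimator}, and for $Q^\star$ from Assumption \ref{assumption:data}. The identical-distribution of $(Q_i(S^0_{s,a},A^{(0,k)}))_{k\in\Z}$ and their independence from $K^0$ follow from Lemma \ref{lemma:theta-is-just-an-index}: every random input inside $Q^0_{n-1,M,\mathbf T}$ carries an index whose length is strictly greater than those of $A^{(0,k)}$ and $K^0$, so the requisite joint independence holds and the i.i.d.\ structure of the $A^{(0,k)}$'s transports to the evaluations $(Q_i(S^0_{s,a},A^{(0,k)}))_{k\in\Z}$.
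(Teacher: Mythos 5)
Your proposal is correct and follows essentially the same route as the paper's proof: take expectations in the MLMC recursion so the levels telescope to $c(s,a)+\gamma\,\mathbb E[T^0Q^0_{n-1,M,\mathbf T}(S^0_{s,a})]$, subtract the Bellman identity for $Q^\star$, insert $T^0Q^\star(S^0_{s,a})$, and bound the two pieces by the $L^2$ Lipschitz property of $\mathbf T$ and by $\delta_{\mathbf T}$ respectively, using Lemma \ref{lemma:theta-is-just-an-index} to pass between $A^{(0,1)}$ and $A^0$. Your explicit verification of the hypotheses of Assumption \ref{assumption:lipschitz-condition-l2-stochastic-operator}\ref{item:lipschitz-property} is in fact slightly more careful than the paper's.
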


\begin{proof}
    For $s \in \mathcal S, a \in \mathcal A, n \geq 1$, we have by a simple telescoping argument, using Assumption \ref{assumption:lipschitz-condition-l2-stochastic-operator}, the Bellman equation for  $Q^\star$  and the Cauchy--Schwarz inequality,
    \begin{equation*}
        \label{eq:bias-error}
        \begin{aligned}
            \delta ^{\text{bias}}_{n,M, \mathbf T} & = \gamma \left| \mathbb E \left[T^0Q_{n-1,M,\mathbf T}^{0}\left(S^0_{s,a}\right) \right] - \mathbb E \left[ TQ ^{\star}\left(S_{s,a}^0\right)\right] \right|        \\
      & \leq \gamma \left| \mathbb E \left[ T^0Q_{n-1,M,\mathbf T}^{0}\left(S^0_{s,a}\right) \right] - \mathbb E \left[ T^0Q ^{\star}\left(S^0_{s,a}\right)\right] \right|    \\  
      & + \gamma \left| \mathbb E \left[ T^0Q^\star \left(S^0_{s,a}\right) \right] - \mathbb E \left[TQ ^{\star}\left(S^0_{s,a}\right)\right] \right|                                 \\
                     & \leq \gamma L  \mathbb E \left|Q^0_{n-1,M,\mathbf T} \left(S^0_{s,a}, A^{(0,1)} \right) - Q^\star \left(S^0_{s,a}, A^{(0,1)}\right)\right|  + \gamma \delta_{\mathbf T} (s,a) \\
                    & \leq \gamma L  \left\| Q^0_{n-1,M,\mathbf T} \left(S^0_{s,a}, A^{(0, 1)}\right) - Q^\star \left(S^0_{s,a}, A^{(0,1)}\right) \right\|_{L^2} + \gamma \delta_{\mathbf T}(s,a).
        \end{aligned}
    \end{equation*}
    Finally, notice that $Q^0_{n-1,M,\mathbf T} (S^0_{s,a}, A^{(0, 1)}) - Q^\star (S^0_{s,a}, A^{(0,1)})$ and $Q^0_{n-1,M,\mathbf T} (S^0_{s,a}, A^{0}) - \\ Q^\star (S^0_{s,a}, A^{0})$ have the same distribution by Lemma \ref{lemma:theta-is-just-an-index}.
\end{proof}

\subsection{Putting Everything Together: Global Error}
\label{subsection:putting-everything-together}
We first prove a lemma to help us work with a supremum over the state and action spaces.
\begin{lemma}
    \label{lemma:from-S-to-sup-s}
    Let $Q: \mathcal S \times \mathcal A \times \Omega \rightarrow \R$ be a measurable bounded function. Suppose there exists a Polish space $\mathcal X$, a random variable $X$ valued in $\mathcal X$ and a measurable function $g: \mathcal S \times \mathcal A \times \mathcal X \rightarrow \R$ such that $Q(s,a, \omega) = g(s,a,X(\omega))$ for all $(s,a,\omega) \in \mathcal S \times \mathcal A \times \Omega$. Let $(s,a) \in \mathcal S \times \mathcal A$, let $S$ be a random variable distributed according to $P(\cdot | s, a)$ and let $A$ be a random variable distributed according to $\mu$ such that $S,A$ are independent of $X$. Then
    \[
        \mathbb E[Q(S,A)] \leq \sup_{s^\prime \in \mathcal S, a^\prime \in \mathcal A} \mathbb E[Q(s^\prime,a^\prime)].
    \]
\end{lemma}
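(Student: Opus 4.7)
The plan is to use the independence of $(S,A)$ and $X$ together with Fubini's theorem to decouple the two sources of randomness in $Q(S,A,\omega) = g(S(\omega),A(\omega),X(\omega))$, then bound the resulting integrand pointwise by the supremum. The structural assumption $Q(s,a,\omega) = g(s,a,X(\omega))$ is exactly what lets us isolate the randomness in $Q$ from the randomness in $(S,A)$; without it, a naive upper bound via the supremum could in principle fail because $Q(\cdot,\cdot,\omega)$ and $(S(\omega),A(\omega))$ might be correlated.

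First I would observe that by the composition representation, $Q(S,A) = g(S,A,X)$ as random variables, and boundedness of $Q$ (equivalently, of $g$ on the range of $(S,A,X)$) guarantees integrability throughout. Next I would define the deterministic function
\[
h(s',a') \coloneqq \mathbb E[g(s',a',X)] = \mathbb E[Q(s',a')], \qquad (s',a') \in \cS \times \cA,
\]
which is measurable and bounded by Fubini's theorem applied to the product space $(\cS\times\cA) \times \cX$ equipped with the product of the laws of $(S,A)$ and $X$ (both Polish, so this is standard). Since $(S,A)$ and $X$ are independent, Fubini's theorem yields
\[
\mathbb E[Q(S,A)] = \mathbb E[g(S,A,X)] = \int_{\cS\times\cA} h(s',a')\, \mathbb P_{(S,A)}(\diff s', \diff a').
\]

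Finally, since $h(s',a') \le \sup_{(s'',a'') \in \cS \times \cA} h(s'',a'') = \sup_{(s'',a'')} \mathbb E[Q(s'',a'')]$ for every $(s',a')$, and the supremum is a constant, integrating against the probability measure $\mathbb P_{(S,A)}$ gives
\[
\mathbb E[Q(S,A)] \le \sup_{(s',a') \in \cS \times \cA} \mathbb E[Q(s',a')],
\]
which is the claimed inequality. The only real obstacle is the measurability of $h$, but this is immediate from Fubini's theorem once we have the joint product structure, which in turn is guaranteed by the Polish space assumption on $\cX$ (and the tacit Polish assumption on $\cS\times\cA$ in Assumption \ref{assumption:data}). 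Note that the hypothesis that $(S,A)$ is independent of $X$ is used only in the Fubini step; the fact that $A \sim \mu$ and $S \sim P(\cdot|s,a)$ plays no role in the argument beyond ensuring that $(S,A)$ is a genuine $\cS\times\cA$-valued random variable.
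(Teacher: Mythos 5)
Your proof is correct and follows essentially the same route as the paper: both arguments use the independence of $(S,A)$ and $X$ to reduce $\mathbb E[Q(S,A)]$ to the integral of the deterministic function $(s',a')\mapsto \mathbb E[g(s',a',X)]=\mathbb E[Q(s',a')]$ against the law of $(S,A)$, and then bound that function pointwise by its supremum. The only cosmetic difference is that the paper packages the decoupling step as the conditional-expectation ``freezing'' lemma ($\mathbb E[Q(S,A)\mid S,A]=\psi(S,A)$, citing Proposition 1.12 of Da Prato--Zabczyk), whereas you invoke Fubini's theorem on the product law directly; these are the same tool.
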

\begin{proof}
	Since $Q$ is assumed to be bounded, one can assume that $g$ is bounded without loss of generality. By independence of $X$ and $(S,A)$, we can apply Proposition 1.12 in \citet{da2014stochastic} and write $\mathbb E[Q(S,A) \mid S,A] = \psi(S,A)$, where $\psi(s^\prime,a^\prime) = \mathbb E[g(s^\prime,a^\prime,X)] = \mathbb E[Q(s^\prime,a^\prime)]$. Hence, we have
	$$
	\begin{aligned}
		\mathbb E [Q(S,A)] &= \mathbb E \left[ \mathbb E [Q(S,A) \mid S,A]\right] 
        = \mathbb E [\psi(S,A)] 
          \\ &\leq \mathbb E \left[\sup_{s^\prime \in \mathcal S, a^\prime \in \mathcal A} \psi(s^\prime,a^\prime)\right] 
				  = \sup_{s^\prime \in \mathcal S, a^\prime \in \mathcal A} \psi(s^\prime,a^\prime) = \sup_{s^\prime \in \mathcal S, a^\prime \in \mathcal A} \mathbb E [Q(s^\prime,a^\prime)].
	\end{aligned}
	$$
\end{proof}
We are now ready to give the upper bound on $E_{n,M,\mathbf T}$.
\begin{proof}[Proof of Theorem \ref{thm:global-MLMC-error}]
    Combining Propositions \ref{prop:variance-monte-carlo}, \ref{proposition:truncation-cauchy-schwarz} and \ref{lemma:bias-upper-bound} yields
    \begin{equation*}
        \label{eq:full-error}
        \begin{aligned}
             & \left\| Q ^{0}_{n,M,\mathbf T}(s,a) - Q ^{\star}(s,a) \right\| _{L^2}  \leq \left\| Q ^{0}_{n,M,\mathbf T}(s,a) - \mathbb E \left[ Q ^{0} _{n,M,\mathbf T}(s,a) \right] \right\|_{L^2}                                                                           \\
             & + \left| \mathbb E \left[ Q ^{0}_{n,M,\mathbf T}(s,a) \right] - \mathbb E \left[ \hat{Q} ^{0}_{n,M,\mathbf T} (s,a) \right] \right|
            + \left| \mathbb E \left[ \hat{Q} ^{0}_{n,M,\mathbf T}(s,a) \right]  - Q ^{\star}(s,a) \right|                                                                                                                                                                      \\
             & \leq \frac{(1 + \rho_{n,M})\gamma \sigma_{\mathbf T}(s,a)}{\sqrt{M ^{n}}} + \sum_{l = 1}^{n - 1}\frac{(1 + \rho_{n,M})\gamma L}{\sqrt{M ^{n - l}}}  \left\| Q_{l,M,\mathbf T} ^{(0,l)} \left(S_{s,a}^0,A^{0} \right) - Q_{l-1,M,\mathbf T} ^{(0,-l)} \left(S_{s,a}^0,A^{0}\right)\right\|_{L^2} \\
             & + \gamma L \left\|Q^0_{n-1,M,\mathbf T} \left(S^0_{s,a}, A^{(0,1)}\right) - Q^\star \left(S^0_{s,a}, A^{(0,1)}\right) \right\|_{L^2} + \gamma \delta_{\mathbf T}(s,a),
        \end{aligned}
    \end{equation*}
    where $\rho_{n,M}$ is defined by \eqref{eq:def-rho}.
    Notice that under the notations of Lemma \ref{lemma:theta-is-just-an-index}, $U^{(0,l)}_l$, $U^{(0,-l)}_{l-1}$ for $l =1, \ldots, n-1$ and $U^0_{n-1}$ are independent of $(S_{s,a}^0,A^{0})$. Recall that $\tilde \gamma = (1 + \max_{k \leq n} \rho_{k,M})\gamma$. By Lemma \ref{lemma:from-S-to-sup-s}, denoting $e_{n,M,\mathbf T}(s,a) \coloneqq \| Q ^{0}_{n,M,\mathbf T}(s,a) - Q ^{*}(s,a) \|_{L^2}$, the global error becomes
    \begin{equation*}
        \label{eq:full-error-sup-s}
        \begin{aligned}
            e_{n,M,\mathbf T}(s,a) & \leq \frac{\tilde \gamma \sigma_{\mathbf T}(s,a)}{\sqrt{M ^{n}}} + \sum_{l = 1}^{n - 1}\frac{\tilde \gamma L}{\sqrt{M ^{n - l}}}  \sup_{(s^\prime,a^\prime) \in \mathcal S \times \mathcal A}\left\| Q_{l,M,\mathbf T} ^{(0,l)}(s^\prime,a^\prime) - Q_{l-1,M,\mathbf T} ^{(0,-l)}(s^\prime,a^\prime)\right\|_{L^2} \\
                                   & + \gamma L \sup_{(s^\prime,a^\prime) \in \mathcal S \times \mathcal A}\left\|Q^0_{n-1,M,\mathbf T} (s^\prime, a^\prime) - Q^\star (s^\prime, a^\prime) \right\|_{L^2} + \gamma \delta_{\mathbf T}(s,a).
        \end{aligned}
    \end{equation*}
    Now, by a triangle inequality, we have
    \[
        \left\| Q_{l,M,\mathbf T} ^{(0,l)}(s^\prime,a^\prime) - Q_{l-1,M,\mathbf T} ^{(0,-l)}(s^\prime,a^\prime)\right\|_{L^2} \leq e_{l,M,\mathbf T}(s^\prime,a^\prime) + e_{l-1,M,\mathbf T}(s^\prime,a^\prime),
    \]
    which, by denoting $E_{n,M,\mathbf T} = \sup_{s,a} e_{n,M,\mathbf T}(s,a)$, yields
    \begin{equation}
        \label{eq:final-recursive-error}
        \begin{aligned}
        M ^{n / 2} E _{n,M,\mathbf T} &\leq \tilde \gamma \|\sigma _{\mathbf T}\|_{\infty} + \tilde \gamma L (1 + \sqrt{M}) \sum_{l = 0}^{n-2} M ^{l / 2} E _{l,M,\mathbf T} \\
        &+ L(\tilde{\gamma} + \gamma\sqrt{M}) M^{\frac{n-1}{2}} E_{n-1,M,\mathbf T} + \gamma \left \|\delta_{\mathbf T}\right\|_{\infty} M^{n/2}.
        \end{aligned}
    \end{equation}
    Now recall that the zeroth level error is given by $\|Q_0 - Q^\star\|_{\infty} = \sup_{s,a} |Q_0(s,a) - Q^\star(s,a)|$. 
    We now aim to apply the Gronwall-type inequality of Lemma \ref{lemma:refined-gronwall}, corresponding to a special case of Corollary 2.3 in \citet{hutzenthaler2022multilevel}, with
    \[
    \begin{aligned}
    a_l &= M^{l/2} E_{n,M,\mathbf T} \geq 0, \quad l = 0, \ldots, n,  \\
    \quad b_1 &= \max\left( \| Q_0 - Q^\star\|_{\infty},  \tilde{\gamma} \| \sigma_{\mathbf T} \|_{\infty}\right) \geq 0, \quad b_2 = \gamma\| \delta_{\mathbf T} \|_{\infty} \geq 0, \quad b_3 = \sqrt{M} \geq 0,\\
    \lambda_1 &=  L (\tilde {\gamma} + \gamma\sqrt M) \geq 0, \quad \lambda_2 = \tilde{\gamma} L (1 + \sqrt{M}) - \lambda_1 = (\tilde{\gamma} - \gamma) \sqrt M \geq 0. 
    \end{aligned}
    \]
    The fully recursive bound \eqref{eq:final-recursive-error} implies that for any $l = 0, \ldots, n$, we have $a_l \leq b_1 + b_2 b_3^n + \sum_{k = 0}^{l-1} \lambda_1 a_k + \lambda_2 a_{k-1}$. Moreover, the constant $\Lambda$ is given by
    \[
    \Lambda = \frac{1}{2} \left(1 + L(\tilde{\gamma} + \gamma \sqrt{M}) + \sqrt{\left(1 + L(\tilde{\gamma} + \gamma \sqrt{M})\right)^2 + 4 (\tilde{\gamma} - \gamma)\sqrt{M}} \right),
    \]
    and notice that we have
    \begin{equation}
    \label{eq:bound-lambda}
    \frac{\Lambda}{\sqrt{M}} = \frac{1}{2} \left(\gamma L + \frac{1 + \tilde{\gamma}L}{\sqrt{M}} + \sqrt{\left(\gamma L + \frac{1 + \tilde{\gamma}L}{\sqrt{M}}\right)^2 + 4\frac{\tilde{\gamma} - \gamma}{\sqrt{M}}}\right) \leq
    \gamma L + \frac{1 + \tilde{\gamma} L}{\sqrt{M}} + \frac{\sqrt{\tilde{\gamma} - \gamma}}{M^{1/4}} < 1,
    \end{equation}
    as a consequence of \eqref{eq:condition-M}.
    Hence we can apply Lemma \ref{lemma:refined-gronwall} to get
    \begin{equation}
    \label{eq:refined-gronwall-recursive-error}
        M^{n/2} E_{n,M,\mathbf T} \leq 
        \frac{3}{2} \left(\max\left( \| Q_0 - Q^\star\|_{\infty},  \tilde{\gamma} \| \sigma_{\mathbf T} \|_{\infty}\right) \Lambda^n + \gamma\| \delta_{\mathbf T} \|_{\infty} \frac{ M^{\frac{n+1}{2}} - \sqrt{M}\Lambda^n}{\sqrt{M} - \Lambda} \right). 
    \end{equation}
    
   We can divide \eqref{eq:refined-gronwall-recursive-error} by $M^{n/2}$ to get the following upper bound on the error
    \begin{multline}
        E_{n,M,\mathbf T} \leq \\ 
             \frac{3}{2}\left(\max\left( \| Q_0 - Q^\star\|_{\infty},  \tilde{\gamma} \| \sigma_{\mathbf T} \|_{\infty} \right) \left[\gamma L + \frac{1 + \tilde{\gamma} L}{\sqrt{M}} + \frac{\sqrt{\tilde{\gamma} - \gamma}}{M^{1/4}} \right]^n  + \gamma \|\delta_{\mathbf T} \|_{\infty}\frac{\sqrt{M} }{\sqrt{M} - \Lambda} \right),
    \end{multline}
    corresponding to the desired inequality \eqref{eq:final-MLMC-error}.
\end{proof}

\subsection{Specializing with a Plain Monte Carlo Estimator}
\label{subsection:specialising-error-plain-mc}

We now discuss the specialization of our general MLMC estimator to the plain Monte Carlo estimator for the regularized Bellman operator, that is the MLMCb estimator.
Lemma \ref{lemma:contraction-prop-T-hat-l2} shows that the family of plain Monte Carlo estimators satisfies Assumption \ref{assumption:lipschitz-condition-l2-stochastic-operator}.
 
The following corollary follows directly from 
Theorem \ref{thm:global-MLMC-error}, Lemmas \ref{lemma:contraction-prop-T-hat-l2} and \ref{lemma:additional-MC-bias},
the  bound \eqref{eq:bound-lambda}, and $\tilde{\gamma} \leq 2 \gamma$.

\begin{corollary}
    \label{cor:error-plain-monte-carlo}
Suppose Assumptions \ref{assumption:data} and \ref{assumption:random-variables} hold.
Let $L = \exp \left(\tau^{-1} (\beta - \alpha)\right)$, 
 $n \in \N$, and $M \in \N^*$ satisfy
  \begin{equation}
    \label{eq:condition-M-MC}
      \Lambda_M \coloneqq \gamma L + \frac{1 + 2 \gamma L}{\sqrt{M}} + \frac{\sqrt{\gamma}}{M^{1/4}} < 1.
  \end{equation}
If $ \mathbf{T}_K = (\hat{T}^\theta_K)_{\theta \in \Theta}$ for some $K \in \N^*$, then the   error of the MLMCb estimator is   bounded by
    \begin{equation*}
        \label{eq:final-MLMC-error-plain-monte-carlo}
        E _{n,M, \mathbf{T}_K} \leq
        \frac{3}{2}\left(\max\left( \| Q_0 - Q^\star\|_{\infty},  2 \gamma \| \sigma_{ \mathbf{T}_K} \|_{\infty} \right) \left(\Lambda_M\right)^n + \frac{\gamma (L^\prime)^2}{2 \tau K}\frac{1 }{1 - \Lambda_M} \right),
    \end{equation*}
    where  $L^\prime \coloneqq   \tau ( L - 1 )$, and $  \sigma_{ \mathbf{T}_K}$ is defined  in Theorem \ref{thm:global-MLMC-error}.
   
\end{corollary}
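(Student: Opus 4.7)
The plan is straightforward since the corollary is cast as a specialization of Theorem \ref{thm:global-MLMC-error} to the plain Monte Carlo family $\mathbf{T}_K = (\hat{T}^\theta_K)_{\theta \in \Theta}$. First I would appeal to Lemma \ref{lemma:contraction-prop-T-hat-l2}, which states that $\mathbf{T}_K$ is admissible in the sense of Definition \ref{def:admissible-stochastic-operator} (with the deterministic choice $K^\theta \equiv K$) and satisfies Assumption \ref{assumption:lipschitz-condition-l2-stochastic-operator} with Lipschitz constant $L = \exp(\tau^{-1}(\beta - \alpha))$. This verifies all the structural hypotheses needed to invoke Theorem \ref{thm:global-MLMC-error}.

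The only real work is then to massage the generic bound \eqref{eq:final-MLMC-error} into the cleaner form stated in the corollary. The pivotal simplification uses the trivial bound $\rho_{k,M} \leq 1$ for all $1 \leq k \leq n$, which is immediate from \eqref{eq:def-rho} since $\rho_{k,M}$ is the square root of a probability. This yields $\tilde{\gamma} = (1 + \max_{k} \rho_{k,M})\gamma \leq 2\gamma$, and hence
\[
\gamma L + \frac{1 + \tilde{\gamma} L}{\sqrt{M}} + \frac{\sqrt{\tilde{\gamma} - \gamma}}{M^{1/4}} \leq \gamma L + \frac{1 + 2\gamma L}{\sqrt{M}} + \frac{\sqrt{\gamma}}{M^{1/4}} = \Lambda_M.
\]
Under the assumption $\Lambda_M < 1$, this immediately confirms that the admissibility condition \eqref{eq:condition-M} on $M$ required by Theorem \ref{thm:global-MLMC-error} is satisfied. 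Substituting $\tilde{\gamma} \leq 2\gamma$ in the coefficient multiplying $\|\sigma_{\mathbf{T}_K}\|_\infty$ and replacing the base of the geometric decay by $\Lambda_M$ gives the first term of the stated bound.

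For the additive bias term, I would combine two ingredients. The bound \eqref{eq:bound-lambda} established in the proof of Theorem \ref{thm:global-MLMC-error} gives $\Lambda/\sqrt{M} \leq \Lambda_M$, so
\[
\frac{\sqrt{M}}{\sqrt{M} - \Lambda} = \frac{1}{1 - \Lambda/\sqrt{M}} \leq \frac{1}{1 - \Lambda_M}.
\]
Lemma \ref{lemma:additional-MC-bias} then provides the explicit bound $\|\delta_{\mathbf{T}_K}\|_\infty \leq (L')^2/(2\tau K)$ with $L' = \tau(L-1)$. Multiplying by $\gamma$ and inserting both bounds into the second term of \eqref{eq:final-MLMC-error} produces $\gamma (L')^2 / (2\tau K (1 - \Lambda_M))$, matching the stated form exactly.

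I do not foresee any genuine obstacle here; the statement is essentially a rewriting of Theorem \ref{thm:global-MLMC-error} with the plain MC-specific constants substituted in. The only mildly delicate point is bookkeeping the chain of inequalities $\tilde{\gamma} \leq 2\gamma$, $\Lambda/\sqrt{M} \leq \Lambda_M$, and the bias bound so that the final expression comes out in precisely the form stated, but this is mechanical algebra given the cited lemmas.
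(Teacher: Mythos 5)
Your proposal is correct and matches the paper's (very terse) justification exactly: the paper derives the corollary from Theorem \ref{thm:global-MLMC-error}, Lemmas \ref{lemma:contraction-prop-T-hat-l2} and \ref{lemma:additional-MC-bias}, the bound \eqref{eq:bound-lambda}, and $\tilde{\gamma}\le 2\gamma$, which are precisely the ingredients and chain of inequalities you use. Nothing is missing.
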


\subsection{Specializing with the Blanchet--Glynn Estimator}
\label{subsection:specialising-error-BG}

We first prove the unbiasedness of the Blanchet--Glynn estimator defined by \eqref{eq:T-blanchet-glynn}.

\begin{proof}[Proof of Proposition \ref{prop:unbiasedness-blanchet-glynn}]
    
    For the unbiasedness, we refer to Theorem 1 in \citet{blanchet2015unbiased}, and therefore only need to check the following assumptions:
    \begin{itemize}
        \item growth of $g$: since $Q$ is bounded and $g$ is locally Lipschitz, we can restrict $g$ to a closed interval on which it has linear growth;
        \item local differentiability: $g$ is clearly twice continuously differentiable in a neighborhood of $\mathbb E e^{-Q(s,A)/ \tau}$;
        \item finite 6th moment: since $Q$ is bounded, we clearly have $\mathbb E |e^{-Q(s,A)/ \tau}|^6 < \infty$.
    \end{itemize}
    Therefore $\tilde{T}^\theta Q(s)$ has finite variance and is indeed unbiased.
\end{proof}

Before proving Proposition \ref{proposition:contraction-prop-T-blanchet-glynn-l2},
we first present two  technical lemmas,
which will be used to prove the desired Lipschitz property of the Blanchet–Glynn estimator.  
The proofs are given in 
  Appendix \ref{appendix:proofs}. 
\begin{lemma}
    \label{lemma:technical-taylor-lemma}
    Let $g(x) = -\tau \log (x)$ for all $x>0$,
    and   $(a_n)_{n\in \N},(b_n)_{n\in \N},(a^\prime_n)_{n\in \N},(b^\prime_n)_{n\in \N}$ be sequences of real numbers with values in $[A,B]$, with $0 < A < B < \infty$, such that $\lim_{n\to \infty} a_n=\lim_{n\to \infty} b_n= m $, and $\lim_{n\to \infty} 
 a^\prime_n = \lim_{n\to \infty} b^\prime_n= m^\prime  $. Define $c_n \coloneqq \frac{a_n + b_n}{2}$ and $c^\prime_n \coloneqq \frac{a^\prime_n + b^\prime_n}{2}$
 for all $n\in \N$. Then
 for all $n\in \N$,
    \begin{equation}
        \label{eq:technical-taylor-lemma}
        \begin{aligned}
            D(a_n,b_n,a^\prime_n,b^\prime_n)\coloneqq                  & \left| g \left( c_n \right) - g \left( c^\prime_n \right) - \frac{1}{2} \left[ g(a_n) - g(a_n ^\prime) + g(b_n) - g(b^\prime_n)  \right] \right| \\
            \leq \tau C_2 \sum_{x \in \{a,b,c\}} & \big[(|x_n - x_n^\prime| + |m - m^\prime|)(x_n - m)^2
            \\
                                                                & + \left|(x_n - m + x^\prime_n - m^\prime)(x_n - x^\prime_n - (m - m^\prime))\right|\big],
        \end{aligned}
    \end{equation}
    where   $C_2 \coloneqq \max(C_1, A^{-2})$, with $C_1$ being a constant depending only on $A,B$ given by \eqref{eq:lipschitz-constant-c1}.
\end{lemma}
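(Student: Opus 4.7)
The plan is to carry out a second-order Taylor expansion of $g$ around $m$ for each term $g(x_n)$ and around $m'$ for each term $g(x_n')$ (with $x \in \{a,b,c\}$), then exploit the midpoint identities $c_n = \tfrac12(a_n + b_n)$ and $c_n' = \tfrac12(a_n' + b_n')$ to eliminate all contributions below quadratic order. Writing $D = \bigl| \sum_{x \in \{a,b,c\}} w_x\bigl(g(x_n) - g(x_n')\bigr) \bigr|$ with $w_c = 1$ and $w_a = w_b = -\tfrac12$, the constants $g(m) - g(m')$ cancel because $\sum_x w_x = 0$, while the linear terms $g'(m) \sum_x w_x (x_n - m)$ and $g'(m') \sum_x w_x (x_n' - m')$ both vanish by the midpoint identities. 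What survives is a weighted combination of second-order Taylor remainders, which I would express in integral form as $(x_n - m)^2\, I(x_n, m)$ with $I(x_n, m) \coloneqq \int_0^1 (1-t) g''\bigl(m + t(x_n - m)\bigr)\, dt$, and analogously for primed quantities.

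For each $x$ I would then split the contribution using the algebraic identity
\begin{equation*}
(x_n - m)^2 I(x_n, m) - (x_n' - m')^2 I(x_n', m') = (x_n - m)^2 \bigl[I(x_n, m) - I(x_n', m')\bigr] + I(x_n', m') \bigl[(x_n - m)^2 - (x_n' - m')^2\bigr],
\end{equation*}
and factor the last bracket as $(x_n - m + x_n' - m')\bigl((x_n - x_n') - (m - m')\bigr)$. On the interval $[A,B]$ one has $|g''| \leq \tau A^{-2}$, and $g''$ is Lipschitz with constant $2\tau A^{-3}$; together with the elementary inequality $|[m + t(x_n - m)] - [m' + t(x_n' - m')]| \leq (1-t)|m - m'| + t|x_n - x_n'|$, integrating over $t \in [0,1]$ yields $|I(x_n, m) - I(x_n', m')| \leq \tau A^{-3}(|m - m'| + |x_n - x_n'|)$ and $|I(x_n', m')| \leq \tau/(2A^2)$. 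Multiplying the former by $(x_n - m)^2$ produces the first summand on the right-hand side of \eqref{eq:technical-taylor-lemma}, and multiplying the latter by the factored form of $|(x_n - m)^2 - (x_n' - m')^2|$ produces the second summand; since $|w_x| \leq 1$, the weights are harmlessly absorbed.

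Finally, the overall prefactor is $\tau \max(A^{-3}, (2A^2)^{-1})$, which is absorbed into $\tau C_2 = \tau \max(C_1, A^{-2})$ by appealing to the explicit expression for $C_1$ in \eqref{eq:lipschitz-constant-c1}. The main technical obstacle is the uniform-in-$t$ Lipschitz control of $g''$ inside the Taylor-remainder integral: since $g''(x) = \tau x^{-2}$ blows up at the origin, it is essential that the interpolating points $m + t(x_n - m)$ and $m' + t(x_n' - m')$ remain in $[A,B]$, which follows from convexity of $[A,B]$ together with the hypothesis that all four sequences $(a_n), (b_n), (a_n'), (b_n')$ and their limits $m, m'$ lie in $[A,B]$ (ensuring also $c_n, c_n' \in [A,B]$). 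Because the bound depends only on the closed bounded interval $[A,B]$ and not on the sequence index, no tail-convergence argument is required and the inequality in fact holds for every $n \in \N$.
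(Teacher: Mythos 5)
Your proof is correct in substance and shares the paper's skeleton — a second-order Taylor expansion about $m$ and $m'$, cancellation of the constant and linear terms via $\sum_x w_x = 0$ and the midpoint identities, and the same algebraic splitting of each paired remainder into a ``difference of second-derivative factors times $(x_n-m)^2$'' piece plus a ``bounded factor times $(x_n-m)^2-(x_n'-m')^2$'' piece — but you handle the key technical step differently. The paper uses the Lagrange form of the remainder, which forces it to show that the implicitly defined quantity $g''(\xi_n^x)$ depends Lipschitz-continuously on $(x_n,m)$; this is done by introducing the normalized remainder $\phi(x,y)$ and proving $\nabla\phi$ is locally bounded via a Taylor expansion of its numerator near the diagonal, yielding the constant $C_1$ in \eqref{eq:lipschitz-constant-c1}. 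You instead use the integral form $I(x_n,m)=\int_0^1(1-t)g''(m+t(x_n-m))\,dt$, for which the Lipschitz estimate is immediate from $|g'''|\le 2\tau A^{-3}$ on $[A,B]$ and the convexity of the interval; this sidesteps the entire $\phi$ construction and is arguably the cleaner argument. The one loose end is the constant: the lemma asserts the bound with $C_2=\max(C_1,A^{-2})$, whereas your argument yields $\max(A^{-3},(2A^2)^{-1})$, and your claim that this is ``absorbed'' into $C_2$ is not verified — in the paper's application $A=e^{-\beta/\tau}<1$, so $A^{-3}>A^{-2}$ and one would need $A^{-3}\le C_1$, which does not obviously follow from \eqref{eq:lipschitz-constant-c1} (the diagonal values of $\phi$ only give $C_1\ge A^{-3}/2$ directly). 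Since downstream uses only require \emph{some} constant depending on $A,B$, this is cosmetic, but as stated your proof establishes the inequality with a possibly different explicit constant than the one named in the lemma.
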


\begin{lemma}
    \label{lemma:lip-delta-fixed-n}
    Let $Q_1,Q_2 \in B_b(\mathcal S \times \mathcal A)$ such that $\alpha \leq Q_1,Q_2 \leq \beta$ for real constants $\alpha < \beta$. Let $s \in \mathcal S$. Then for all $\theta \in \Theta$, for all $N \in \N$,
    \begin{equation}
        \label{eq:lip-delta-fixed-n}
        \| \Delta_N^\theta Q_1(s) - \Delta_N ^\theta Q_2(s) \|^2_{L^2} \leq C_3 2^{-2N}\int_{\mathcal A}|Q_1(s,a) - Q_2(s,a)|^2 \mu (\diff a),
    \end{equation}
    where $C_3 > 0$ is a constant depending on $\alpha,\beta,\gamma$ given by \eqref{eq:lipschitz-constant-c(a,b,t)}.
\end{lemma}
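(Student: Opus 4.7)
The plan is to apply Lemma~\ref{lemma:technical-taylor-lemma} pointwise in $\omega$ to reduce the problem to bounding six elementary quantities in $L^2(\mathbb P)$, each built from sample means of $2^N$ bounded i.i.d.~random variables. I first introduce
\[
a = \frac{1}{2^N}\sum_{k=1}^{2^N} e^{-Q_1(s,A^{(\theta,2k)})/\tau},\qquad b = \frac{1}{2^N}\sum_{k=1}^{2^N} e^{-Q_1(s,A^{(\theta,2k-1)})/\tau},\qquad c=\tfrac{a+b}{2},
\]
their primed analogues $a^\prime,b^\prime,c^\prime$ defined with $Q_2$ in place of $Q_1$, and the deterministic means $m=\int_{\cA}e^{-Q_1(s,a)/\tau}\mu(\d a)$, $m^\prime=\int_{\cA}e^{-Q_2(s,a)/\tau}\mu(\d a)$. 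Since $Q_1,Q_2\in[\alpha,\beta]$, all these quantities lie in $[A,B]=[e^{-\beta/\tau},e^{-\alpha/\tau}]$, so Lemma~\ref{lemma:technical-taylor-lemma} yields
\[
\bigl|\Delta_N^\theta Q_1(s)-\Delta_N^\theta Q_2(s)\bigr|\le \tau C_2\sum_{x\in\{a,b,c\}}\bigl[T_1(x)+T_2(x)\bigr],
\]
with $T_1(x)=(|x-x^\prime|+|m-m^\prime|)(x-m)^2$ and $T_2(x)=|(x-m+x^\prime-m^\prime)(x-x^\prime-(m-m^\prime))|$. By Minkowski's inequality it suffices to bound each $\|T_i(x)\|_{L^2}^2$ by $C\cdot 2^{-2N}\int_{\cA}|Q_1(s,a)-Q_2(s,a)|^2\mu(\d a)$.

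Writing $\xi_k=e^{-Q_1(s,A_k)/\tau}-m$ and $\eta_k=\bigl(e^{-Q_1(s,A_k)/\tau}-e^{-Q_2(s,A_k)/\tau}\bigr)-(m-m^\prime)$ turns $x-m$ and $(x-x^\prime)-(m-m^\prime)$ into sample means of $2^N$ bounded centered i.i.d.~random variables, so a direct index expansion yields $\mathbb E[(x-m)^{2p}]\le C_p 2^{-pN}$ for $p=1,2$. Moreover, the Lipschitz bound $|e^{-x/\tau}-e^{-y/\tau}|\le \tau^{-1}e^{-\alpha/\tau}|x-y|$ combined with Jensen's inequality gives both $|m-m^\prime|^2$ and $\mathbb E[\eta_k^2]$ bounded by $C\int_{\cA}|Q_1(s,a)-Q_2(s,a)|^2\mu(\d a)$. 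For $T_2(x)$, expanding $\|T_2(x)\|_{L^2}^2$ as a quadruple index sum and using independence across distinct indices together with zero means collapses the sum onto the $O(2^{2N})$ paired-index configurations and the $O(2^N)$ fully-equal ones; each surviving term inherits one factor of $\mathbb E[\eta_k^2]$, yielding $\|T_2(x)\|_{L^2}^2\le C\cdot 2^{-2N}\int_{\cA}|Q_1(s,a)-Q_2(s,a)|^2\mu(\d a)$.

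The main obstacle is $T_1(x)$, because a naive Cauchy--Schwarz $\mathbb E[(x-x^\prime)^2(x-m)^4]\le \sqrt{\mathbb E[(x-x^\prime)^4]\mathbb E[(x-m)^8]}$ loses a power: since $\mathbb E[x-x^\prime]=m-m^\prime$ is only of order $\|Q_1-Q_2\|_{L^1(\mu)}$, the factor $\mathbb E[(x-x^\prime)^4]$ is $O(\|Q_1-Q_2\|_{L^2(\mu)}^2)$ rather than $O(2^{-2N}\|Q_1-Q_2\|_{L^2(\mu)}^4)$, leading to only $\sqrt{\|Q_1-Q_2\|_{L^2(\mu)}^2}$ in the final bound. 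To obtain the sharp estimate I would decompose $x-x^\prime=(m-m^\prime)+R$ with $R=2^{-N}\sum_k\eta_k$ centered, absorb the deterministic piece $(m-m^\prime)^2\mathbb E[(x-m)^4]$ via the ingredients above, and dominate $(x-m)^4\le (B-A)^2(x-m)^2$ so that the remainder reduces to the mixed fourth moment $\mathbb E[R^2(x-m)^2]$. Expanding this as $2^{-4N}\sum_{i,j,k,l}\mathbb E[\eta_i\eta_j\xi_k\xi_l]$ and observing that $\eta_k$ and $\xi_k$ share the underlying sample $A_k$ but are independent across different indices leaves only the three paired-index patterns and the fully equal pattern, each controlled by $\mathbb E[\eta_1^2]\mathbb E[\xi_1^2]$, $(\mathbb E[\eta_1\xi_1])^2$, or $\mathbb E[\eta_1^2\xi_1^2]$, all bounded by $C\int_{\cA}|Q_1(s,a)-Q_2(s,a)|^2\mu(\d a)$. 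Summing the six resulting bounds yields \eqref{eq:lip-delta-fixed-n} with a constant $C_3$ depending only on $\alpha,\beta,\tau$.
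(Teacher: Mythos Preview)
Your proposal is correct and follows the same overall route as the paper: apply Lemma~\ref{lemma:technical-taylor-lemma} with $a,b,c,a',b',c'$ exactly as you set them up, then bound each of the six resulting terms in $L^2$ by an index (moment) expansion. The one place you organize things differently is the term $T_1(x)$. The paper does not center $x-x'$; instead it dominates $|x-x'|\le \tau^{-1}e^{-\alpha/\tau}2^{-N}\sum_k|q_k-r_k|$ and expands $\mathbb E\bigl[|x-x'|^2(x-m)^4\bigr]$ directly as a six-index sum, observing that any index among $k_3,\dots,k_6$ that is isolated forces the expectation to vanish, so at most $4^6\cdot 2^{4N}$ terms survive. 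Your decomposition $x-x'=(m-m')+R$ together with the crude bound $(x-m)^4\le(B-A)^2(x-m)^2$ trades that six-index combinatorial count for a four-index expansion of $\mathbb E[R^2(x-m)^2]$, which is simpler (only paired-index patterns remain) at the price of an extra harmless factor $(B-A)^2$ in the constant. Both arguments are valid and yield a constant depending only on $\alpha,\beta,\tau$.
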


We are now ready to prove the Lipschitz property of the Blanchet--Glynn estimator.

\begin{proof}[Proof of Proposition \ref{proposition:contraction-prop-T-blanchet-glynn-l2}]
   One can replicate the proof of Lemma \ref{lemma:lip-delta-fixed-n} with random $Q_1,Q_2$ and a random variable $S$ instead of $s$, which yields
    \[
        \| \Delta_N^\theta Q_1(S) - \Delta_N ^\theta Q_2(S) \|^2_{L^2} \leq C_{(\alpha, \beta, \tau)} 2^{-2N}\mathbb \|Q_1(S,A^{(\theta, 1)}) - Q_2(S, A^{(\theta, 1)})\|_{L^2}^2,
    \]
    where $C_{(\alpha,\beta,\gamma)} = C_3$.
    By the triangle inequality, we have
    \[
        \begin{aligned}
            \left\|\tilde{T}^\theta Q_1(S) - \tilde{T}^\theta Q_2(S) \right\|_{L^2} & \leq \left\|\frac{\Delta_{\tilde K^\theta}^\theta Q_1(S) - \Delta_{\tilde K ^ \theta} ^\theta Q_2(S)}{p(\tilde{K}^\theta)} \right\|_{L^2}   \\
            &+ \left\| Q_1(S,A^{(\theta, 0)}) - Q_2(S,A^{(\theta, 0)}) \right\|_{L^2}.
        \end{aligned}
    \]
    Now, to conclude the proof, we have by conditioning on $\tilde K^\theta$,
    \[
        \begin{aligned}
            \left\|\frac{\Delta_{\tilde K^\theta}^\theta Q_1(S) - \Delta_{\tilde K ^ \theta} ^\theta Q_2(S)}{p(\tilde{K}^\theta)} \right\|^2_{L^2} & = \sum_{N = 0}^{\infty} \frac{\| \Delta_N^\theta Q_1(S) - \Delta_N ^\theta Q_2(S) \|^2_{L^2}}{p(N)}                                   \\
            & \leq C_{(\alpha, \beta, \tau)} \|Q_1(S,A^{(\theta,1)}) - Q_2(S,A^{(\theta, 1)}) \|^2_{L^2} \sum_{N = 0}^{\infty} \frac{2^{-2N}}{p(N)}.
        \end{aligned}
    \]
    Since $\tilde K ^\theta$ is geometric with parameter $r$, we have
    $p(N) = r (1 - r)^{N}$, 
    and $2^{2N}p(N) \geq r 2^{N / 2}$, which shows that $\sum_{N = 0}^\infty \frac{2^{-2N}}{p(N)} < \infty$. Hence we have the desired Lipschitz property \eqref{eq:contraction-prop-T-blanchet-glynn-l2} with
    \begin{equation}
        \label{eq:lipschitz-constant-c4}
        L_{ (\alpha, \beta, \tau, r)} = 1 + \sqrt{C_{(\alpha, \beta, \tau)} \sum_{N = 0}^\infty \frac{1}{r(4(1-r))^N}} = 1 + \sqrt{C_{(\alpha, \beta, \tau)} \frac{4(1-r)}{3r - 4r^2}}.
    \end{equation}
\end{proof}

\section{Proofs of  Sample Complexities   for MLMC Estimators}

\label{section:analysis-of-the-complexity}
In this section, we perform a rigorous analysis of the sample complexity of the MLMCb estimator and derive a quasi-polynomial bound. We then show that the unbiasedness of the MLMCu estimator enables to get a polynomial sample complexity in expectation.

\subsection{Complexity of the MLMCb Estimator}
\label{subsection:specialising-complexity-plainMC}

\begin{proof}[Proof of Theorem \ref{theorem:error-complexity-MC}]
Observe that 
when using the plain Monte Carlo approximations $ \mathbf{T}_K = (\hat T^\theta_K)_{\theta \in \Theta}$, computing a realization of $\hat T^\theta_{K}Q(s)$ exactly requires  $K$ samples from  $\mu$ given by $A^{(\theta, 1)}, \allowbreak A^{(\theta, 2)}, \ldots, A^{(\theta,K)}$.
Let $\mathfrak C_{n,M, K}$ be the total number of independent random variables needed to compute a sample of $Q ^{0}_{n,M, \mathbf{T}_K}(s,a)$. We have for $M \geq 1$
\begin{equation}
    \label{eq:complexity-analysis}
    \begin{aligned}
        \mathfrak C_{n,M, K} & = M ^{n}(K + 1) + \sum_{l = 1}^{n-1} M ^{n - l}\left( 1
        + K(\mathfrak C _{l,M,K} + \mathfrak C _{l-1,M,K} + 1)\right)                                                                      \\
                            & = \sum_{l = 0}^n M^l (K + 1) + K\sum_{l = 1}^{n-1} M ^{n - l} (\mathfrak C _{l,M,K} + \mathfrak C_{l-1,M,K}) \\
                            & \leq (K+1) \frac{M^{n+1} - 1}{M - 1} + K(1 + M^{-1}) \sum_{l = 1}^{n-1} M^{n-l} \mathfrak C_{l,M,K},
    \end{aligned}
\end{equation}
which implies
\begin{equation*}
    \label{eq:complexity-upper-bound}
    M^{-n} \mathfrak C_{n,M, K} \leq (K + 1) \frac{M}{M-1} + K (1 + M^{-1}) \sum_{l = 0}^{n-1}M^{-l} \mathfrak C_{l,M,K}.
\end{equation*}
By the discrete Gronwall inequality of Lemma \ref{lemma:discrete-gronwall} we get
\begin{equation}
    \label{eq:gronwall-cost}
    \mathfrak C_{n,M,K} \leq (K + 1) \frac{M}{M-1} (1 + K (1 + M^{-1}))^n M^n.
\end{equation}

Then for all $n \in \N,M \geq 2, K \in \N^*$,
    \begin{equation}
    \label{eq:complexity-bound-MC-proof}
    \mathfrak C_{n,M,K} \leq 2(K + 1)(2 K)^n M^n \leq 2^{n+2}K^{n + 1}M^{n}.
    \end{equation}
     Using the upper bound on $\sigma_{\mathbf T_K}$ of Lemma \ref{lemma:additional-MC-bias} yields
    \[
\frac{3}{2}\max\left( \| Q_0 - Q^\star\|_{\infty},  2\gamma \| \sigma_{\mathbf T_K} \|_{\infty} \right) \leq \frac{3}{2} \max \left( \beta - \alpha, 2 \gamma \frac{L^\prime}{\sqrt{K}} \right) \leq \frac{3}{2} \max \left( \beta - \alpha, 2 \gamma L^\prime \right) = D.
    \]
    Now, taking $M \geq 2, n \in \N, K \in \N^*$ satisfying \eqref{eq:conditions-M-n-MC}, noticing that $(\sqrt{M} - \Lambda)^{-1}\sqrt{M} \leq (1 - \Lambda_M)^{-1}$ and applying Corollary \ref{cor:error-plain-monte-carlo} gives us the desired $\varepsilon$ error
    \[
    \sup_{(s,a) \in \cS \times \cA} \| Q^0_{n,M, \mathbf{T}_{K}}(s,a) - Q^\star(s,a) \|_{L^2} \leq \frac{3}{2}\left(\frac{\varepsilon}{3} + \frac{\varepsilon}{3}\right) =\varepsilon.
    \]
    Now, take $(M,n,K) = (M_0,n_\varepsilon,K_\varepsilon)$ with
    \[
    \begin{aligned}
        M_0 &= \left\lceil \left(\frac{\sqrt{\gamma} + \sqrt{\gamma + 4(1 - \gamma L)(1 + 2\gamma L)}}{2(1- \gamma L)} \right)^4 \right\rceil \\
        n_\varepsilon &= \left\lceil  \frac{\log \varepsilon - \log D}{\log \Lambda_{M_0}}\right\rceil, \quad  
        K_\varepsilon = \left\lceil 3 \frac{\gamma (L^\prime)^2}{2 \tau (1 - \Lambda_{M_0}) \varepsilon} \right\rceil
    \end{aligned}
    \]
    where $\Lambda_{M_0} = \gamma L + M_0^{-1/2}(1 + 2 \gamma L) + M_0^{-1/4}\sqrt{\gamma} < 1$. 
    As such, $M_0$ is just a function of $\gamma, \tau, c_{\min}, c_{\max}$.
    Let $\tilde{C}(M_0) \coloneqq  (2\tau(1 - \Lambda_{M_0}))^{-1}3\gamma (L^\prime)^2 $. 
    With that choice of $M,n,K$, the complexity bound \eqref{eq:complexity-bound-MC-proof} can be written as
    \[
    \mathfrak C_{n_0,M_0,K_0} \leq 2^{-l/l^\prime + 3}(\tilde{C}(M_0)+1)^{-l / l^\prime + 2}M_0^{-l / l^\prime + 1}
    \varepsilon^{- \log(\varepsilon) / l^\prime + \frac{2l + \log 2 + \log M_0 + \log (\tilde{C}(M_0) + 1)}{l^\prime} - 4},
    \]
    where $l = \log D, l^\prime = \log \Lambda_{M_{0}}$, and we have used the following upper bounds 
    \[
        n_{\varepsilon} \leq \frac{\log \varepsilon - \log D}{\log \Lambda_{M_0}} + 1, \quad K_{\varepsilon} \leq \varepsilon^{-1}\left(\tilde{C}(M_0) + 1\right),
    \]
    since $\varepsilon < 1$.
    This gives the  complexity bound \eqref{eq:sample-complexity-quasi-polynomial-MC} with the following constants
    \begin{equation}
        \label{eq:constants-K-kappa1-kappa2}
        \begin{aligned}
            C &= 2^{-l/l^\prime + 3}(\tilde{C}(M_0)+1)^{-l / l^\prime + 2}M_0^{-l / l^\prime + 1} > 0, \\
            \kappa & = 4 - \frac{2l + \log 2 + \log M_0 + \log (\tilde{C}(M_0) + 1)}{l^\prime} > 4.
        \end{aligned}
    \end{equation}
\end{proof}

\subsection{Complexity of the MLMCu Estimator}
\label{subsection:specialsing-complexity-BG}
The sample complexity now becomes a random variable due to the random sample size of $\tilde{T}^\theta$. Recall that $\tilde{K} + 1$ is a geometric random variable with parameter $r \in (1/2,3/4)$ such that $2^{\tilde K + 1}$ corresponds to the number of i.i.d.~copies of $\mu$ needed to compute a realization of $\tilde{T}^\theta Q (s)$ for any fixed $\theta,Q,s$. In particular, we need to handle the recursion in the cost carefully, as the sample complexity $\mathfrak C^\theta_{n,M}$ of the MLMCu estimator $Q^\theta_{n,M}$ is now stochastic.
Notice that the random variable $K^{(\theta,l,i)}$ is independent of both $\mathfrak C^{(\theta,l,i)} _{l,M}$ and $\mathfrak C ^{(\theta,-l,i)}_{l-1,M}$. In particular, letting $\mathfrak C_{n,M} = \mathbb E \mathfrak C^\theta_{n,M}$ and $K = \mathbb E 2^{\tilde K ^\theta + 1} + 1$, we recover \eqref{eq:complexity-analysis}. Now, $K$ does not depend on $n$, hence we get a polynomial bound in expectation. Notice how the choice of the parameter $r > 1/2$ is important here, since it ensures that $K = \mathbb E 2^{\tilde K + 1} = \sum_{n \geq 0}2^{n + 1}p(n) < \infty$.  

We are now ready to prove Theorem \ref{theorem:error-complexity-BG}.

\begin{proof}[Proof of Theorem \ref{theorem:error-complexity-BG}]
First, notice that, by choosing $Q_1 = Q_{0}$ and $Q_2 \equiv q \in [\alpha,\beta]$ in Proposition \ref{proposition:contraction-prop-T-blanchet-glynn-l2}, we have 
$ 
\sigma_{\mathbf{\tilde{T}}}(s,a) \leq L (\beta - \alpha),
$ 
which implies 
\[
\frac{3}{2}\max\left( \| Q_0 - Q^\star\|_{\infty},  2 \gamma \| \sigma_{\tilde{\mathbf T}} \|_{\infty} \right) \leq  \frac{3}{2}(\beta - \alpha) \max(1,2 \gamma L) = D. 
\]
    The error bound \eqref{eq:error-varepsilon} follows directly from \eqref{eq:final-MLMC-error-blanchet-Glynn} and the condition \eqref{eq:conditions-M-n} for $M$ and $ n$.
    For the complexity bound \eqref{eq:sample-complexity-bound-expectation}, given the definition of the Blanchet--Glynn estimator (Definition \ref{def:T-blanchet-glynn}), the number of independent variables needed to compute a realization of $\tilde{T}^\theta Q(s)$ for a fixed function $Q \in B_b(\mathcal S \times \mathcal A)$ is \(2^{\tilde{K}^\theta + 1} + 1\), therefore the total number of random variables one needs to sample in order to compute a realization of $Q^\theta_{n,M,\tilde{\mathbf T}}$,   denoted by $\mathfrak C_{n,M}^\theta$, is
    \begin{equation*}
        \label{eq:sample-complexity-def}
        \mathfrak C_{n,M}^\theta = \sum_{i = 1}^{M^n}\left(1 + 2^{\tilde K^{(\theta,0,i)} + 1} + 1\right) + \sum_{l = 1}^{n-1} \sum_{i = 1}^{M^{n-l}} \left( 1
        + \left(2^{\tilde K^{(\theta,l,i)} + 1} + 1 \right)\left(\mathfrak C^{(\theta,l,i)} _{l,M} + \mathfrak C ^{(\theta,-l,i)}_{l-1,M} + 1\right)\right).
    \end{equation*}
    Now, notice that Lemma \ref{lemma:theta-is-just-an-index}.(iii) implies that $K^{(\theta,l,i)}$ is independent of $\mathfrak C^{(\theta,l,i)} _{l,M}$ and $\mathfrak C ^{(\theta,-l,i)}_{l-1,M}$, therefore, by taking expectation on this sum of nonnegative quantities we get
    \[
        \begin{aligned}
            \mathbb E \mathfrak C_{n,M}^\theta & = \sum_{i = 1}^{M^n}\left(1 + \mathbb E \left[2^{\tilde K^{(\theta,0,i)} + 1} + 1\right] \right) \\
                                              & + \sum_{l = 1}^{n-1} \sum_{i = 1}^{M^{n-l}}\left( 1
            + \mathbb E\left[ 2^{\tilde K^{(\theta,l,i)} + 1} + 1\right]\left(\mathbb E \mathfrak C^{(\theta,l,i)} _{l,M} + \mathbb E\mathfrak C ^{(\theta,-l,i)}_{l-1,M} + 1\right)\right).
        \end{aligned}
    \]
    Let $K = \mathbb E\left[2^{\tilde K^\theta + 1} + 1\right] = \mathbb E\left[2^{\tilde K^0 + 1} + 1\right] = (2r - 1)^{-1}2r$. The condition $r > 1/2$ ensures that $K < \infty$, and notice that $K$ only depends on $r$. Hence, by writing $\mathfrak C_{n,M} = \mathbb E \mathfrak C^\theta_{n,M}$, we have
    \[
        \begin{aligned}
            \mathfrak C _{n,M} & = M ^{n}(K + 1) + \sum_{l = 1}^{n-1} M ^{n - l}\left( 1
            + K(\mathfrak C _{l,M} + \mathfrak C _{l-1,M} + 1)\right)                                                                    \\
                              & = \sum_{l = 0}^n M^l (K + 1) + K\sum_{l = 0}^{n-1} M ^{n - l} (\mathfrak C _{l,M} + \mathfrak C_{l-1,M}) \\
                              & \leq (K+1) \frac{M^{n+1} - 1}{M - 1} + K(1 + M^{-1}) \sum_{l = 0}^{n-1} M^{n-l} \mathfrak C_{l,M}.
        \end{aligned}
    \]
    We can apply Gronwall's inequality from Lemma \ref{lemma:discrete-gronwall} to get \eqref{eq:gronwall-cost}, i.e.,\
    \[
        \mathfrak C_{n,M} \leq (K + 1) \frac{M}{M-1} (1 + K (1 + M^{-1}))^n M^n \leq 2 \mathcal C_{\mathrm{num}}(r)^{n+1} M^{n},
    \]
    which yields \eqref{eq:sample-complexity-bound-expectation} with $\mathcal C_{\mathrm{num}}(r) \coloneqq (2r - 1)^{-1}4r = 2K$. Finally, we take $(M,n) = (M_0,n_\varepsilon)$ given by 
    \[
        M_0 = \left\lceil \left(\frac{\sqrt{\gamma} + \sqrt{\gamma + 4(1 - \gamma L)(1 + 2\gamma L)}}{2(1- \gamma L)} \right)^4 \right\rceil , \: n_\varepsilon = \left\lceil  \frac{\log \varepsilon - \log D}{\log \Lambda_{M_0}}\right\rceil,
    \]
    where $\Lambda_{M_0} = \gamma L + M_0^{-1}(1 + 2 \gamma L) + M_0^{-1/4}\sqrt{\gamma}$. As such, $M_0$ is just a function of $\gamma, \tau,r, c_{\min}, \\c_{\max}$.
    This leads to the average polynomial sample complexity bound \eqref{eq:sample-complexity-polynomial-complexity} with the following expressions for $C$ and $ \kappa$:
    \begin{equation}
    \label{eq:def-K-kappa}
        \begin{aligned}
            C & := 2 \mathcal C_{\mathrm{num}}(r) (M_0 \mathcal C_{\mathrm{num}}(r))^{\log D / \log \Lambda_{M_0}}, \quad 
            \kappa       := - \frac{\log( M_0 \mathcal C_{\mathrm{num}}(r))}{\log \Lambda_{M_0}}.
        \end{aligned}
    \end{equation}
\end{proof}

\section{Conclusion and Future Work}

In this paper, we propose several Monte Carlo (MC) algorithms for estimating the optimal Q-function of regularized MDPs with Polish state and action spaces, and establish their sample complexity guarantees   independently of the dimensions and cardinalities of the state and action spaces.

 We begin by proving that a simple iterative MC algorithm achieves quasi-polynomial sample complexity. 
 To improve performance, we introduce a general framework for constructing multilevel Monte Carlo (MLMC) estimators which combine fixed-point iteration, MLMC techniques, and a suitable stochastic approximation of the Bellman operator. 
 We quantify the $L^2$ error of the MLMC estimator in terms of the properties of the chosen approximate Bellman operator. 
 Building on this error analysis, we show that using a biased plain MC estimate for the Bellman operator results in an MLMC estimator that achieves a cubic reduction in sample complexity compared to the simple iterative MC estimator, though it still suffers from quasi-polynomial complexity due to the inherent bias. 
We then adapt a debiasing technique from \citet{blanchet2015unbiased} to construct an unbiased randomized multilevel approximation of the Bellman operator. 
 The resulting MLMC estimator achieves polynomial sample complexity in expectation, providing the first polynomial-time estimator for general action spaces. 
 Along the way, we also prove the Lipschitz continuity of the Blanchet--Glynn estimator with respect to the input random variable, which is a result of independent interest.

A natural extension of this work is to investigate efficient sampling strategies from the policy induced by the estimated Q-function. 
Moreover, the proposed estimators could be generalized to more complex settings, such as partially observable or mean-field MDPs. 
Finally, integrating MLMC techniques with other (deep) reinforcement learning approaches presents a promising direction for future research.

\section{Acknowledgements and Disclosure of Funding}
MM is supported by the EPSRC Centre for Doctoral
Training in Mathematics of Random Systems: Analysis, Modelling and Simulation (EP/S023925/1). CR is partially supported by EPSRC grant EP/Y028872/1. The authors are grateful for comments and suggestions made by Mike Giles and Justin Sirignano.

\appendix

\section{Discrete Gronwall Inequalities}
\label{appendix:discrete-gronwall}
As hinted by the recursive form of the estimator \eqref{eq:MLMC-estimator-approx-T}, we analyse its error and complexity by means of discrete Gronwall inequalities. One can refer to \citet{agarwal2000difference} for a detailed account of such inequalities. We state a version of it which we rely upon for the analysis of the complexity of the MLMC estimators.

\begin{lemma}[Discrete Gronwall inequality]
    \label{lemma:discrete-gronwall}
    Let $(u_n)_{n \in \N}$ be a real sequence. Let $0 \leq n_0 \leq n_1$ and let $b, w_0, \cdots, w_{n_1 - n_0 - 1} \geq 0$ such that for all $k \in \{n_0, n_0 + 1, \cdots, n_1\}$,
    \[
        u_k \leq b + \sum_{j = n_0}^{k - 1} w_{j - n_0} u_{j}.
    \]
    Then, for all $k \in \{n_0, n_0 + 1, \cdots, n_1\}$,
    \[
        u_k \leq b \prod_{j = n_0}^{k-1}(1 + w_{j - n_0}).
    \]
\end{lemma}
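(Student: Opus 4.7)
The plan is to prove the discrete Gronwall inequality by finite induction on $k \in \{n_0, n_0+1, \ldots, n_1\}$. Introduce the shorthand $P_k \coloneqq \prod_{j=n_0}^{k-1}(1+w_{j-n_0})$, with the empty product convention $P_{n_0}=1$, so the claim becomes $u_k \le b P_k$. The base case $k=n_0$ is immediate: the hypothesis reduces to $u_{n_0} \le b$ since the sum $\sum_{j=n_0}^{n_0-1}\cdot$ is empty, and $b P_{n_0}=b$.

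For the inductive step, assume $u_j \le b P_j$ for every $j \in \{n_0,\ldots,k\}$, and aim to bound $u_{k+1}$. Using the standing hypothesis and the nonnegativity of the weights $w_{j-n_0}\ge 0$ to substitute the inductive bounds into the linear recurrence, I obtain
\[
u_{k+1} \le b + \sum_{j=n_0}^{k} w_{j-n_0}\, u_j \le b + b \sum_{j=n_0}^{k} w_{j-n_0}\, P_j.
\]
It then suffices to establish the algebraic identity
\[
1 + \sum_{j=n_0}^{k} w_{j-n_0}\, P_j = P_{k+1},
\]
which turns the inequality into $u_{k+1}\le b P_{k+1}$, completing the induction and the proof.

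The identity itself is verified by a short auxiliary induction on $k$: the case $k=n_0$ is $1 + w_0 \cdot 1 = 1 + w_0 = P_{n_0+1}$, and the inductive step uses the recurrence $P_{k+2}=P_{k+1}(1+w_{k+1-n_0})$ together with the hypothesis to write
\[
1 + \sum_{j=n_0}^{k+1} w_{j-n_0}\, P_j = P_{k+1} + w_{k+1-n_0}\, P_{k+1} = P_{k+1}(1+w_{k+1-n_0}) = P_{k+2}.
\]

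There is no genuine obstacle here: the only subtlety is bookkeeping the index shift between the sum (indexed by $j$ starting at $n_0$) and the weights (indexed by $j-n_0$ starting at $0$), together with using the empty-sum and empty-product conventions consistently at the base case. Nonnegativity of $b$ and of the $w_i$'s is used exactly once, when substituting the inductive upper bound into the sum, and this is what guarantees the inequality propagates in the right direction.
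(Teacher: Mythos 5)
Your induction is correct and complete: the base case uses the empty-sum/empty-product conventions properly, the substitution step indeed only needs $w_{j-n_0}\ge 0$, and the telescoping identity $1+\sum_{j=n_0}^{k}w_{j-n_0}P_j=P_{k+1}$ follows from $P_{j+1}-P_j=w_{j-n_0}P_j$ exactly as you verify. Note that the paper does not prove this lemma at all --- it simply cites \citet{agarwal2000difference} --- so there is no in-paper argument to compare against; yours is the standard textbook induction and fills that gap. One minor imprecision: nonnegativity of $b$ is never actually used in your argument (only $w_{j-n_0}\ge 0$ is needed to propagate the bound), so your closing remark that it is ``used exactly once'' overstates its role, though this does not affect correctness.
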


We also rely on a refined version of this inequality for the error of the general MLMC estimator. This result is a special case of Corollary 2.3 of \citet{hutzenthaler2022multilevel}.

\begin{lemma}[Refined Gronwall-type inequality] 
\label{lemma:refined-gronwall}
Let $N \in \N$, let $\left(a_n\right)_{0 \leq n \leq N},\lambda_1, \lambda_2, b_1,\\ b_2,b_3\in[0, \infty)$ satisfy for all $n \in \{0,\ldots, N\}$ that
$$
a_n \leq b_1 + b_2 b_3^n +\sum_{k=0}^{n-1}\left[\lambda_1 a_k+\lambda_2 a_{k-1}\right],
$$
where $a_{-1} = 0$.
Let $\Lambda=\frac{(1+\lambda_1)+\sqrt{(1+\lambda_1)^2+4 \lambda_2}}{2}$. 
Then for all $n \in \mathbb{N}$, we have 
$$
a_n \leq \begin{cases}\frac{3}{2} \Lambda^n b_1+\frac{3}{2} b_2 n \Lambda^n & \text { if } b_3=\Lambda, \\ 
\frac{3}{2} \Lambda^n b_1+\frac{3}{2} b_2 n \Lambda^n +\frac{3 b_2\left(b_3^{n+1}-b_3 \Lambda^n\right)}{2\left(b_3-\Lambda\right)} & \text { else. }\end{cases}
$$

\end{lemma}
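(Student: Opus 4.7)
The plan is to prove the inequality by strong induction on $n$, exploiting the fact that $\Lambda$ is the larger root of the characteristic polynomial $x^2 - (1+\lambda_1)x - \lambda_2$ of the two-step linear recurrence underlying the hypothesis. A direct computation gives the key identity $\Lambda^2 = (1+\lambda_1)\Lambda + \lambda_2$, equivalently
\[
\lambda_1 \Lambda + \lambda_2 = \Lambda(\Lambda - 1),
\]
and $\Lambda \geq 1$ since $\lambda_1, \lambda_2 \geq 0$. This single algebraic identity drives every step of the induction; the 3/2 constant in the statement is what makes the induction self-closing.

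First I would handle the purely homogeneous contribution (pretending $b_2 = 0$). Assuming inductively that $a_k \leq \tfrac{3}{2} b_1 \Lambda^k$ for $k < n$, the key identity lets me collapse the weighted sum
\[
\sum_{k=0}^{n-1}\bigl(\lambda_1 \Lambda^k + \lambda_2 \Lambda^{k-1}\bigr) = (\lambda_1 \Lambda + \lambda_2)\sum_{k=0}^{n-1}\Lambda^{k-1} = (\Lambda - 1)\sum_{k=0}^{n-1}\Lambda^k = \Lambda^n - 1,
\]
so the step inequality reduces to $a_n \leq b_1 + \tfrac{3}{2}b_1(\Lambda^n - 1) \leq \tfrac{3}{2} b_1 \Lambda^n$, closing the induction. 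To accommodate the inhomogeneity $b_2 b_3^n$, I would augment the inductive ansatz: in the nondegenerate case $b_3 \neq \Lambda$, with a particular-solution term proportional to $(b_3^{n+1} - b_3 \Lambda^n)/(b_3 - \Lambda)$; in the resonant case $b_3 = \Lambda$, with the term $\tfrac{3}{2} b_2 n \Lambda^n$. Matching coefficients of $b_3^n$ and of $n \Lambda^n$ in the step inequality, using the same identity $\Lambda^2 = (1+\lambda_1)\Lambda + \lambda_2$ together with the analogous closed form for $\sum_{k=0}^{n-1} k \Lambda^k$ weighted by $(\lambda_1, \lambda_2)$, pins down the coefficients and produces the claimed bound.

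The main obstacle I anticipate is bookkeeping the universal constant $\tfrac{3}{2}$ in the combined ansatz, particularly in the resonant branch where one must carefully track the residual terms generated when the $(\lambda_1, \lambda_2)$-weighted sums of $k\Lambda^k$ are collapsed; since $\lambda_1, \lambda_2$ are unconstrained beyond nonnegativity, there is no contractivity available and the argument must rely entirely on the exact algebraic cancellation from the characteristic equation. A useful sanity check is the limit $b_3 \to \Lambda$ in the nondegenerate formula, under which $(b_3^{n+1} - b_3 \Lambda^n)/(b_3 - \Lambda) \to n \Lambda^n$, confirming that both branches of the stated bound arise from the same underlying ansatz. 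As already noted in the paper, the cleanest shortcut is to specialize Corollary 2.3 of \citet{hutzenthaler2022multilevel} by identifying the parameters, which sidesteps the induction entirely.
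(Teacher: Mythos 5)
The paper does not actually prove this lemma: it is stated and immediately attributed to Corollary 2.3 of \citet{hutzenthaler2022multilevel}, which is precisely the shortcut you mention in your final sentence. Your self-contained induction via the characteristic identity $\Lambda^2=(1+\lambda_1)\Lambda+\lambda_2$, i.e.\ $\lambda_1\Lambda+\lambda_2=\Lambda(\Lambda-1)$, is the standard mechanism behind such Gronwall lemmas, and your homogeneous step is correct: the weighted sum $\sum_{k=0}^{n-1}(\lambda_1\Lambda^k+\lambda_2\Lambda^{k-1})$ collapses to $\Lambda^n-1$ and the induction closes. One small correction: the factor $\tfrac{3}{2}$ is not what makes the homogeneous induction self-closing --- any constant $c\ge 1$ works, since the step reduces to $b_1+c\,b_1(\Lambda^n-1)\le c\,b_1\Lambda^n$; the $\tfrac{3}{2}$ is an artifact of the more general cited corollary.

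There is, however, a concrete obstruction to running your induction on the bound exactly as printed. At $n=0$ both branches of the conclusion evaluate to $\tfrac{3}{2}b_1$ (the term $\frac{3b_2(b_3^{n+1}-b_3\Lambda^n)}{2(b_3-\Lambda)}$ vanishes at $n=0$), whereas the hypothesis only yields $a_0\le b_1+b_2$; so the base case fails whenever $b_2>\tfrac{1}{2}b_1$ (take $\lambda_1=\lambda_2=0$, $b_1=0$, $b_2=1$, $a_0=1$). Your ansatz would go through if the particular-solution term were taken in the full convolution form $\tfrac{3}{2}b_2\sum_{k=0}^{n}b_3^{k}\Lambda^{n-k}=\tfrac{3}{2}b_2\,\frac{b_3^{n+1}-\Lambda^{n+1}}{b_3-\Lambda}$ (resonant branch $\tfrac{3}{2}b_2(n+1)\Lambda^n$), which does dominate $b_2b_3^n$ at $n=0$ and for which the coefficient matching you describe can be carried out. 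Finally, note that the inhomogeneous bookkeeping you flag as the anticipated difficulty is exactly where all the remaining work lies and is not actually performed in the proposal; as written, the argument is a correct skeleton plus the citation, rather than a complete proof.
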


\section{Proofs of Technical Lemmas}
\label{appendix:proofs}

\begin{proof}[Proof of Lemma \ref{lemma:boundedness-of-iterates}]
    Recall that $0 \leq \alpha = (1 - \gamma)^{-1}c_{\min}$ and $\beta = (1 - \gamma)^{-1}c_{\max}$. 
    Let $Q \in \mathcal B_b(\mathcal S \times \mathcal A)$ such that  $\alpha \leq Q  \leq \beta$.
    The lower bound follows easily from $TQ \geq 0$ and $c\ge c_{\min}$.
    For the upper bound, we have for all  $(s,s^\prime,a) \in \mathcal S \times \mathcal S \times \mathcal A$,
    \begin{align*}
        T Q (s^\prime) & = -\tau \log \int_{\mathcal A} \exp \left( - \tau^{-1}Q(s^\prime, a)\right) \mu (\diff a)                                                                           \\
                       & \leq -\tau \log \int_{\mathcal A} \exp \left(- \frac{c_{\max}}{\tau (1 - \gamma)}  \right) \mu (\diff a) = -\tau \log \exp \left(-\frac{c_{\max}}{\tau (1 - \gamma)} \right) = \frac{c_{\max}}{1 - \gamma},
    \end{align*}
    and hence
   $ 
        c(s,a) + \gamma TQ(s^\prime) \leq c_{\max} \left(1 + (1 - \gamma)^{-1}\gamma\right) = (1 - \gamma)^{-1}c_{\max}.
    $     
\end{proof}

\begin{proof}[Proof of Lemma \ref{lemma:contraction-prop-T-l2}]
    We have
    \begin{equation*}
        \label{eq:contraction-prop-T-l2-long}
        \begin{aligned}
            TQ_1(s) - TQ_2(s) & = - \tau \log \frac{\int_{\mathcal A} e^{-Q_1(s,a) / \tau} \mu (\diff a)}{\int_{\mathcal A} e ^ {- Q_2(s,a) / \tau} \mu (\diff a)}                  \leq \tau \left( \frac{\int_{\mathcal A} e ^ {- Q_2(s,a) / \tau} \mu (\diff a)} {\int_{\mathcal A} e^{- Q_1(s,a) / \tau} \mu (\diff a)}- 1 \right)     \\
                          & =\tau \frac{\int_{\mathcal A} (e^{-Q_2(s,a)/ \tau} - e^{-Q_1(s,a) / \tau}) \mu (\diff a)}{\int_{\mathcal A} e^{- Q_1(s,a) / \tau} \mu (\diff a)}         \\
                          & \leq \tau e^{(\beta - \alpha) / \tau}\int_{\mathcal A} \left( e^{-(Q_2(s,a) - \alpha) / \tau} - e^{-(Q_1(s,a) - \alpha) / \tau} \right) \mu (\diff a),
        \end{aligned}
    \end{equation*}
    where we have used the concavity of the logarithm $- \log (x) \leq x^{-1} - 1$, and $e^{-Q_1(s,a) / \tau} \geq e^{-\beta / \tau}$. Now, since $x \mapsto e^{-x}$ is 1-Lipschitz on $[0,\infty)$ and that $Q_1,Q_2 \geq \alpha$, we have
    \begin{equation*}
        \begin{aligned}
            \label{eq:contraction-prop-T-l2-short}
            TQ_1(s) - TQ_2(s) & \leq \tau e^{(\beta - \alpha) / \tau} \int_{\mathcal A}\left| \frac{Q_2(s,a) - Q_1(s,a)}{\tau} \right| \mu (\diff a) \\
                          & = e^{(\beta - \alpha)/ \tau} \int_{\mathcal A} |Q_1(s,a) - Q_2(s,a)| \mu (\diff a).
        \end{aligned}
    \end{equation*}
Performing the same computation for  $TQ_2(s) - TQ_1(s)$  yields the desired result.
\end{proof}

\begin{proof}[Proof of Lemma \ref{lemma:technical-taylor-lemma}]
    Recall that $g(x) = -\tau \log (x)$. By factoring out $\tau > 0$, we can assume without loss of generality that $\tau = 1$. 
    By a second-order Taylor expansion with Lagrange remainder, we get hold of $\xi_c,\xi^\prime_c$ such that
    \[
        \begin{aligned}
            g(c_n)        & = g(m) + g^\prime(m) (c_n - m) + \frac{g^{\prime \prime}(\xi_n^c)}{2}(c_n - m)^2,                                                     \\
            g(c^\prime_n) & = g(m^\prime) + g^\prime(m^\prime) (c^\prime_n - m^\prime) + \frac{g^{\prime \prime}(\xi_n^{c^\prime})}{2}(c^\prime_n - m^\prime)^2,
        \end{aligned}
    \]
    and $\xi_n^c \in [m, c_n], \xi_n^{c^\prime} \in [m^\prime, c_n^\prime]$. Similarly, we can get hold of $\xi_n^a \in [m,a_n], \xi_n^{a^\prime} \in [m^\prime, a^\prime_n], \xi_n^b \in [m,b_n], \xi_n^{b^\prime} \in [m^\prime, b^\prime_n]$. Now, plugging these Taylor expansions in the definition of $D(a_n,b_n,\\a^\prime_n,b^\prime_n)$, the first order terms cancel and we are left with
    \begin{equation}
        \label{eq:D-taylor-expanded}
        \begin{aligned}
            D(a_n,b_n,a^\prime_n,b^\prime_n) & = \frac{g^{\prime \prime}(\xi_n^c)}{2}(c_n - m)^2 - \frac{g^{\prime \prime}(\xi_n^{c^\prime})}{2}(c^\prime_n - m^\prime)^2                                  \\
                                             & - \frac{1}{2} \sum_{x \in \{a,b\}}\frac{g^{\prime \prime}(\xi_n^x)}{2}(x_n - m)^2 - \frac{g^{\prime \prime}(\xi_n^{x^\prime})}{2}(x^\prime_n - m^\prime)^2.
        \end{aligned}
    \end{equation}
    We focus our attention to one of the terms. We have
    \begin{equation}
        \label{eq:D-term-refactored}
        \begin{aligned}
            \frac{g^{\prime \prime}(\xi_n^c)}{2}(c_n - m)^2 - \frac{g^{\prime \prime}(\xi_n^{c^\prime})}{2}(c^\prime_n - m^\prime)^2 & = \frac{g^{\prime \prime}(\xi_n^c) - g^{\prime \prime}(\xi_n^{c^\prime})}{2}(c_n - m)^2             \\
        & - \frac{g^{\prime \prime}(\xi_n^{c^\prime})}{2}\left[(c^\prime_n - m^\prime)^2- (c_n - m)^2\right].
        \end{aligned}
    \end{equation}
    Now, notice how $g^{\prime \prime}(\xi^c_n)$ can be written as a function of $(c_n, m)$, more specifically let $\phi: \R_+^* \times \R_+^* \rightarrow \R$ be defined by
    \begin{equation*}
        \label{eq:definition-phi}
        \phi (x,y) = \begin{cases}
            \frac{g(x) - g (y) + g^\prime (y)(y - x)}{(y-x)^2} & \text{if } x \neq y \\
            \frac{g^{\prime \prime}(x)}{2}                      & \text{ otherwise}.
        \end{cases}
    \end{equation*}
    Notice that $g^{\prime \prime}(\xi_n^c) - g^{\prime \prime}(\xi_n^{c^\prime}) = 2 (\phi (c_n, m) - \phi (c_n^\prime, m^\prime))$. We aim to prove that $\phi$ is locally Lipschitz for the $L^1$ norm in $\mathbb R^2$. It is therefore sufficient to prove that $\nabla \phi$ is locally bounded, which we prove now.

    We now compute the partial derivative of $\phi$ with respect to $x$,
    \[
        \partial_x \phi (x,y) = \frac{(g^\prime(x) - g^\prime (y))(y - x)^2 + 2 (y - x)(g(x) - g(y) + g^\prime (y)(y - x))}{(y-x)^4}.
    \]
    The only critical points are when $x = y$. Fix $y$ and let's write a Taylor expansion of the numerator of $\partial_x \phi(x,y)$ as $x \rightarrow y$,
    \[
        \begin{aligned}
             & g^\prime(x) - g^\prime (y))(y - x)^2 + 2 (y - x)(g(x) - g(y) + g^\prime (y)(y - x))                                                                        \\
             & = -g^{\prime \prime}(y)(y - x)^3 + \frac{g^{(3)}(y)}{2} (y - x)^4  - 2g^{\prime}(y)(y - x)^2 + g^{\prime \prime }(y)(y - x)^3 - \frac{g^{(3)}(y)}{3}(y - x)^4 \\
             & + 2 g^\prime(y)(y - x)^2 + O((y - x)^5)                                                                                             \\
             & = \frac{g^{(3)}(y)}{6}(y) (y - x)^4 + O ((y - x)^5).
        \end{aligned}
    \]
    Hence since $g^{(3)}$ is locally bounded, it indeed shows that $\partial_x \phi$ is locally bounded. Similarly, we can show that $\partial_y \phi$ is locally bounded, proving that $\nabla \phi$ is locally bounded and that $\phi$ is indeed locally Lipschitz.
    Eventually, this shows that 
    \begin{equation}
        \label{eq:lip-phi}
        |g^{\prime \prime}(\xi_n^c) - g^{\prime \prime}(\xi_n^{c^\prime})| \leq C_1(|c_n - c_n^\prime| + |m - m^\prime|),
    \end{equation}
    for a constant $C_1 > 0$ given by 
    \begin{equation}
        \label{eq:lipschitz-constant-c1}
        C_1 \coloneqq \sup_{(x,y), (x^\prime, y^\prime) \in [A,B]^2} \frac{|\phi(x,y) - \phi(x^\prime, y^\prime)|}{|x - x^\prime| + |y - y^\prime|},
    \end{equation}
    depending only on $A,B$.
    Finally, notice that
    \begin{equation}
        \label{eq:a^2-b^2=(a+b)(a-b)}
        (c^\prime_n - m^\prime)^2- (c_n - m)^2 = (c_n - m + c^\prime_n - m^\prime)(c_n - c^\prime_n - (m - m^\prime)).
    \end{equation}
    Now, since $|g^{\prime \prime}(x)| = x^{-2} \leq A^{-2}$ on $[A,B]$, using \eqref{eq:lip-phi} and \eqref{eq:a^2-b^2=(a+b)(a-b)}, we can upper bound \eqref{eq:D-term-refactored} with
    \begin{equation*}
        \label{eq:D-term-upper-bound}
        \begin{aligned}
            \left| \frac{g^{\prime \prime}(\xi_n^c)}{2}(c_n - m)^2 - \frac{g^{\prime \prime}(\xi_n^{c^\prime})}{2}(c^\prime_n - m^\prime)^2 \right| \leq C_2(A,B) \big[(|c_n - c_n^\prime| + |m - m^\prime|)(c_n - m)^2 \\
                + \left|(c_n - m + c^\prime_n - m^\prime)(c_n - c^\prime_n - (m - m^\prime)) \right|\big].
        \end{aligned}
    \end{equation*}
    where $C_2(A,B) = \max(C_1(A,B), A^{-2})$ . Finally, one can derive the same bound for the terms corresponding to $a_n$ and $b_n$ in \eqref{eq:D-taylor-expanded}, hence using the triangle inequality yields \eqref{eq:technical-taylor-lemma}.
\end{proof}

\begin{proof}[Proof of Lemma \ref{lemma:lip-delta-fixed-n}]
    We fix $\theta \in \Theta$ and drop the $\theta$ superscript for clarity. Specifically, we write $A_k := A^{(\theta,k)}$. Then define, for arbitrary $Q^\prime \in B_b (\mathcal S \times \mathcal A)$
    \begin{align*}
        SQ^\prime(2^{N + 1})  &= \frac{1}{2^{N+1}}\sum_{k = 1}^{2^{N+1}} \exp \left(-Q^\prime(s,A_k) / \tau \right),  \\
        S_EQ^\prime(2^N)      &= \frac{1}{2^{N}}\sum_{k = 1}^{2^{N}} \exp \left(-Q^\prime(s,A_{2k}) / \tau \right),   \quad S_OQ^\prime (2^N)     = \frac{1}{2^{N}}\sum_{k = 1}^{2^{N}} \exp \left(-Q^\prime(s,A_{2k-1}) / \tau \right),
    \end{align*}
    $E/O$ referencing the even / odd indices used in the sum. Notice that for any $Q^\prime$, $SQ^\prime(2^{N + 1}) = \frac{1}{2}(S_EQ^\prime(2^{N}) + S_OQ^\prime(2^{N}))$. We now examine the difference $\Delta_N^\theta Q_1(s) - \Delta_N ^\theta Q_2(s)$ which we decompose in 3 terms
    \begin{equation*}
        \label{eq:delta-decomp}
        \Delta_N^\theta Q_1(s) - \Delta_N ^\theta Q_2(s) = D(N) - \frac{1}{2}(D_{E}(N) + D_{O}(N)),
    \end{equation*}
    where $D_N = g(SQ_1(2^{N + 1})) - g (SQ_2(2^{N+1}))$, $D_{E}(N) = g(S_EQ_1(2^N)) - g (S_EQ_2(2^N))$, and $D_{O}(N)$ is defined likewise.
    Since the sums at which $g$ is evaluated are lower-bounded by $A =e^{-\beta / \tau}$ and upper-bounded by $B = e^{-\alpha / \tau}$, we can apply Lemma \ref{lemma:technical-taylor-lemma} to $a_N = S_EQ_1(2^N), \quad b_N = S_OQ_1(2^N), \quad a^\prime_N = S_E Q_2(2^N), \quad b^\prime_N = S_O Q_2(2^N)$,
    which gives an upper bound like \eqref{eq:technical-taylor-lemma} with the constant given by $ \tau C_2(e^{-\beta / \tau}, e^{-\alpha / \tau})$.
    Notice that 
    $
    m = \int_{\mathcal A} e^{-Q_1(s,a)/ \tau} \mu (\diff a)$ 
    and 
    $\quad m^\prime = \int_{\mathcal A} e^{-Q_2(s,a)/ \tau} \mu (\diff a)
    $.
    We now show that each of the terms in the upper bound \eqref{eq:technical-taylor-lemma} can be bounded in squared $L^2$ norm by 
    $$
    \tau^{-2} 2^{-2N}D_{(\alpha,\tau)}\int_{\mathcal A} |Q_1(s,a) - Q_2(s,a)|^2 \mu (\diff a),
    $$ 
    with $D_{(\alpha,\tau)}$ denoting a numerical constant only depending on $\alpha$ and $\tau$. 

    \textbf{Term corresponding to $|a_N - a_N^\prime||a_N - m|^2$}.
    For clarity, let $q_k = Q_1(s,A_{2k}), r_k = Q_2(s,A_{2k})$. We have
    \[
        \begin{aligned}
            \mathbb E |a_N - a_N^\prime|^2|a_N - m|^4 & = \mathbb E \left( \frac{1}{2^N}\sum_{k = 1}^{2^N} e^{-q_k / \tau} - e^{-r_k/ \tau} \right)^2 \left( \frac{1}{2^N}\sum_{k = 1}^{2^N} e^{-q_k / \tau} - m \right)^4    \\
            & \leq \mathbb E \left( \frac{e^{-\alpha / \tau}}{\tau 2^N}\sum_{k = 1}^{2^N} |q_k - r_k| \right)^2 \left( \frac{1}{2^N}\sum_{k = 1}^{2^N} e^{-q_k / \tau} - m \right)^4 \\
            & = \frac{e^{-2\alpha / \tau}}{\tau^2 2^{6N}}  \sum_{k_1, \cdots, k_6} \mathbb E |q_{k_1} - r_{k_1}||q_{k_2} - r_{k_2}| \prod_{i = 3}^6 (e^{-q_{k_i} / \tau} - m),
        \end{aligned}
    \]
    where the sum is taken over all $k_1,k_2,k_3,k_4,k_5,k_6 \in \{ 1, \cdots, 2^N\}$. Let $K = 2^N$, we now aim to prove that at most $K^4$ (up to a numerical factor) of the terms in the sum are non zero. Notice that the non-zero terms correspond to $6$-tuples $(k_1, \ldots, k_6)$ such that for all $i \in \{3,4,5,6\}$, there exists $j \neq i, k_i = k_j$.
    It suffices to find an upper-bound of the cardinality of the set
    \[
        \mathfrak S_K := \{(k_1,\cdots, k_6) \in \N^6: 1 \leq k_i \leq K, \forall i \in \{3,4,5,6\}, \exists j \neq i, k_i = k_j \}.
    \]
    It is easy to see that any tuple $\mathbf k \in \mathfrak S_K$ can contain at most $4$ distinct integers. Therefore, we have \(| \mathfrak S _K | \leq 4^6 K^4\).
    Now, for any $k \in \mathfrak S_{K}$, we have
    \[
        \mathbb E |q_{k_1} - r_{k_1}||q_{k_2} - r_{k_2}| \prod_{i = 3}^6 (e^{-q_{k_i} / \tau} - m) \leq e^{-4\alpha/\tau} \mathbb E |q_1 - r_1|^2.
    \]
    Therefore,
    \begin{equation}
        \label{eq:lip-delta-fixed-n-first-bound}
        \begin{aligned}
            \mathbb E |a_N - a_N^\prime|^2|a_N - m|^4 & = \mathbb E \left( \frac{1}{2^N}\sum_{k = 1}^{2^N} e^{-q_k / \tau} - e^{-r_k/ \tau} \right)^2 \left( \frac{1}{2^N}\sum_{k = 1}^{2^N} e^{-q_k / \tau} - m \right)^4            \\
            & \leq \frac{e^{-2\alpha/\tau}}{\tau^2 2^{6N}}  \sum_{\mathbf k \in \mathfrak S_{2^N}} \mathbb E |q_{k_1} - r_{k_1}||q_{k_2} - r_{k_2}| \prod_{i = 3}^6 (e^{-q_{k_i} / \tau} - m) \\
            & \leq \frac{D_{(\alpha,\tau)} }{\tau^2 2^{2N}} \mathbb E |q_1 - r_1|^2 = \frac{D_{(\alpha,\tau)} }{\tau^2 2^{2N}} \int_{\mathcal A} |Q_1(s,a) - Q_2(s,a)|^2 \mu (\diff a),
        \end{aligned}
    \end{equation}
with $D_{(\alpha,\tau)} = 4^6 e^{-6\alpha/\tau}$ on the last line.

    \textbf{Term corresponding to $|m - m^\prime||a_N - m|^2$}.
    We have
    \begin{equation}
        \label{eq:lip-delta-fixed-n-second-bound}
        \begin{aligned}
            \mathbb E |m - m^\prime|^2|a_N - m|^4 & = |m - m^\prime|^2 \mathbb E \left( \frac{1}{2^N} \sum_{k = 1}^{2^N} e^{-q_k / \tau} - m \right)^4 \\
                                                  & \leq |m - m^\prime|^2 \frac{2^4 e^{-4\alpha / \tau}}{2^{2N}}                                        \\
                                                  & \leq \frac{D_{(\alpha,\tau)}}{\tau^2 2^{2N}} \int_{\mathcal A} |Q_1(s,a) - Q_2(s,a)|^2 \mu (\diff a),
        \end{aligned}
    \end{equation}
    with $D_{(\alpha,\tau)} = 2^4 e^{-6\alpha / \tau}$, where we have used $|m - m^\prime| \leq \tau^{-1}e^{-\alpha / \tau} \int |Q_1(s,a) - Q_2(s,a)| \mu (\diff a)$ and Jensen's inequality.

    \textbf{Term corresponding to $|a_N - m + a_N^\prime - m^\prime||a_N - a_N^\prime - (m - m^\prime)|$}.
    We have
    \[
        \begin{aligned}
             & \mathbb E |a_N - m + a_N^\prime - m^\prime|^2|a_N - a_N^\prime - (m - m^\prime)|^2                                                                                                                             \\
             & = \frac{1}{2^{4N}}\sum_{k_1,k_2,k_3,k_4} \mathbb E \prod_{i = 1,2}(e^{ -q_{k_i} / \tau} - m + e^{- r_{k_i} / \tau } - m^\prime) \prod_{i = 3,4} (e^{- q_{k_i} / \tau} - e^{- r_{k_i}/ \tau} - (m - m^\prime)),
        \end{aligned}
    \]
    where the sum is taken over all $k_1,k_2,k_3,k_4 \in \{ 1, \cdots, 2^N\}$. Notice that each of the factors within each product has $0$ expectation. It is then easy to see that at most $2^4 2^{2N}$ terms are non-zero in the sum. Moreover, we have the following bound
    \[
        \begin{aligned}
            \mathbb E \prod_{i = 1,2} & (e^{ -q_{k_i} / \tau} - m + e^{- r_{k_i} / \tau } - m^\prime) \prod_{i = 3,4} (e^{- q_{k_i} / \tau} - e^{- r_{k_i}/ \tau} - (m - m^\prime)) \\
                              & \leq 4 e^{-2\alpha / \tau} \mathbb E |e^{- q_{k_i} / \tau} - e^{- r_{k_i}/ \tau} - (m - m^\prime)|^2              \leq \frac{D_{(\alpha,\tau)}}{\tau^2} \mathbb E |q_1 - r_1|^2,
        \end{aligned}
    \]
    with $D_{(\alpha,\tau)} = 4 e^{-4\alpha / \tau}$.
    Therefore,
    \begin{equation}
        \label{eq:lip-delta-fixed-n-third-bound}
        \mathbb E |a_N - m + a_N^\prime - m^\prime|^2|a_N - a_N^\prime - (m - m^\prime)|^2  \leq \frac{D(\alpha, \beta)}{\tau^2 2^{2N}} \int_{\mathcal A}|Q_1(s,A) - Q_2(s,A)|^2 \mu (\diff a).
    \end{equation}

    The same bounds \eqref{eq:lip-delta-fixed-n-first-bound}, \eqref{eq:lip-delta-fixed-n-second-bound}, and \eqref{eq:lip-delta-fixed-n-third-bound} can be derived when replacing $a_N$ by $b_N$ and $c_N$. Therefore, combining these bounds and Lemma \ref{lemma:technical-taylor-lemma}, we get the desired result \eqref{eq:lip-delta-fixed-n} with the following Lipschitz constant
    \begin{equation}
        \label{eq:lipschitz-constant-c(a,b,t)}
        C_{(\alpha, \beta, \tau)} \coloneqq  3 C_2(e^{-\alpha / \tau}, e^{-\beta/ \tau}) \max(4^6 e^{-6\alpha/\tau}, 2^4 e^{-4\alpha / \tau}).
    \end{equation}
\end{proof}

\section{Implementations of Plain Monte Carlo and Blanchet--Glynn Estimators}
\label{app:algos}

Algorithm \ref{alg:MLMC} requires subroutines $T_{\mathrm{approx}}$ and $DT_{\mathrm{approx}}$ for approximating the soft-Bellman operator.   Algorithms \ref{alg:naive-mc} and \ref{alg:bg}  implement the plain Monte Carlo and Blanchet--Glynn approximations, corresponding to MLMCb and MLMCu estimators, respectively.

\begin{algorithm}
    \caption{Approximation of $T$ based on plain Monte Carlo average}\label{alg:naive-mc}
    \begin{algorithmic}
        \Require{$\tau > 0, K \in \N^*, \mu \in \mathcal P (A)$},
        \Procedure{$T_{\mathrm{MC}}$}{$Q,s$}
        \State{generate $K$ i.i.d.~samples from $\mu$: $A_1, \ldots, A_K$}
       \State{$\hat{S} \gets 
       \frac{1}{K}\sum_{k=1}^K 
 \exp(-Q(s,A_k) / \tau)$}
        \State\Return $-\tau \log \hat{S}$
        \EndProcedure
        \Procedure{$DT_{\mathrm{MC}}$}{$Q_1, Q_2,s$}
        \State{generate $K$ i.i.d.~samples from $\mu$: $A_1, \ldots, A_K$}
        \State{$\hat{S}_1 \gets   
        \frac{1}{K}\sum_{k=1}^K 
        \exp(-Q_1(s,A_k) / \tau)$}
        \State{$\hat{S}_2 \gets   
        \frac{1}{K}\sum_{k=1}^K 
        \exp(-Q_2(s,A_k) / \tau)$}
         \State\Return $-\tau \log \hat{S}_1  + \tau \log \hat{S}_2$
        \EndProcedure
    \end{algorithmic}
\end{algorithm}

\begin{algorithm}
    \caption{Approximation of $T$ based on the Blanchet--Glynn estimator }\label{alg:bg}
    \begin{algorithmic}
        \Require{$\tau > 0, r \in (1/2, 3/4), \mu \in \mathcal P (A)$},
        \Procedure{$T_{\mathrm{BG}}$}{$Q,s$}
        \State{generate $K \sim \mathrm{Geometric}(r)$}
        \State{$p_K \gets r (1 -r)^{K}$}
        \State{generate $2^{K} + 1$ i.i.d.~samples from $\mu$: $A_0, A_1, \ldots, A_{2^K}$}
        \State{$\hat{S}_E \gets  
        \frac{1}{2^{K - 1}}\sum_{k = 1}^{2^{K-1}}\exp(-Q(s,A_{2k}) / \tau)$}
        \State{$\hat{S}_O \gets   
        \frac{1}{2^{K - 1}}\sum_{k = 1}^{2^{K-1}}\exp(-Q(s,A_{2k - 1}) / \tau)$}
        \State{$\hat{S} \gets \frac{\hat{S}_E  + \hat{S}_O}{2}$}
        \State\Return $\frac{1}{p_K}\left(-\tau \log \hat{S} - \frac{1}{2}(-\tau \log \hat{S}_E  - \tau \log \hat{S}_O)\right) + Q(s, A_0)$
        \EndProcedure
        \Procedure{$DT_{\mathrm{BG}}$}{$Q_1, Q_2,s$}
        \State{generate $K \sim \mathrm{Geometric}(r)$}
        \State{$p_K \gets r (1 -r)^{K}$}
        \State{generate $2^{K} + 1$ i.i.d.~samples from $\mu$: $A_0, A_1, \ldots, A_{2^K}$}
        \For{$i = 1,2$}
       \State{$\hat{S}_{E,i} \gets 
        \frac{1}{2^{K - 1}}\sum_{k = 1}^{2^{K-1}}\exp(-Q_i(s,A_{2k}) / \tau)$}
        \State{$\hat{S}_{O,i} \gets 
        \frac{1}{2^{K - 1}}\sum_{k = 1}^{2^{K-1}}\exp(-Q_i(s,A_{2k - 1}) / \tau)$}
        \State{$\hat{S}_i \gets \frac{\hat{S}_{E,i} + \hat{S}_{O,i}}{2}$}
        \State{$t_i \gets \frac{1}{p_K}\left(-\tau \log \hat{S}_i  - \frac{1}{2}(-\tau \log \hat{S}_{E,i} - \tau \log \hat{S}_{O,i})\right) + Q_i(s, A_0)$}
        \EndFor
        \State\Return $t_1 - t_2$
        \EndProcedure
    \end{algorithmic}
\end{algorithm}

\section{Additional Numerical Results}
\label{appendix:numerical-results}

In Figures \ref{fig:gamma-06-avg-q-all} and \ref{fig:gamma-06-rmse-time} we present numerical results for the linear quadratic Gaussian control problem presented in Section \ref{subsection:experimental-setup} for $d=20, \gamma = 0.6$. We clearly see the numerical instability of the MLMCu estimator, whereas the MLMCb estimator behaves as expected. For $r = 0.6$, only the level $6$ estimate is unstable, for $r = 1 - 2^{-3/2}$ both level $5$ and $6$ are unstable, which further confirms Remark \ref{remark:r}.
\begin{figure}
    \centering
    \includegraphics[width=0.8\linewidth]{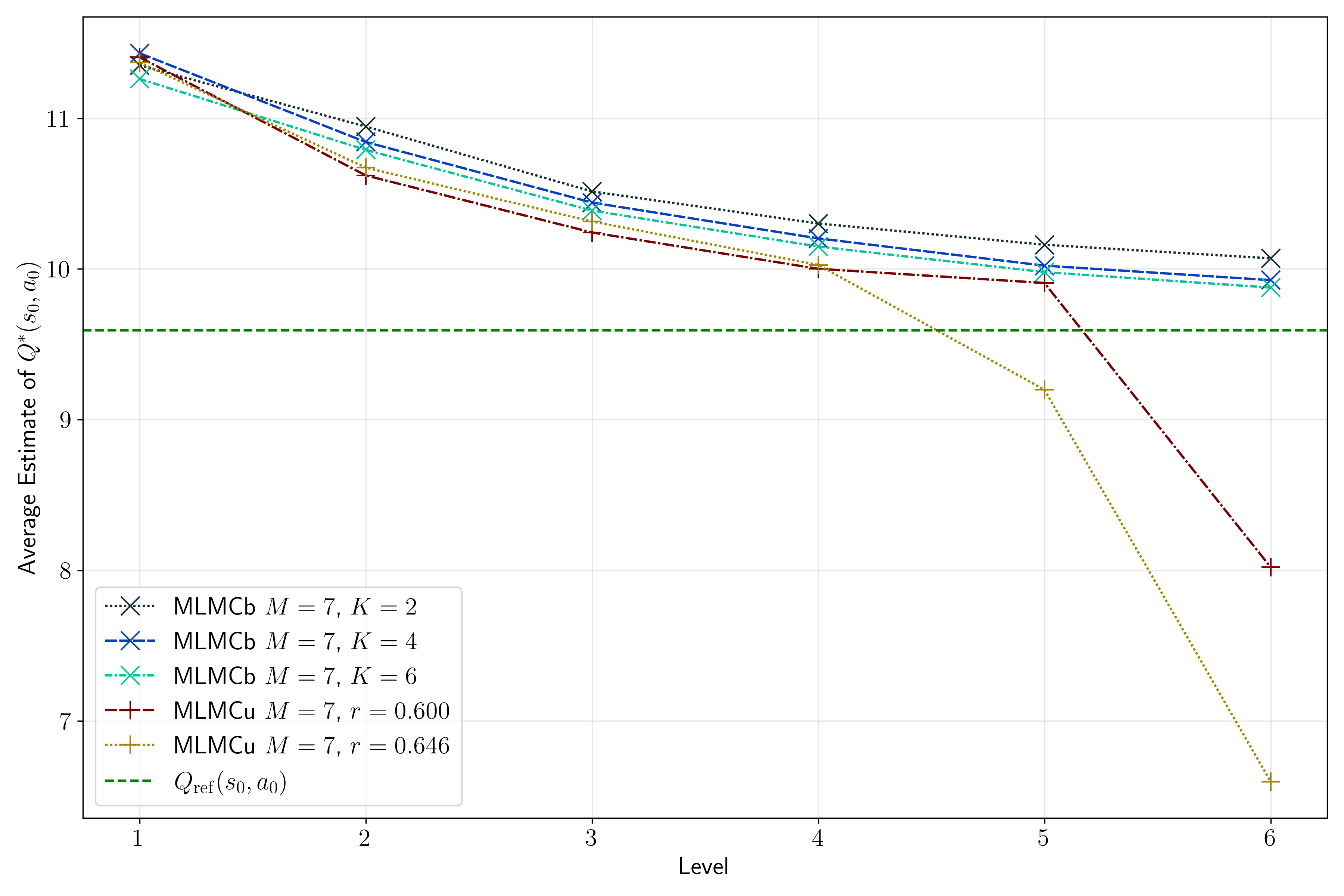}
    \caption{Average estimate of $Q^\star(s_0,a_0)$ over 20 runs for $d=20, \gamma=0.6$.}
    \label{fig:gamma-06-avg-q-all}
\end{figure}

\begin{figure}
    \centering
    \begin{subfigure}[b]{0.49\linewidth}
        \centering
        \includegraphics[width=\linewidth]{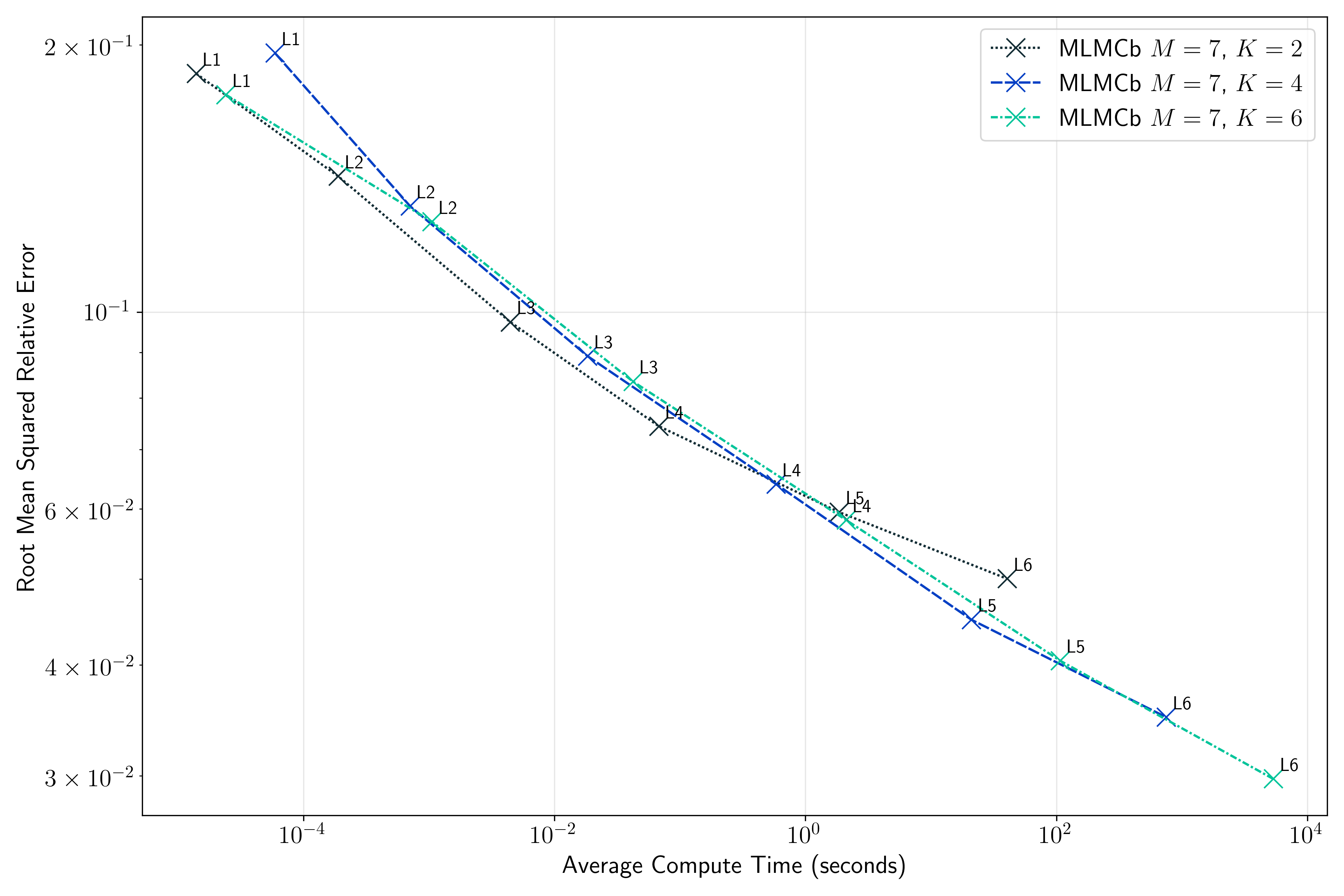}
        \caption{Plain Monte Carlo (MLMCb).}
        \label{fig:gamma-06-time-error-mc}
    \end{subfigure}
    \hfill
    \begin{subfigure}[b]{0.49\linewidth}
        \centering
        \includegraphics[width=\linewidth]{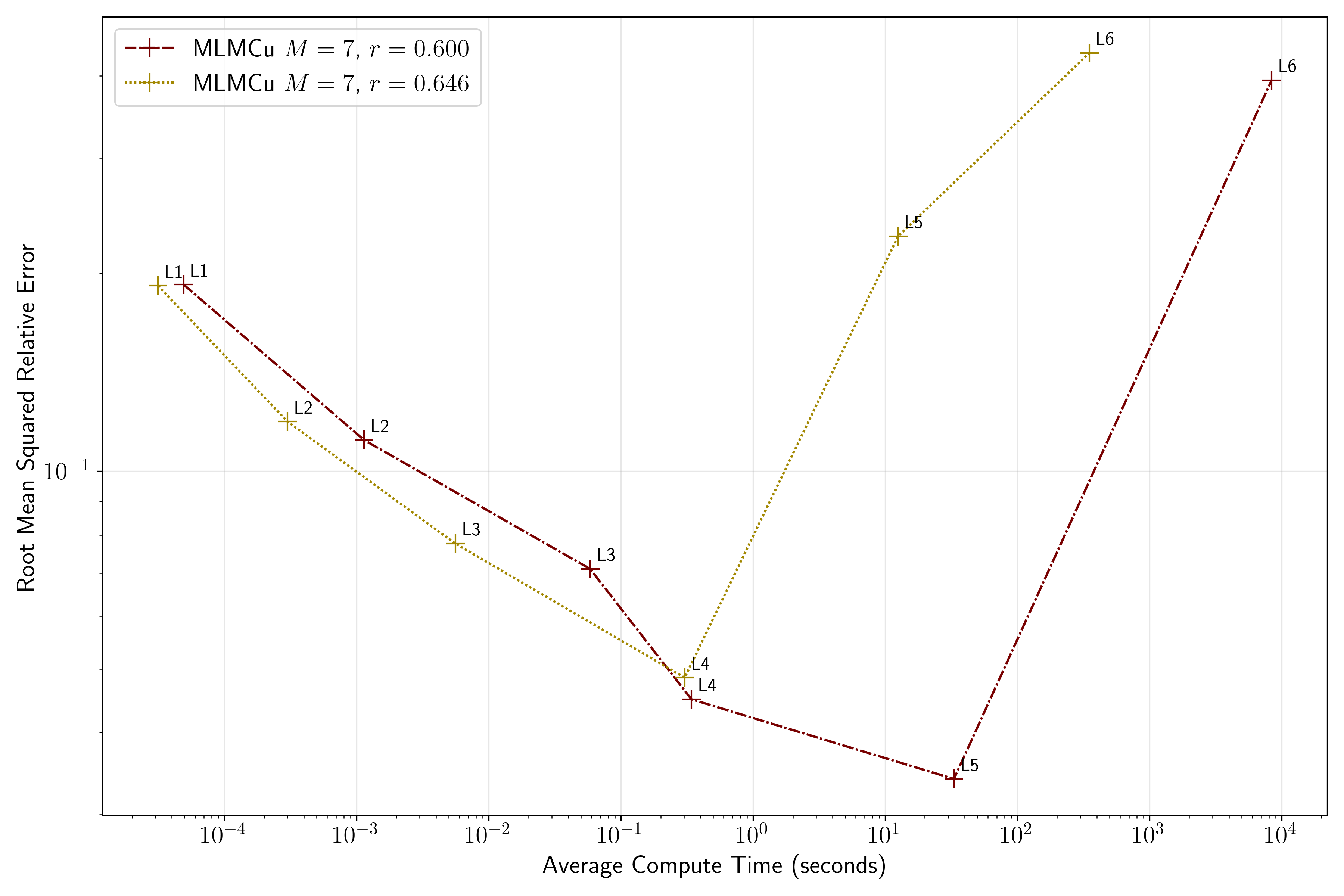}
        \caption{Unbiased Monte Carlo (MLMCu).}
        \label{fig:gamma-06-time-error-bg}
    \end{subfigure}
    \caption{RMSRE as a function of average compute time over 20 runs for $d=20, \gamma = 0.6$.}
    \label{fig:gamma-06-rmse-time}
\end{figure}

In Tables \ref{tab:results_gamma04}, \ref{tab:results_gamma05} and \ref{tab:results_gamma06}, we display the numerical results of all the considered configurations of the MLMC estimator in the entropy-regularized LGQ problem presented in Section \ref{subsection:experimental-setup} with $d = 20$ and for three problem settings with values of the discount factor $\gamma \in \{0.4,0.5,0.6\}$.

\begin{table}[htbp]
\centering
\begin{tabular}{
p{2cm}
p{2cm}
p{3cm}
p{3cm}
p{3cm}
}
\toprule
Estimator type & Parameter value &  Average compute time (seconds) & RMSRE & Average estimate of $Q^\star(s_0,a_0)$ \\
\midrule
\multirow{3}{*}{MC, $M = 7$} & $K = 2$ & 4.086e+01 & 0.0154 & 3.983\\
 & $K = 4$ & 7.892e+02 & 0.00869 & 3.957 \\
 & $K = 6$ & 5.381e+03 & 0.00655& 3.948\\
\midrule
\multirow{2}{*}{BG, $M = 7$} & $r = 0.6$ & 1.392e+04 & 0.00325 & 3.929 \\
 & $r = 1 - \frac{1}{2^{3/2}}$ & 1.015e+03 & 0.00392 & 3.935 \\
\bottomrule
\end{tabular}
\caption{Level $n=6$ MLMC perfomance, $d=20, \gamma=0.4$, reference value is $Q_{\mathrm{ref}} (s_0, a_0) = 3.923$ (MC corresponds to MLMCb, BG corresponds to MLMCu).}
\label{tab:results_gamma04}


\begin{tabular}{
p{2cm}
p{2cm}
p{3cm}
p{3cm}
p{3cm}
}
\toprule
Estimator type & Parameter value &  Average compute time (seconds) & RMSRE & Average estimate of $Q^\star(s_0,a_0)$ \\
\midrule
\multirow{3}{*}{MC, $M = 7$} & $K = 2$ & 4.121e+01 & 0.0284 & 6.110\\
 & $K = 4$ & 7.779e+02 & 0.0175 & 6.046 \\
 & $K = 6$ & 5.377e+03 & 0.0143 & 6.026 \\
\midrule
\multirow{2}{*}{BG, $M = 7$} & $r = 0.6$ & 3.751e+03 & 0.00851 & 5.983 \\
 & $r = 1 - \frac{1}{2^{3/2}}$ & 1.481e+03 & 0.317 & 5.290 \\
\bottomrule
\end{tabular}
\caption{Level $n=6$ MLMC perfomance, $d=20, \gamma=0.5$, reference value is $Q_{\mathrm{ref}} (s_0, a_0) = 5.942$ (MC corresponds to MLMCb, BG corresponds to MLMCu).}
\label{tab:results_gamma05}


\begin{tabular}{
p{2cm}
p{2cm}
p{3cm}
p{3cm}
p{3cm}
}
\toprule
Estimator type & Parameter value &  Average compute time (seconds) & RMSRE & Average estimate of $Q^\star(s_0,a_0)$ \\
\midrule
\multirow{3}{*}{MC, $M = 7$} & $K = 2$ & 4.056e+01 & 0.0500 & 10.071\\
 & $K = 4$ & 7.527e+02 & 0.0349 & 9.926 \\
 & $K = 6$ & 5.357e+03 & 0.0298 & 9.876\\
\midrule
\multirow{2}{*}{BG, $M = 7$} & $r = 0.6$ & 8.391e+03 & 0.393 & 8.021  \\
 & $r = 1 - \frac{1}{2^{3/2}}$ & 3.502e+02 & 0.433 & 6.598 \\
\bottomrule
\end{tabular}
\caption{Level $n=6$ MLMC perfomance, $d=20, \gamma=0.6$, reference value is $Q_{\mathrm{ref}} (s_0, a_0) = 9.591$ (MC corresponds to MLMCb, BG corresponds to MLMCu).}
\label{tab:results_gamma06}
\end{table}

\bibliographystyle{plain}
\bibliography{references}
\end{document}